\newcommand{\ignore}[1]{}
\renewcommand{\title}[1]{\newcommand{\titlelist}{{\huge\selectfont #1}}}
\definecolor{CQColor}{rgb}{0.0,0.0,1.0} 
\newlength\savewidth
\definecolor{darkred}{RGB}{162, 0, 0}
\definecolor{darkblue}{RGB}{4, 6, 173}
\definecolor{darkgreen}{RGB}{61, 134, 73}
\algnewcommand{\algorithmicinit}{\textbf{Initialize:}}
\algnewcommand{\Init}{\algorithmicinit}
\newtheorem{theorem}{Theorem}[section]
\newtheorem{lemma}[theorem]{Lemma}
\newtheorem{proposition}[theorem]{Proposition}
\newtheorem{corollary}[theorem]{Corollary}
\newtheorem{assumption}[theorem]{Assumption}
\newtheorem{definition}[theorem]{Definition}
\newtheorem{remark}[theorem]{Remark}
\definecolor{my_green}{RGB}{51,102,0}
\definecolor{my_purple}{RGB}{160, 43, 147}
\definecolor{my_blue}{RGB}{15, 158, 213}
\NewDocumentCommand{\ganqu}
{ mO{} }{\textcolor{blue}{\textsuperscript{\textit{ganqu}}\textsf{\textbf{\small[#1]}}}}
\NewDocumentCommand{\yafu}
{ mO{} }{\textcolor{cyan}{\textsuperscript{\textit{yafu}}\textsf{\textbf{\small[#1]}}}}
\NewDocumentCommand{\jianhao}
{ mO{} }{\textcolor{red}{\textsuperscript{\textit{jianhao}}\textsf{\textbf{\small[#1]}}}}
\newcommand{\normalrow}{\rowcolor{gray!30}}
\definecolor{deltaBg}{RGB}{220,230,255} 
\newcommand{\rowhighlight}{\rowcolor{deltaBg}}
\title{\textbf{\textsc{TraPO}}\textbf{: A Semi-Supervised Reinforcement Learning Framework for Boosting LLM Reasoning}}
\author[1,*]{Shenzhi Yang}
\author[1,*]{Guangcheng Zhu}
\author[2]{Xing Zheng}
\author[2]{Yingfan MA}
\author[2]{Zhongqi Chen}
\author[2,\ddag]{Bowen Song}
\author[2]{Weiqiang Wang}
\author[1]{Junbo Zhao}
\author[1]{Gang Chen}
\author[1,\ddag]{Haobo Wang}
\affiliation[1]{Zhejiang University $^2$Ant Group \\[0.5em]}
\abstract{\fontsize{11pt}{12pt} \textit{
Reinforcement learning with verifiable rewards (RLVR) has proven effective in training large reasoning models (LRMs) by leveraging answer-verifiable signals to guide policy optimization, which, however, suffers from high annotation costs. 
To alleviate this problem, recent work has explored unsupervised RLVR methods that derive rewards solely from the model’s internal consistency, such as through entropy and majority voting. 
While seemingly promising, these methods often suffer from model collapse in the later stages of training, which may arise from the reinforcement of incorrect reasoning patterns in the absence of external supervision.
In this work, we investigate a novel semi-supervised RLVR paradigm that utilizes a small labeled set to guide RLVR training on unlabeled samples.
Our key insight is that supervised rewards are essential for stabilizing consistency-based training on unlabeled samples, ensuring that only reasoning patterns verified on labeled instances are incorporated into RL training.
Technically, we propose an effective policy optimization algorithm \textbf{\textsc{TraPO}} that identifies reliable unlabeled samples by matching their learning trajectory similarity to labeled ones.
Building on this, \textsc{TraPO} achieves remarkable data efficiency and strong generalization on six widely used mathematical reasoning benchmarks (AIME24/25, AMC, MATH-500, Minerva, and Olympiad) and three out-of-distribution tasks (ARC-c, GPQA-diamond, and MMLU-pro).
With only 1K labeled and 3K unlabeled samples, \textsc{TraPO} reaches 42.6\% average accuracy, surpassing the best unsupervised method trained on 45K unlabeled samples (38.3\%). Notably, when using 4K labeled and 12K unlabeled samples, \textsc{TraPO} even \textbf{\textit{outperforms the fully supervised model}} trained on the full 45K labeled samples on all benchmarks, while using only \textbf{\textit{10\%}} of the labeled data. The code is available via \href{https://github.com/ShenzhiYang2000/TRAPO}{https://github.com/ShenzhiYang2000/TRAPO}.
}}
\begin{document}
\maketitle

\begin{figure}[h]
  \centering
  \vspace{-4mm}
  \includegraphics[width=0.6\textwidth]{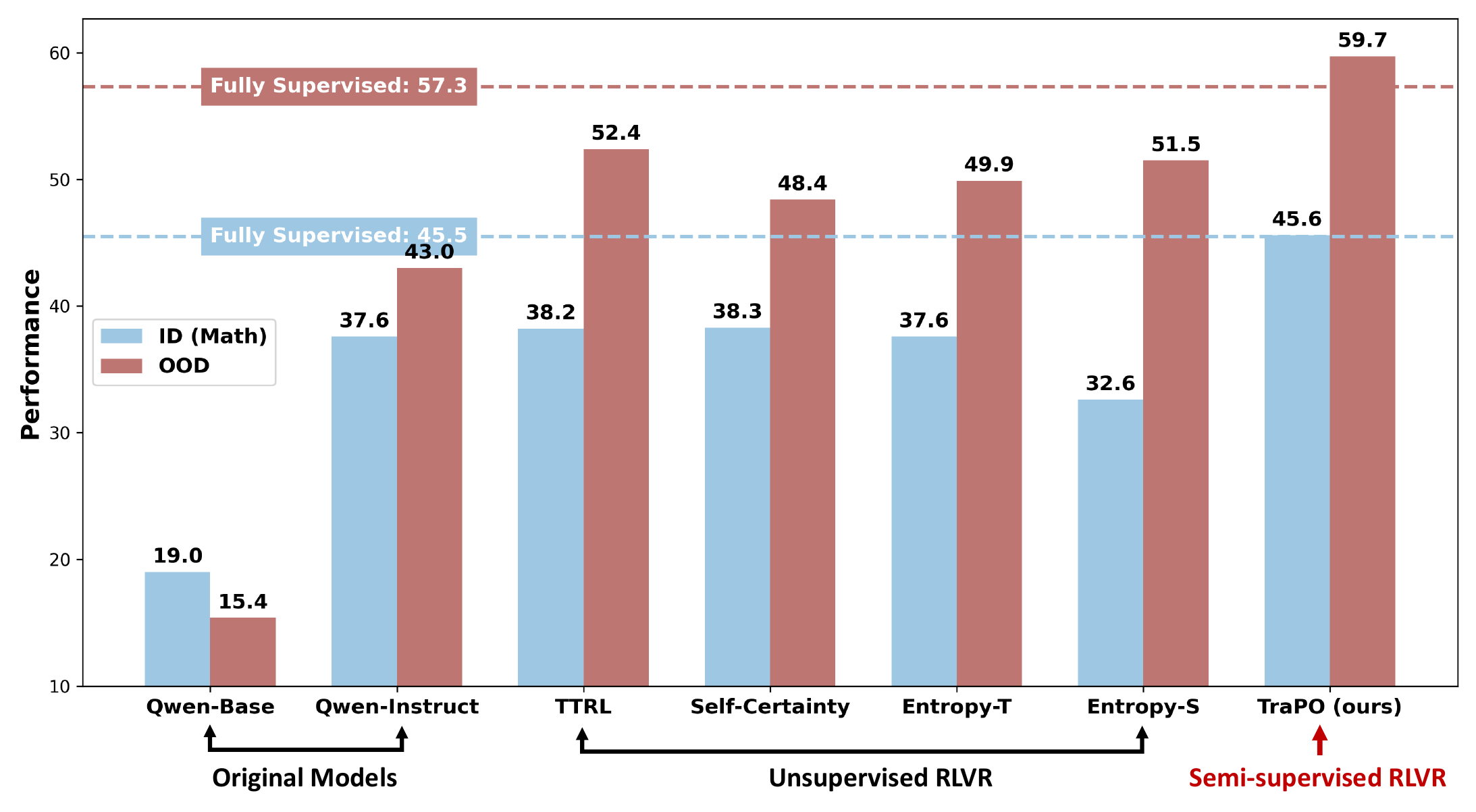}
  \includegraphics[width=0.36\textwidth]{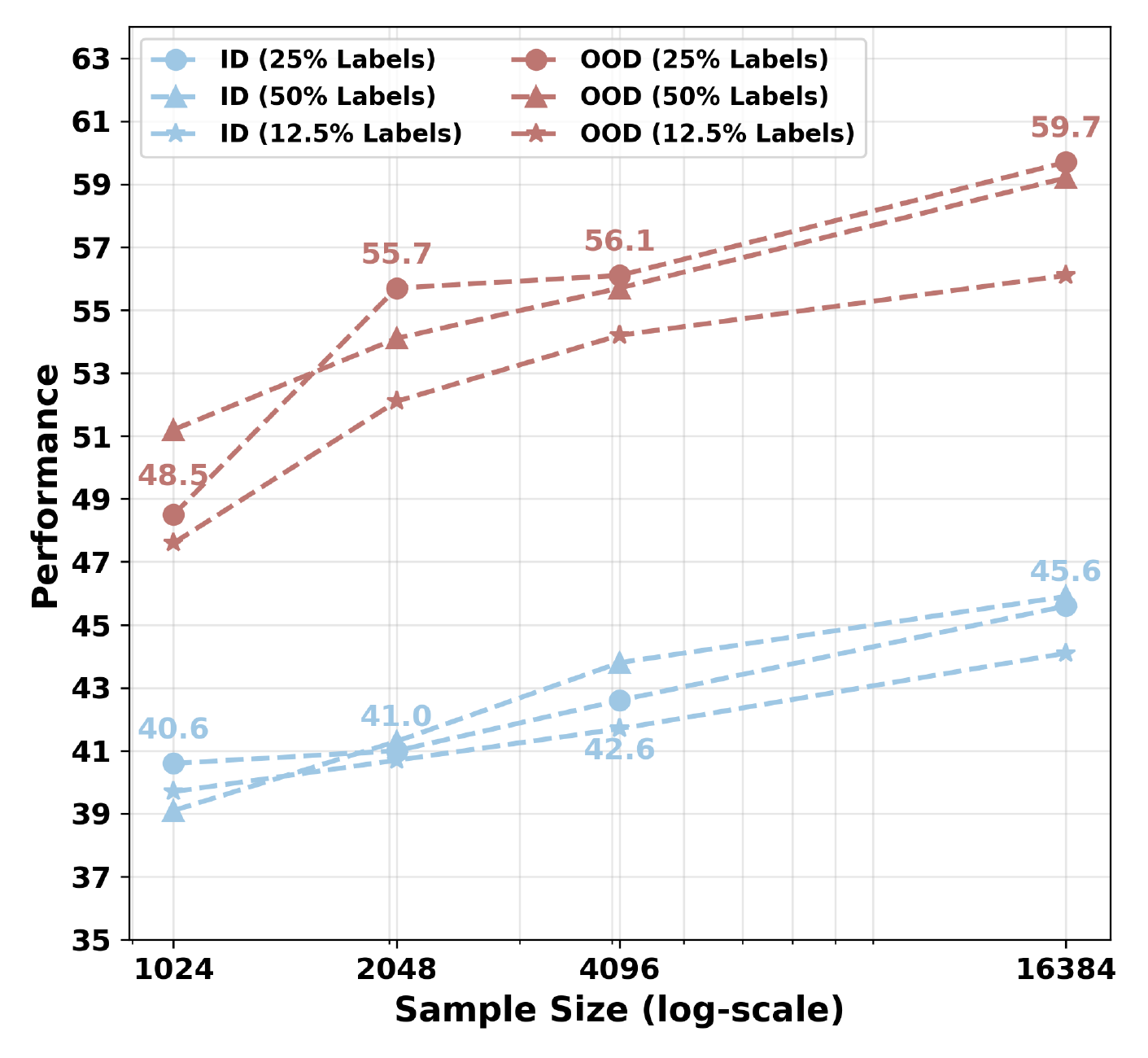}
  \vspace{-3mm}
  \caption{ \textbf{Performance overview.} We use six widely used mathematical reasoning benchmarks (AIME24/25, AMC, MATH-500, Minerva, and Olympiad) and three out-of-distribution tasks (ARC-c, GPQA-diamond, and MMLU-pro) for evaluation, and Qwen2.5-Math-7B for training.
 (Left) \textsc{TraPO} surpasses fully supervised RLVR (45K samples) using just 10\% (4K) annotated data. (Right) \textsc{TraPO} scaling law: performance improves consistently with increasing sample sizes and varying annotation ratios. We only show the changes with a sample size at a 25\% annotation rate in the figure; for other specific results, please see Table \ref{tab:scaling}.}
  \label{fig:main}
  \vspace{-2mm}
\end{figure}

\section{Introduction}
The reinforcement learning with verifiable rewards (RLVR), pioneered by DeepSeek-R1 \citep{guo2025deepseek}, has significantly advanced the development of large reasoning models (LRMs). In typical RLVR \citep{grpo,drgrpo, yu2025dapo, zheng2025group}, questions from a training corpus are fed into an LRM, which then generates multiple reasoning paths (rollouts) per input. Rewards are computed based on verifiable rules: most commonly, whether the final answer in a response matches the ground-truth label. By leveraging such an answer-verifiable structure, RLVR enables reward assignment through group-based advantage estimation, guiding the model to explore reasoning paths that lead to the correct final answer.

However, when scaling to large corpora, the reliance of this reward paradigm on gold-standard labels incurs prohibitively high annotation costs, making it difficult to generalize to specialized domains where ground-truth answers are scarce or expensive to obtain, such as medicine and finance \citep{mmlu_pro}. To address this challenge, recent work has explored unsupervised RLVR methods \citep{zhang2025right,zhao2025learning,agarwal2025unreasonable,li2025confidence,zuo2025ttrl,zhang2025right} that aim to eliminate dependence on external supervision directly. These approaches are grounded in the observation that LRMs have already internalized substantial knowledge during pretraining \citep{ye2025limo}; thus, the goal shifts from learning factual correctness to eliciting latent reasoning capabilities through self-guided exploration. In this framework, rewards are computed based on intrinsic signals such as self-certainty \citep{zhao2025learning}, entropy \citep{agarwal2025unreasonable}, or majority voting \citep{zuo2025ttrl}, to encourage high-confidence and consistent outputs. Despite their promise, these unsupervised methods often fail to capture valid reasoning patterns and tend to reinforce incorrect consensus, leading to severe performance degradation in late training. This drawback can be attributed to the absence of external ground truth: the reward signal becomes self-reinforcing and prone to reinforcing systematic biases, leading to a degenerate feedback loop.

\begin{wrapfigure}{!t}{0.66\textwidth}
	\centering
    \vskip -0.2in
\includegraphics[width=1.0\linewidth]{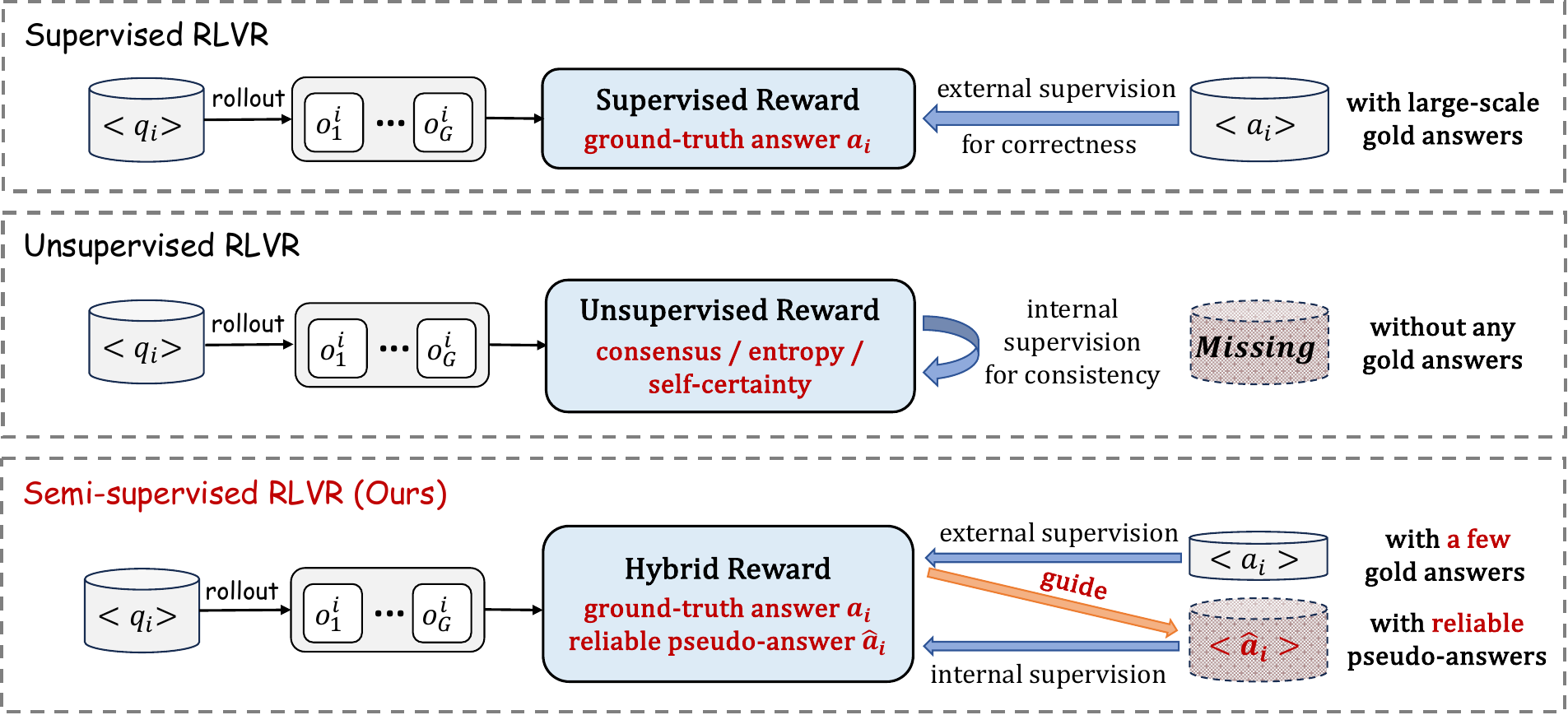} 
\caption{Comparison between different RLVR training paradigms.}
    \vskip -0.15in
\label{F-paradigms}
\end{wrapfigure}

Analogous to human learning, unsupervised RLVR resembles a student solving problems based solely on current beliefs, treating the most confident answer as the ground truth. When incorrect, repeated reinforcement of the same reasoning path entrenches errors, leading to failure on both the current and related tasks. To break this vicious cycle, \textit{humans typically learn from a few well-solved examples with verified solutions to establish a correct conceptual foundation}, then generalize via analogical reasoning. Therefore, we hypothesize that LRMs possess a similar property: a small number of verifiable labeled samples can enable LRMs to generalize patterns from larger amounts of unlabeled corpora. Inspired by this process, we propose a \textbf{Semi-supervised RLVR (SS-RLVR)} paradigm that takes advantage of a small set of labeled examples to anchor the reward signal, guiding the model toward reliable reasoning patterns and allowing more robust self-improvement.

Although promising in principle, our experiments show that simply combining supervised and unsupervised RLVR algorithms delivers only marginal benefits. 
For example, when combined with 3K entropy-based unlabeled RLVR training, the 1K supervised baseline only improves $0.6$\% accuracy. 
We argue that such failure stems from the neglect of internal links between labeled and unlabeled sets.
In other words, only those reasoning patterns that are verified on labeled instances should be incorporated into RL training, and labeled data should be used as role models \citep{tarvainen2017mean} to \textit{\textbf{guide}} robust learning on unlabeled instances, as shown in Figure \ref{F-paradigms}. 
Based on this key insight, we propose \textbf{\textsc{TraPO}} (\textbf{Tra}jectory-based \textbf{P}olicy \textbf{O}ptimization), which measures the similarity between unlabeled and labeled samples in terms of their pass rate trajectories and uses this alignment as a criterion to select unlabeled samples with reliable pseudo-supervision for training.
Experimental results demonstrate that \textsc{TraPO}, trained with only 1K labeled and 3K unlabeled samples, achieves a $4.3\%$ improvement in in-domain performance over the strongest unsupervised baseline (trained on 45K unlabeled samples), $2.6\%$ over the best naive semi-supervised method, and $3.2\%$ over the supervised baseline (trained on 1K labeled samples). 
{Notably, with 4K labeled and 12K unlabeled samples, \textsc{TraPO} surpasses the fully supervised model trained on all 45K labeled samples across all benchmarks, using only 10\% of the labeled data (see Figure \ref{fig:main}, left). The scaling law for \textsc{TraPO} (Figure \ref{fig:main}, right) further demonstrates that with increased data and a labeling ratio (e.g, 25\%), TraPO achieves or approaches fully supervised performance without extra labels.}
These results strongly demonstrate \textsc{TraPO}’s ability to balance data efficiency and learning effectiveness.

\section{Related Work}
\noindent\textbf{Semi-supervised Learning} leverages both labeled and unlabeled data to improve model performance, typically by exploiting data structure \citep{chapelle2009semi,rasmus2015semi} or consistency assumptions \citep{laine2016temporal,berthelot2019mixmatch,xie2020unsupervised,sohn2020fixmatch}. In traditional classification tasks, outputs are drawn from a shared discrete label space, enabling effective label propagation via feature similarity. However, in RLVR, each input has an instance-specific solution space, where ``correct'' outputs vary significantly across examples. This makes direct alignment of unlabeled samples with labeled ones through standard similarity-based methods impractical, posing a key challenge in bridging labeled and unlabeled data for RLVR. Thus, in this paper, we turn from \textit{what} the model learns to \textit{how} it learns and employ the pass rate change trajectory as a medium to bridge the gap.

\begin{figure}[!t]
	\centering
    \includegraphics[width=0.93\linewidth]{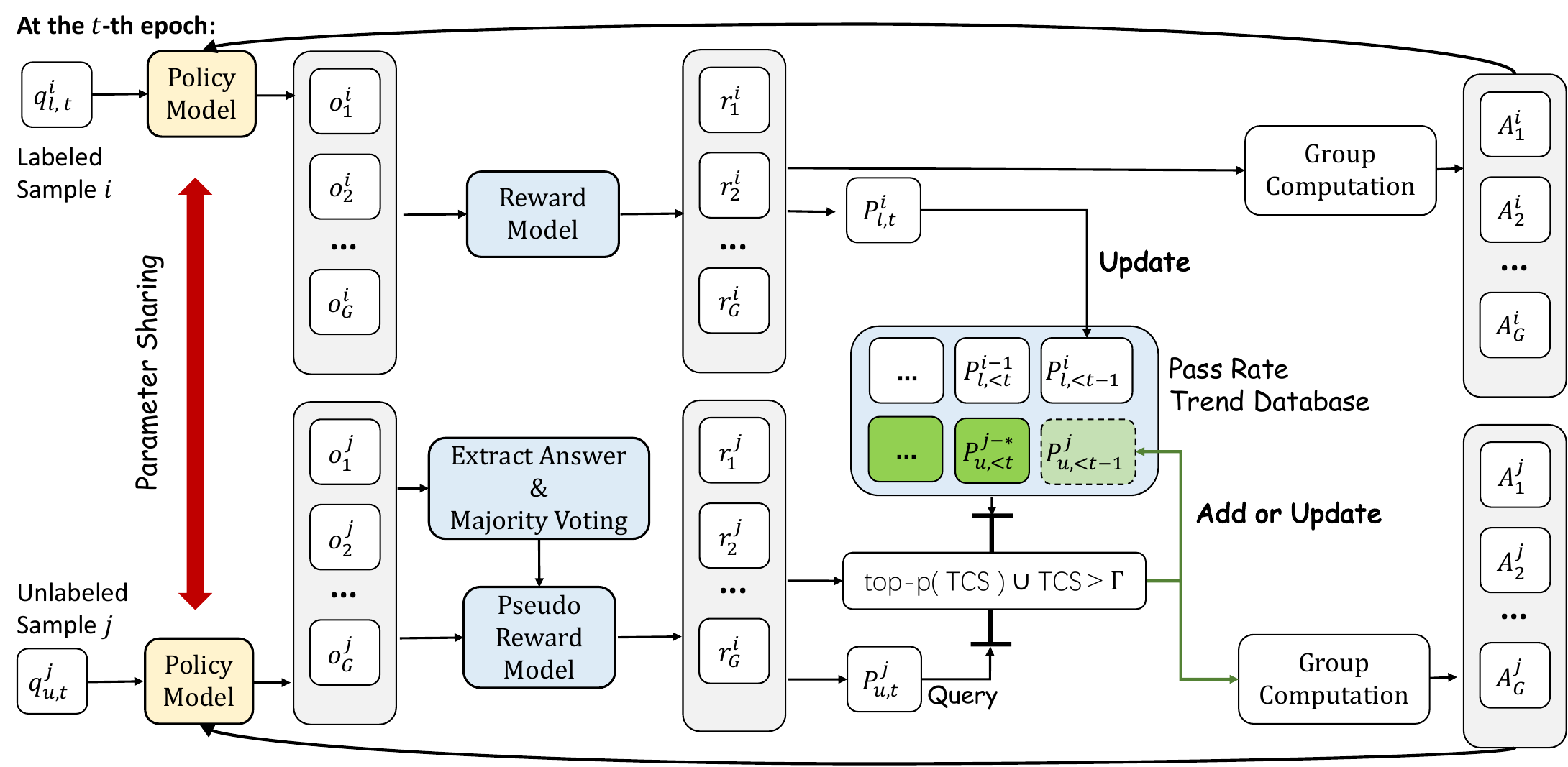} 
    \caption{{ \textbf{\textsc{TraPO}} is a semi-supervised RLVR training framework to dynamically select reliable unlabeled samples throughout the training process based on pass rate trajectory matching.}}
    \label{F-framework}
    \vskip -0.1in
\end{figure}

\noindent\textbf{Unsupervised RLVR} is built upon supervised RLVR, which has proven effective for aligning reasoning models in domains with executable or exact feedback, such as math and code~\citep{orz,guo2025deepseek,grpo}, using deterministic, rule-based reward verifiers~\citep{jaech2024openai}. However, its reliance on outcome supervision limits applicability to tasks lacking a clear ground truth.
Recent work explores unsupervised RLVR, which uses intrinsic, self-generated signals to enable reward-free training. Methods include self-rewarding via judgment prompting~\citep{wu2024meta,yuan2024self,xiong2025self} or ensemble heads~\citep{wang2024cream,zhou2025self}, though often costly for online use. More scalable approaches leverage lightweight signals—such as entropy~\citep{agarwal2025unreasonable}, self-confidence~\citep{li2025confidence}, or majority voting~\citep{zuo2025ttrl}—to guide online policy updates~\citep{zhang2025right,zhao2025learning}. However, purely unsupervised training risks model collapse due to biased or noisy signals reinforcing incorrect behaviors~\citep{zhang2025co,zhang2025no}. Our work builds on this line by introducing a new semi-supervised framework that anchors learning with labeled data to correct intrinsic signals, improving stability and generalization.

\noindent\textbf{Reasoning Data Selection} 
is a critical step in training LRMs, which can be broadly categorized into external and internal approaches. External methods rely on auxiliary resources such as human annotations \citep{li2022making}, knowledge bases \citep{nguyen2024direct}, or proxy models \citep{he2025can} to evaluate correctness and confidence, but suffer from limited applicability due to dependency on external resources \citep{bi2025cot}. In contrast, internal methods leverage model-internal signals, such as output probabilities \citep{plaut2024probabilities}, semantic entropy \citep{kuhn2023semantic}, hidden representations \citep{wang2024latent}, or reward changes \citep{li2025limr} to estimate data quality in a label-free manner. 
Nevertheless, such metrics do not reflect the fundamental characteristics of data that are most beneficial for model learning. In this work, we go beyond superficial indicators by probing the intrinsic learning dynamics of the data, thereby identifying unlabeled instances that genuinely contribute to effective and robust model training.

\section{Method}\label{sec:Method}

In this section, we present our semi-supervised reinforcement learning paradigm, which uses limited labeled data to guide reliable policy learning on large-scale unlabeled data. In Section~\ref{subsec:Semi-RLVR}, we discuss the limitations of supervised and unsupervised RLVR, and highlight the motivation for semi-supervised RLVR. In Section~\ref{subsec:select}, we explore the bridge between labeled and unlabeled data, propose a trajectory-based method to select reliable rewards and provide theoretical analysis on generalization.

\subsection{Semi-supervised Reinforcement Learning with Verifiable Rewards}\label{subsec:Semi-RLVR}

\paragraph{Supervised RLVR.} 

In traditional RLVR, we assume access to a large labeled dataset $\mathcal{D}_l = \{(q_i, y_i)\}_{i=1}^{N_l}$, where each sample consists of a question $q_i$ and its corresponding verifiable ground-truth answer $y_i$. For each question $q_i$, we input it into a policy model $\pi_\theta$ to generate $G$ candidate outputs, denoted as $\{\tau_i^j\}_{j=1}^G$.
Given the ground-truth answer $y_i$ as a supervision, we assign rewards to the generated responses based on whether they derive the correct answer. Specifically, we define a binary reward function that evaluates the final extracted answer from each output $\tau_i^j$:
\begin{equation}
    R(\tau_i^j, \textcolor{darkred}{y_i}) = \mathbb{I}(\tau_i^j, \textcolor{darkred}{y_i}) = 
    \begin{cases} 
        1 & \text{if } a_i^j = \textcolor{darkred}{y_i}, \\
        0 & \text{otherwise}.
    \end{cases}
\end{equation}
Here, $ a_i^j = \texttt{extract}(\tau_i^j) $ denotes the answer extracted from the generated response $ \tau_i^j $, such as the content within boxed delimiters (e.g., $\texttt{\textbackslash boxed}\{\cdot\}$).
With the ground-truth answers $y_i$ serving as explicit guidance signals, this \textit{Supervised RLVR} paradigm reinforces only the responses that yield the correct answers; the policy model $\pi_\theta$ is gradually steered toward discovering valid and consistent reasoning paths, thereby enabling stable and scalable policy optimization.

\paragraph{Unsupervised RLVR.} \label{para:unsupervised}
Although supervised RLVR has achieved great success, its reliance on golden answers $y_i$ incurs high annotation costs. To address this, the community has explored unsupervised RLVR techniques that rely solely on unlabeled data $\mathcal{D}_u = \{q_i\}_{i=1}^{N_u}$. Under this setting, the absence of golden answers necessitates the use of proxy rewards $R_u(\tau_i^j)$ that estimate $R(\tau_i^j, \textcolor{darkred}{y_i})$ based on the model's confidence or consensus $\texttt{conf}(\cdot)$. 
A widely adopted method is majority voting, where the reward is defined as:
\begin{equation}\label{eq:unsupervised-method}
R_u(\tau_i^j) = \texttt{conf}(\pi_\theta(\tau_i^j \mid q_i)) = \mathbb{I}(a_i^j = \text{MAJ}(a_i^1, a_i^2, \cdots, a_i^{G}))
\end{equation} 
where $\text{MAJ}(\cdot)$ denotes the pseudo-label $\tilde{y}$ obtained by majority answer among $G$ rollouts. This approach effectively treats the most frequently generated answer as the pseudo-label, providing a form of self-supervised signal.
Beyond majority voting, \cite{zhao2025learning} use self-certainty, \cite{agarwal2025unreasonable} use token-level or sequence-level entropy as a proxy for confidence, and compute rewards accordingly.
Fundamentally, these methods are based on
a key assumption: higher confidence implies a greater probability of producing the correct answer,
and thus the higher the reward it should receive.

However, this assumption breaks down when the proxy reward diverges from actual correctness. Take the majority voting as an example, if the majority answer is not the correct answer, \textit{i.e.}, $\mathrm{MAJ}(a_i^1, \cdots, a_i^G) \ne y_i$, then the incorrect responses are reinforced. This creates a dangerous feedback loop: the policy becomes more confident in the wrong answer, leading to even stronger wrong consensus in subsequent iterations.
Over time, the model converges to a state where it confidently produces incorrect outputs.

\paragraph{Semi-supervised RLVR.}

To break this vicious loop induced by the absence of grounded feedback, we hypothesize that we must introduce labeled examples to anchor the reward to ground truth. Formally, we adopt a hybrid reward function that computes rewards differently for labeled and unlabeled data:
\begin{equation}
    R_{\text{semi}}(\tau_i^j) =
    \begin{cases}
        R(\tau_i^j, \textcolor{darkred}{y_i}), & \text{if } (q_i, y_i) \in \mathcal{D}_l, \\
        R_u(\tau_i^j),    & \text{if } q_i \in \mathcal{D}_u.
    \end{cases}
\end{equation}
Here, labeled data are used to compute rewards under supervision from the ground-truth labels $y_i$, while unlabeled data can adopt \textit{any} self-consistency-based reward we have stated previously.
Since the reward $R(\tau_i^j, \textcolor{darkred}{y_i})$ of labeled data is independent of the model's consensus, this training paradigm introduces a crucial distinction between correctness (alignment with ground truth) and self-consistency (internal agreement among outputs), thereby preventing the policy from reinforcing incorrect but internally consistent outputs.

The design of our Semi-supervised RLVR (SS-RLVR) framework stems from the inherent trade-off between \textit{data efficiency} and \textit{learning effectiveness}. Compared to unsupervised variants, SS-RLVR effectively guides robust learning on unlabeled instances by using labeled data as a reliable anchor. In contrast to fully supervised approaches, it significantly reduces the need for costly annotation—our experiments show that SS-RLVR achieves performance close to supervised learning using only \textbf{10\%} of the labeled data. In practice, this trade-off not only directly reduces the annotation burden, but also enables high-quality data synthesis within iterative refinement pipelines, thereby improving data quality over time. This makes SS-RLVR particularly attractive for domains where labeled data is scarce or expensive to obtain, such as medicine and finance.

\subsection{Progressive Trajectory Guidance for Bridging Labeled and Unlabeled Data}
\label{subsec:select}

Despite its promise, we show that a trivial baseline that simply combines supervised and unsupervised RLVR algorithms delivers only marginal benefits.
For example, when supplemented with 3K entropy-based unlabeled RLVR training, the 1K supervised baseline achieves merely a 0.6\% accuracy improvement.
This suggests that such a naive strategy remains constrained by the internal signals of LRMs and suffers from the internal ungrounded reasoning patterns. \textbf{Thus, SS-RLVR must move beyond shallow integration and instead uncover the deeper intrinsic relationships between labeled and unlabeled data.}
However, large language models in RLVR tasks differ fundamentally from traditional semi-supervised settings. The semantic independence between samples makes it impractical to establish relationships among their answers through embeddings.
Thus, the key is to exploit those reasoning patterns in unlabeled data that can be externally validated by labeled examples.
To achieve this goal, it is required to identify a shared, meaningful signal that transcends the heterogeneity of solution spaces and reliably reflects the model’s ability to transfer knowledge from labeled to unlabeled data.

In this work, we propose \textbf{\textsc{TraPO}} (\textbf{Tra}jectory-based \textbf{P}olicy \textbf{O}ptimization), which leverages the learning dynamics of LRMs across training steps as a proxy to connect labeled and unlabeled data, {as shown in Figure \ref{F-framework}}.
Specifically, at each step $t$, \textsc{TraPO} computes the pass rate for each training point. We then identify those unlabeled samples whose \textit{pass rate trajectories} closely align with those of labeled samples as reliable data, which means that their reasoning patterns can be externally validated by the labeled set.
In other words, we hypothesize that when an unlabeled sample is well-learned, its pass rate trajectory should exhibit trends consistent with those observed in labeled data.
Naturally, since pass rates cannot be directly computed for unlabeled data, we introduce a pseudo–pass rate approximation to serve as a proxy.
Formally, for a question $q$ at epoch $t$, the (pseudo) pass rate is defined as the fraction of generated responses that satisfy the expected answer criteria:
\begin{equation}
    P_q^{(t)} = 
    \begin{cases}
        \frac{1}{G} \sum_{i=1}^{G} \mathbb{I}(a_i^{(t)} = \tilde{y}_i^{(t)}), & q \in \mathcal{D}_u, \\
        \frac{1}{G} \sum_{i=1}^{G} \mathbb{I}(a_i^{(t)} = y), & q \in \mathcal{D}_l,
    \end{cases}
    \label{eq:pass_rate}
\end{equation}
Then, we define the \textit{pass rate trajectory} of question $q$ as the sequence of its pass rates across training epochs:
\begin{equation}
    \mathbf{T}_q^{(t)} = \left[ P_q^{(1)}, P_q^{(2)}, \dots, P_q^{(t)} \right] \in [0, 1]^t,
    \label{eq:trajectory}
\end{equation}
initialized as $\mathbf{T}_q^{(0)} = [\,]$ and updated iteratively via concatenation: $\mathbf{T}_q^{(t)} = \mathbf{T}_q^{(t-1)} \oplus P_q^{(t)}$, where $\oplus$ denotes sequence concatenation.
We maintain a reliable pass rate database $\mathcal{D}_{\text{reliable}}$, initialized with all labeled sample trajectories:
$
    \mathcal{D}_{\text{reliable}}^{(0)} = \left\{ \mathbf{T}_l \mid l \in \mathcal{D}_l \right\}.
    \label{eq:db_init}
$ Reliably pseudo-labeled trajectories from unlabeled data selected in subsequent steps are added to update this database.
The average trajectory of this database,
$
    \bar{\mathbf{T}}_{\text{reliable}}^{(t)} = \frac{1}{|\mathcal{D}_{\text{reliable}}|} \sum_{\mathbf{T} \in \mathcal{D}_{\text{reliable}}} \mathbf{T},
    \label{eq:avg_trajectory}
$
serves as a trusted reference for assessing the reliability of unlabeled samples based on trajectory alignment.
Then we compute a trajectory-based cosine similarity (\texttt{TCS}) as:
\begin{equation}
\texttt{TCS}(\mathbf{T}_u^{(t)}, \bar{\mathbf{T}}_{\text{reliable}}^{(t)}) = \hat{\mathbf{T}}_u^{(t)} \cdot \hat{\bar{\mathbf{T}}}_{\text{reliable}}^{(t)} = \sum_{j=1}^{t} \hat{P}_u^{(j)} \cdot \hat{\bar{P}}_{\text{reliable}}^{(j)}
    \label{eq:cos_similarity}
\end{equation}
where $\hat{P}_u^{(j)} = \frac{P_u^{(j)}}{\sqrt{\sum_{i=1}^{t} (P_u^{(i)})^2}}$ and $\hat{\bar{P}}_{\text{reliable}}^{(j)} = \frac{\bar{P}_{\text{reliable}}^{(j)}}{\sqrt{\sum_{i=1}^{t} (\bar{P}_{\text{reliable}}^{(i)})^2}}$ are the normalized pass rate of the unlabeled sample and the reliable database, respectively.

To select the reliable trajectories, we combine two criteria: the \texttt{top-p} of unlabeled samples with highest trajectory similarity to the labeled data, and any sample whose similarity exceeds a threshold $\Gamma$.
\begin{equation}
\label{eq:reliable_mask}
\textrm{M}(u) = \mathbb{I}\left( u \in \texttt{top-p} \left( \texttt{TCS}(\mathbf{T}_u, \bar{\mathbf{T}}_{\text{reliable}}) \right) \right) \lor \mathbb{I}\left( \texttt{TCS}(\mathbf{T}_u, \bar{\mathbf{T}}_{\text{reliable}}) \geq \Gamma \right)
\end{equation}
With this selection mask in hand, we now integrate it into the training process to ensure only reliably improving samples influence model updates.
To ensure stability, we employ a warm-up phase using only labeled data for updates, while accumulating unlabeled trajectories. After warm-up, we apply the mask $\textrm{M}$ to include only reliable unlabeled samples:
\begin{equation}
\mathcal{L}(\theta) = \mathcal{J}_{\text{GRPO}}^{\text{labeled}}(\theta) + \textrm{M} \odot \mathcal{J}_{\text{GRPO}}^{\text{unlabeled}}(\theta).
\end{equation}
where $\odot$ denotes the dot product of vectors. Here, $\mathcal{J}_{\text{GRPO}}$ is the GRPO objective \citep{grpo}:
\begin{equation}
\begin{gathered}
\mathcal{J}_{\text{GRPO}}(\theta) = \frac{1}{\sum_{i=1}^G {\color{black}|\tau_i|}} \sum_{i=1}^G \sum_{l=1}^{|\tau_i|} \text{CLIP}(\gamma_{i,l}(\theta), A_i, \epsilon) - \beta \cdot \mathbb{D}_{\text{KL}}[\pi_\theta \| \pi_{\text{ref}}] \\ 
\end{gathered}
\label{eq:grpo}
\end{equation}
where ${\gamma_{i,l}(\theta) = {\pi_\theta(\tau_{i,l} | q, \tau_{i,<l})}/{\pi_{\theta_{\text{old}}}(\tau_{i,l} | q, \tau_{i,<l})}}$ is the importance sampling term, and $\text{CLIP}(\gamma, A, \epsilon) = \min[r\cdot{A}, \text{clip}(\gamma; 1-\epsilon, 1+\epsilon)\cdot{A}]$ is the clipped surrogate objective. 

In summary, we propose leveraging the evolution of correctness during training (\textit{pass rate trajectories}) as a reliable signal for evaluating unlabeled samples. By measuring the similarity between the pass rate trajectory of an unlabeled instance and the average trajectory derived from labeled data, we identify samples whose learning dynamics align closely with those observed under trusted supervision.
To validate the effectiveness of \textsc{TraPO} in selecting high-quality unlabeled samples and grounding unsupervised learning within a stable feedback framework, we provide a theoretical analysis of its generalization error bound:

\begin{tcolorbox}[
  colback=blue!5, 
  colframe=gray!50!black,
  boxrule=0.8pt, 
  arc=2mm, 
  left=1em, 
  right=1em, 
  top=0.5em, 
  bottom=0.5em
]
\begin{theorem}[Trajectory-Consistent Generalization]
\label{thm:tcg}
(Informal) 
Let the generalization error of policy $\pi_{\theta}^{(t)}$ be the expected risk on the true distribution. 
Assuming $L_y$
  is the label space diameter, under the \textsc{TraPO} framework, with probability at least $1-\delta$, this error is bounded by:
\begin{equation}
\mathcal{R}_{\mathcal{D}_l}(\pi_{\theta}^{(t)}) + \lambda' + \alpha \cdot \mathbb{E}_{q' \sim \mathcal{D}_u} \left[ 1 - \texttt{TCS}\big(\mathbf{T}_{q'}^{(t)}, \bar{\mathbf{T}}_{\mathrm{reliable}}^{(t)}\big) \right] + L_y \left(1 - \bar{C}^{(t)} + \sqrt{\frac{\ln(2n/\delta)}{2G}} \right)
\end{equation}
where $\mathcal{R}_{\mathcal{D}_l}(\pi_{\theta}^{(t)})$ is the empirical risk on $\mathcal{D}_l$, $\lambda' = \lambda + \lambda_d \geq 0$ bounds the domain shift between $\mathcal{D}_l$ and $\mathcal{D}_u$, and $\bar{C}^{(t)}$ is the average voting confidence across $n$ samples based on $G$ votes.
\end{theorem}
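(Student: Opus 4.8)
The plan is to bound the true (target-distribution) risk $\mathcal{R}_{\mathcal{D}}(\pi_\theta^{(t)})$ by a chain of triangle inequalities, each of which introduces exactly one of the four terms on the right-hand side, and then to control the only empirical quantity that needs it — the voting confidence $\bar{C}^{(t)}$, estimated from $G$ rollouts — by a Hoeffding bound with a union over the $n$ training questions. The object that threads the argument together is the reliable set $\mathcal{D}_{\mathrm{reliable}}^{(t)}$: the labeled data together with the \textsc{TraPO}-selected unlabeled questions carrying their pseudo-labels $\tilde y^{(t)}$, whose average trajectory $\bar{\mathbf{T}}_{\mathrm{reliable}}^{(t)}$ is the reference in \eqref{eq:cos_similarity}.

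First I would invoke a classical domain-adaptation decomposition: the target risk is at most the source true risk on $\mathcal{D}_l$, plus (half of) an $\mathcal{H}\Delta\mathcal{H}$-type divergence $d(\mathcal{D}_l,\mathcal{D})$, plus the error $\lambda$ of the best joint hypothesis; bounding the source true risk by the source empirical risk $\mathcal{R}_{\mathcal{D}_l}(\pi_\theta^{(t)})$ introduces the usual complexity gap. Collecting the divergence, the ideal joint error $\lambda$, and this generalization gap into a single domain-shift constant $\lambda' = \lambda + \lambda_d \ge 0$ yields the first two terms $\mathcal{R}_{\mathcal{D}_l}(\pi_\theta^{(t)}) + \lambda'$, and reduces the target risk to a risk that "sees" the unlabeled distribution, which I then realize through $\mathcal{D}_{\mathrm{reliable}}^{(t)}$.

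Second I would pass from true to pseudo labels on the unlabeled part. Since the loss against $\tilde y^{(t)}$ and against $y$ differ by at most $L_y$ (the label-space diameter) when $\tilde y^{(t)} \ne y$ and by $0$ otherwise, the risk gap contributed by $\mathcal{D}_u$ is at most $L_y \cdot \Pr(\tilde y^{(t)} \ne y)$. Under majority voting this probability is controlled by one minus the population voting agreement, which $\bar{C}^{(t)}$ estimates from $G$ rollouts per question; a two-sided Hoeffding inequality on the $G$ bounded vote indicators for each question, union-bounded over the $n$ questions at level $\delta$, gives $\Pr(\tilde y^{(t)} \ne y) \le 1 - \bar{C}^{(t)} + \sqrt{\ln(2n/\delta)/(2G)}$ on an event of probability at least $1-\delta$. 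Multiplying by $L_y$ produces the final bracketed term; this is the only explicit concentration step, so no further union bound is needed.

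Third — and this is the crux — I would relate the residual discrepancy between a selected unlabeled question and the reliable reference to their trajectory similarity. The key modeling assumption is a smoothness property: the gap between the pseudo-risk of $\pi_\theta^{(t)}$ on an unlabeled question $q'$ and its average pseudo-risk on $\mathcal{D}_{\mathrm{reliable}}^{(t)}$ is at most $\alpha\big(1 - \texttt{TCS}(\mathbf{T}_{q'}^{(t)}, \bar{\mathbf{T}}_{\mathrm{reliable}}^{(t)})\big)$, where $\alpha$ absorbs the per-step Lipschitz constant of the GRPO update from \eqref{eq:grpo} and uses that the normalized pass-rate trajectories lie on the unit sphere, so that $1 - \texttt{TCS} \in [0,2]$ behaves like a (scaled squared-Euclidean) divergence. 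Intuitively, questions whose pass-rate trajectories are cosine-aligned have received comparable learning signal over all $t$ epochs and so leave the policy in comparably good states; the mask $\textrm{M}$ of \eqref{eq:reliable_mask} enforces exactly this alignment during training, but for a distribution-level statement I take the expectation over $q' \sim \mathcal{D}_u$, giving $\alpha\cdot\mathbb{E}_{q'\sim\mathcal{D}_u}\big[1 - \texttt{TCS}\big]$. Turning the slogan "aligned trajectories $\Rightarrow$ comparable risk" into an inequality with an explicit $\alpha$ is the main obstacle: it is where a genuine assumption (Lipschitzness of risk in trajectory space, or a coupling of the two rollout processes) has to be made, whereas the remaining ingredients — the triangle inequality, the domain-adaptation bound, and Hoeffding's inequality — are routine. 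Assembling the three decompositions together with the single concentration event then delivers the stated bound with probability at least $1-\delta$.
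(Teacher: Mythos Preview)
Your proposal assembles the same three ingredients as the paper --- a Ben-David--type domain-adaptation decomposition, a pseudo-label error bounded by $L_y$ times a mislabeling probability controlled via Hoeffding on the $G$ vote indicators with a union over $n$, and a trajectory-similarity term --- but wires them together differently, and the rewiring changes what $\lambda'$ means.

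In the paper, the trajectory term is \emph{the} bound on $d_{\mathcal{H}\Delta\mathcal{H}}(\mathcal{D}_l,\mathcal{D}_u)$: a separate proposition shows $d_{\mathcal{H}\Delta\mathcal{H}} \le \alpha\,\mathbb{E}_{q'}[1-\texttt{TCS}] + \lambda_d$, with $\lambda_d$ an irreducible residual, and this is argued via an NTK/gradient-coherence lemma (high gradient coherence $\Rightarrow$ similar pass-rate evolution $\Rightarrow$ low $D_{\mathrm{traj}}$; conversely, low $D_{\mathrm{traj}}$ witnesses aligned gradient fields and hence small domain gap). So $\lambda' = \lambda + \lambda_d$ is genuinely the leftover after trajectory alignment has done its work. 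You instead fold the \emph{entire} divergence into $\lambda'$ in Step~1 and then reintroduce a trajectory term in Step~3 as a per-question Lipschitz bound on pseudo-risk relative to the reliable average. This is not wrong as a formal inequality, but (i) your $\lambda'$ no longer matches the statement's definition --- it now carries the full $d_{\mathcal{H}\Delta\mathcal{H}}$ and can be large, making the trajectory term redundant rather than explanatory --- and (ii) the ``smoothness of risk in trajectory space'' assumption you flag as the main obstacle is precisely what the paper discharges via the NTK route rather than assuming outright. Your Hoeffding step (Step~2) and the identification of the $L_y(1-\bar C^{(t)}+\sqrt{\ln(2n/\delta)/(2G)})$ term are essentially identical to the paper's.
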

\end{tcolorbox}
Theorem~\ref{thm:tcg} highlights the role of trajectory consistency as a regularizer in semi-supervised policy learning. Specifically, the term $\mathbb{E}_{q' \sim \mathcal{D}_u} \left[ 1 - \texttt{TCS}\big(\mathbf{T}_{q'}^{(t)}, \bar{\mathbf{T}}_{\mathrm{reliable}}^{(t)}\big) \right]$ encourages unlabeled samples to follow learning dynamics similar to those of labeled data, effectively anchoring the optimization path. The dependence on $\bar{C}^{(t)}$ reflects the model's self-confidence during training, with lower confidence leading to a looser bound, thus promoting cautious updates. The formal theorem and its proof are presented in Appendix \ref{thm:tcg-formal}.

\section{Experiment}\label{sec-exp}
This section reports the main experimental results. Appendix \ref{Exp:compared_with_other_sup} compares more fully supervised baselines; \ref{Exp:TraPOwithMoreModels} further validates TraPO on more models; \ref{Exp:UniversalComponent} shows that TraPO is plug-and-play; \ref{Exp:Deepmath} evaluates TraPO on the DeepMath dataset \citep{he2025deepmath}; \ref{Exp:SelectStrategies} compares TraPO with other selection strategies; \ref{Exp:Stability} confirms TraPO’s stability; \ref{trainingcost} analyzed the training cost of TraPO; \ref{Exp:Mean_or_Max} analyzed different ways of utilizing reliable passrate databases.
\subsection{Setup}
\begin{table*}[!t]
\centering
\caption{Overall performance based on Qwen2.5-Math-7B under three different training paradigms.
\textbf{Bold} and \underline{underline} indicate the best and second-best results, respectively.}
\setlength{\tabcolsep}{2.5pt}  
\renewcommand{\arraystretch}{1.3} 
\resizebox{\textwidth}{!}{%
\begin{tabular}{lccccc>{\columncolor{yellow!20}}c|ccc>{\columncolor{cyan!20}}c}
\toprule
\multirow{2}{*}{\textbf{Model}} & \multicolumn{6}{c}{\textbf{In-Distribution Performance}} & \multicolumn{4}{c}{\textbf{Out-of-Distribution Performance}} \\
\cmidrule(lr){2-7} \cmidrule(lr){8-11}
 & \textbf{AIME 24/25} & \textbf{AMC} & \textbf{MATH-500} & \textbf{Minerva} & \textbf{Olympiad} & \textbf{Avg.} & \textbf{ARC-c} & \textbf{GPQA}$^{*}$ & \textbf{MMLU-Pro} & \textbf{Avg.} \\
\midrule
\normalrow
\multicolumn{11}{c}{Original Models} \\
Qwen-Base 
  & 11.5/4.9    & 31.3 & 43.6 & 7.4  & 15.6 & 19.0      & 18.2 & 11.1 & 16.9 & 15.4     \\
Qwen-Instruct 
  & 12.5/10.2   & 48.5 & \underline{80.4} & 32.7 & 41.0 & 37.6      & 70.3 & 24.7 & 34.1 & 43.0     \\
\midrule
\normalrow
\multicolumn{11}{c}{Unsupervised Methods Trained on \textit{45K} Samples w/o Any Labels} \\
TTRL            
   &14.1/{12.7}     &51.5  &76.6  &33.8  &40.3  &38.2       &{80.5}  &{35.4}  &41.3  &52.4       \\
Self-certainty            
    &16.9/10.2     &51.7  &77.6  &34.9  &38.8  &38.3       &72.9  &30.8  &41.4  &48.4       \\        
 Token-level Entropy &15.0/9.9       &50.3  &75.2  &{36.8}  & 38.4      &37.6  &75.6  & 33.3 & 40.9 & 49.9   \\
 Sentence-level Entropy            
  &11.4/10.7     &42.1  &68.0  &32.7  &30.5  &32.6       &79.4  &32.3  &42.7  &51.5      \\
\midrule
\rowhighlight
\multicolumn{11}{c}{Semi-supervised Methods Trained on \textit{1K} Labeled Samples \& \textit{3K} Unlabeled Samples} \\
\textcolor{gray}{{Fully Supervised w/ \textit{1K} Labels}}                    & \textcolor{gray}{14.2/13.5} & \textcolor{gray}{52.6} & \textcolor{gray}{80.2} 
& \textcolor{gray}{34.9} & \textcolor{gray}{40.9} & \textcolor{gray}{{39.4}} 
& \textcolor{gray}{{76.2}} & \textcolor{gray}{{36.4}} 
& \textcolor{gray}{43.6} & \textcolor{gray}{{52.1}} \\
 {TTRL}             
  &14.9/10.7     &55.3  &77.8  &33.1  &{43.6}  &39.2    &72.6  &{35.4}  &42.7  & 50.2     \\
Self-certainty &16.5/11.4     &55.6  &79.8  &35.3  &41.2  &{40.0}  &64.8  &30.3  &41.6  &45.6      \\
Token-level Entropy &{18.2}/11.9     &53.4  &80.2  & 34.6 &41.9  &{40.0}  &72.9  &32.3  &44.0  &49.7      \\ 
Sentence-level Entropy & 15.4/11.5    &{54.9}  &79.4  &36.0  & 41.2 & 39.7 &79.4  &33.8  &44.5  &{52.6} \\ 
\textbf{\textsc{TraPO} (ours)}
    &{17.9}/{13.8 }    &{58.7 } &{81.4}  &{38.2}  &{45.5}  &\underline{42.6}       &{83.7}  &{37.9}  &{46.8}  &\underline{56.1} \\
\textcolor{gray}{{Fully Supervised w/ \textit{4K} Labels}}                    & \textcolor{gray}{19.6/14.8} & \textcolor{gray}{57.9} & \textcolor{gray}{80.6} 
& \textcolor{gray}{39.3} & \textcolor{gray}{46.5} & \textcolor{gray}{{43.1}} 
& \textcolor{gray}{{82.1}} & \textcolor{gray}{{39.9}} 
& \textcolor{gray}{48.2} & \textcolor{gray}{{56.7}} \\
\midrule
\rowhighlight
\multicolumn{11}{c}{\textsc{TraPO} Trained on \textit{4K} Labeled Samples \& \textit{12K} Unlabeled Samples} \\
\textbf{\textsc{TraPO} (ours)} 
    &{24.3}/{17.1}    &{60.0} &{84.6}  &{39.3}  &{48.3}  &\textbf{45.6}       &{84.6}  &{43.9}  &{50.7}  &\textbf{59.7} \\
\midrule
\textcolor{gray}{{Fully Supervised w/ \textit{45K} Labels}} & \textcolor{gray}{25.1/15.3}   & \textcolor{gray}{62.0} & \textcolor{gray}{84.4} & \textcolor{gray}{39.3} & \textcolor{gray}{46.8} & \textcolor{gray}{45.5} & \textcolor{gray}{82.3} & \textcolor{gray}{40.4} & \textcolor{gray}{49.3} & \textcolor{gray}{57.3} \\
\bottomrule
\end{tabular}%
\label{tab:main}
}
\end{table*}

\paragraph{Dataset and Benchmarks.}
We follow prior work~\cite{luffyyan2025learning} and use the widely used math reasoning dataset OpenR1-Math-220k~\citep{openr1} for training. 
For evaluation, we focus on six in-distribution (ID) math reasoning benchmarks: AIME 2024, AIME 2025, AMC~\citep{li2024numinamath}, Minerva~\citep{dataset_minerva}, OlympiadBench~\citep{dataset_olympiad}, and MATH-500~\citep{dataset_math}. 
We report \texttt{avg@32} on AIME 2024/2025 and AMC (due to small test sets) and \texttt{pass@1} on the others. 
For out-of-distribution (OOD) generalization, we evaluate on ARC-c~\citep{arc}, GPQA-diamond~\citep{gpqa} (GPQA$^{*}$), and MMLU-Pro~\citep{mmlu_pro}, covering open-domain reasoning, graduate-level science, and academic reasoning. 
All evaluations use temperature sampling with $T=0.6$.

\vspace{-0.5\baselineskip}
\paragraph{Baseline Methods.}

We evaluate supervised, unsupervised, and semi-supervised RLVR methods across varying data scales. For supervised training, we apply GRPO on 1K, 4K, and 45K labeled samples. In the unsupervised setting, we remove ground-truth labels from the full 45K dataset and evaluate four approaches: (1) \textbf{TTRL} \citep{zuo2025ttrl}, which uses majority-voted outputs as pseudo-labels; (2) \textbf{Self-Certainty} \citep{zhao2025learning}, which maximizes KL divergence to encourage confident predictions; (3) \textbf{Token-Level Entropy} \citep{agarwal2025unreasonable}, which minimizes token-level entropy for consistency; and (4) \textbf{Sentence-Level Entropy} \citep{agarwal2025unreasonable}, which maximizes sentence likelihood. For semi-supervised training, we use 1K labeled and 3K unlabeled samples, applying GRPO on the labeled subset and each unsupervised method on the unlabeled subset to form hybrid baselines. We further evaluate a stronger setting with 4K labeled and 12K unlabeled samples to assess performance under higher label efficiency. In Appendix \ref{Exp:compared_with_other_sup}, we compare with more supervised baselines \citep{simplerl,orz,prime,drgrpo}.

\begin{table*}[!t]
\centering

\caption{Performance of different training paradigms with 1K labeled math (ID) samples and 1K unlabeled non-math (OOD) samples. \textbf{Bold} and \underline{underline} indicate the best and second-best results, respectively.
}
\label{tab:paradigms_results}
\setlength{\tabcolsep}{2.5pt}  
\renewcommand{\arraystretch}{1.3} 
\resizebox{\textwidth}{!}{%
\begin{tabular}{lccccc>{\columncolor{yellow!20}}c|ccc>{\columncolor{cyan!20}}c}
\toprule
\multirow{2}{*}{\textbf{Model}} & \multicolumn{6}{c}{\textbf{In-Distribution Performance}} & \multicolumn{4}{c}{\textbf{Out-of-Distribution Performance}} \\
\cmidrule(lr){2-7} \cmidrule(lr){8-11}
 & \textbf{AIME 24/25} & \textbf{AMC} & \textbf{MATH-500} & \textbf{Minerva} & \textbf{Olympiad} & \textbf{Avg.} & \textbf{ARC-c} & \textbf{GPQA}$^{*}$ & \textbf{MMLU-Pro} & \textbf{Avg.} \\
\midrule
\normalrow
\multicolumn{11}{c}{Original Model} \\
Qwen-Base    
  & 11.5/4.9    & 31.3 & 43.6 & 7.4  & 15.6 & 19.0      & 18.2 & 11.1 & 16.9 & 15.4     \\
Qwen-Instruct 
  & 12.5/10.2   & 48.5 & 80.4 & 32.7 & 41.0 & 37.6      & 70.3 & 24.7 & 34.1 & 43.0     \\
  \midrule
\normalrow
\multicolumn{11}{c}{Unsupervised Methods Trained on \textit{1K} Unlabeled ID Samples \& \textit{1K} Unlabeled OOD Samples} \\
TTRL           
  &13.3/9.4     &48.2  &72.2  &27.6  &34.8  & 34.3        &76.7  &33.8  &36.2  &48.9      \\
Self-certainty            
  &18.5/9.6     &53.4  &79.6  &33.4  &40.4  &\underline{39.2}       &76.7  &37.9  &45.6  &\underline{53.4}      \\
Token-level Entropy            
  &14.6/13.3     &46.8 &77.6   &27.9  &40.1  &36.7       & 74.5 &36.4  &35.8  &48.9      \\
Sentence-level Entropy             
  &16.4/11.5     &51.8  &74.0  &33.5  &37.2  &37.4       &74.5  &34.8  &43.3  &50.9      \\
  \midrule
\rowhighlight
\multicolumn{11}{c}{Semi-supervised Methods Trained on \textit{1K} Labeled ID Samples \& \textit{1K} Unlabeled OOD Samples} \\
TTRL              
  & 16.4/13.6    &49.9  &66.9  &26.5  &37.8  &35.2       &62.0  &31.8  &43.5  &45.8      \\
Self-certainty & 16.0/10.9    &53.0  &78.4  &34.2  &39.0  &38.6       &77.1  &32.8  &45.7  &51.9 \\ 
Token-level Entropy & 17.7/11.0    &51.7  &77.0  &33.1  &41.0  &38.6       &76.5  &30.8  &44.7  &50.7 \\ 
Sentence-level Entropy &15.7/10.0     &51.4  &77.4 &34.9  &37.5  &37.8        &75.1  &31.3  &44.3  &50.2 \\ 
\textbf{\textsc{TraPO} (ours)}              
  &18.5/15.7     &53.4  &80.4  &33.8  &44.0  &\textbf{41.0}      &83.6  &38.9  &48.1  &\textbf{56.9}      \\    
\textcolor{gray}{{Fully Supervised w/ \textit{2K} Labels}}               
  &\textcolor{gray}{17.3/12.4}    &\textcolor{gray}{56.8}  &\textcolor{gray}{81.4}  &\textcolor{gray}{38.6}  &\textcolor{gray}{44.8}  &\textcolor{gray}{41.9}       &\textcolor{gray}{82.0}  &\textcolor{gray}{38.9}  &\textcolor{gray}{52.4}  &\textcolor{gray}{57.8}      \\
\bottomrule
\end{tabular}%
}
\end{table*}

\begin{figure}[!t]
	\centering
\includegraphics[width=1.0\linewidth]{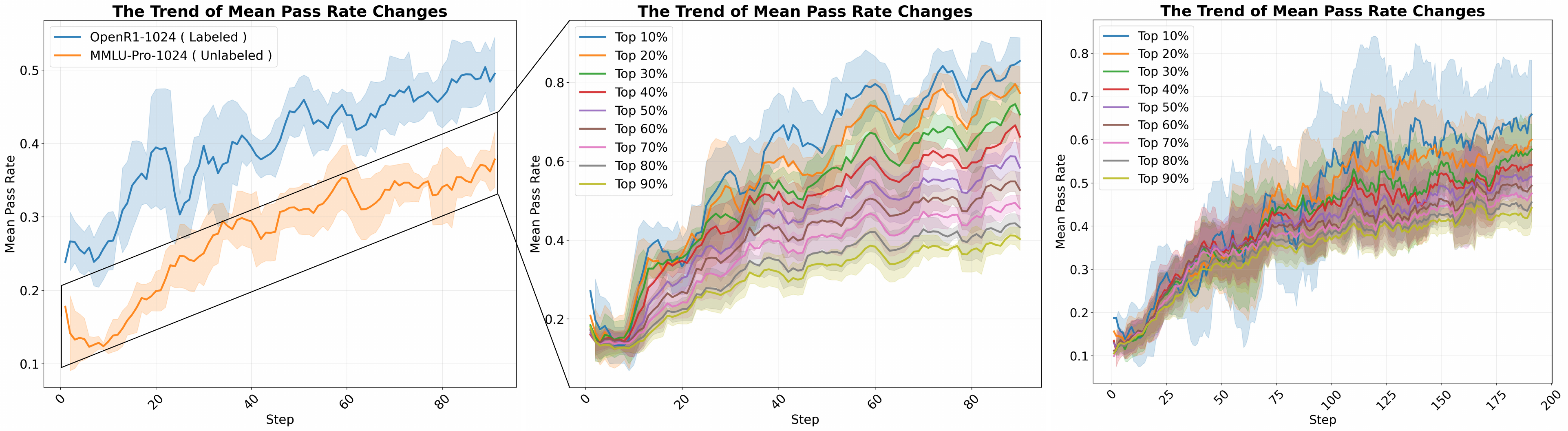} 
    \caption{Left: Average performance changes on labeled and unlabeled data. Center: Unlabeled data performance vs. trajectory matching score using \textbf{true} training dynamics on unlabeled data. Right: Unlabeled data performance vs. trajectory matching score using \textbf{pseudo} training dynamics on unlabeled data.
    }
    \label{F-training_dynamic}
    \vskip -0.2in
\end{figure}

\subsection{Experimental Results}
\noindent \textbf{\textsc{TraPO} achieves SOTA performance.}
Our main results are summarized in Table~\ref{tab:main}.  
First, \textsc{TraPO} significantly outperforms all fully unsupervised baselines using only 1K labeled samples (with 3K unlabeled). 
Compared to the best unsupervised method trained on the full 45K unlabeled set, \textsc{TraPO} achieves gains of $4.3\%$ in ID and $3.7\%$ in OOD accuracy, demonstrating that even minimal labeled data can lead to substantial improvements when effectively integrated.  
Second, \textsc{TraPO} outperforms naive semi-supervised approaches that treat labeled and unlabeled data independently, improving the strongest such baseline by $2.6\%$ (ID) and $3.5\%$ (OOD), which underscores the importance of using labels to actively guide the learning from unlabeled examples.  
Finally, \textsc{TraPO} surpasses the fully supervised model trained on the same 1K labels by $3.2\%$ (ID) and $4.0\%$ (OOD). 
It matches the performance of a fully supervised model trained on 4K labels while using only 25\% of the labeled data. 
Notably, when trained with 4K labeled and 12K unlabeled samples, \textsc{TraPO} achieves 45.6 ID and 59.7 OOD accuracy, exceeding the fully supervised model trained on all 45K labels by $0.1\%$ (ID) and $2.4\%$ (OOD), despite using only \textbf{\textit{10\%}} of the total labels. 
This remarkable performance highlights \textsc{TraPO}'s superior data efficiency and generalization capability.

\noindent \textbf{\textsc{TraPO} succeeds with OOD unlabeled data.}
To investigate whether labeled data can guide learning on out-of-domain (OOD) unlabeled data, we evaluate a semi-supervised setup with 1K labeled samples from the \textit{mathematics} domain (ID) and 1K unlabeled samples from \textit{non-mathematical} domains (OOD). This cross-domain setting is challenging due to the limited transfer of reasoning patterns across domains.
As shown in Table~\ref{tab:paradigms_results}, naive semi-supervised methods fail to benefit from labeled data well. For instance, self-certainty drops by $0.6\%$ on ID and $1.5\%$ on OOD, indicating that naive integration of labeled and unlabeled data harms learning under domain shift.
In contrast, \textsc{TraPO} achieves significant improvements, outperforming the best unsupervised baseline by $1.8\%$ on ID and $3.5\%$ on OOD. It also closely matches the fully supervised model with 2K labels, trailing by only $0.9\%$ on both metrics.
The substantial gain in OOD performance demonstrates that \textsc{TraPO} enables robust cross-domain generalization, highlighting its strong ability to transfer reasoning knowledge even under domain discrepancy.

\begin{figure}[!t]
  \centering
  \includegraphics[width=0.18\textwidth]{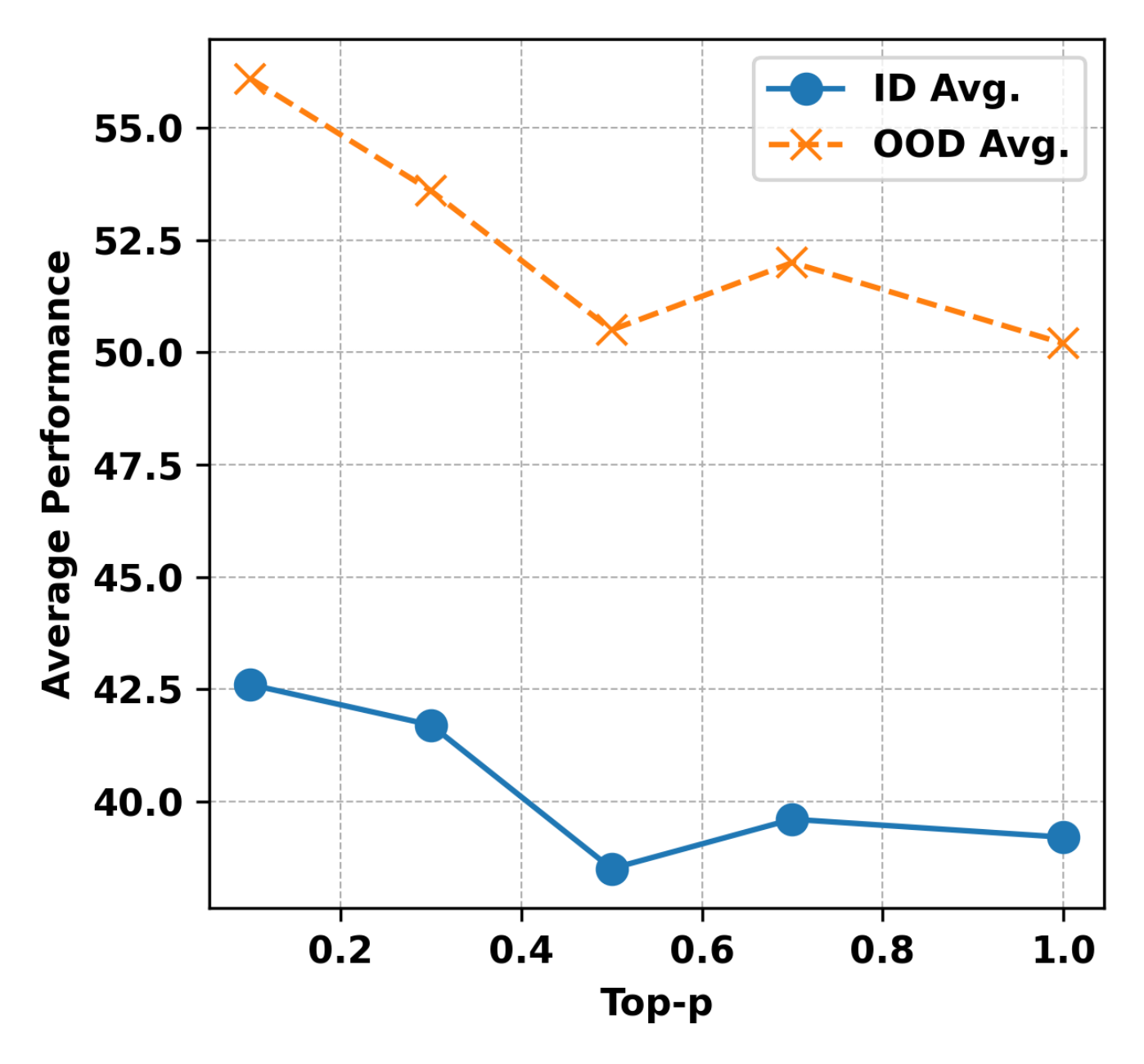}
  \includegraphics[width=0.18\textwidth]{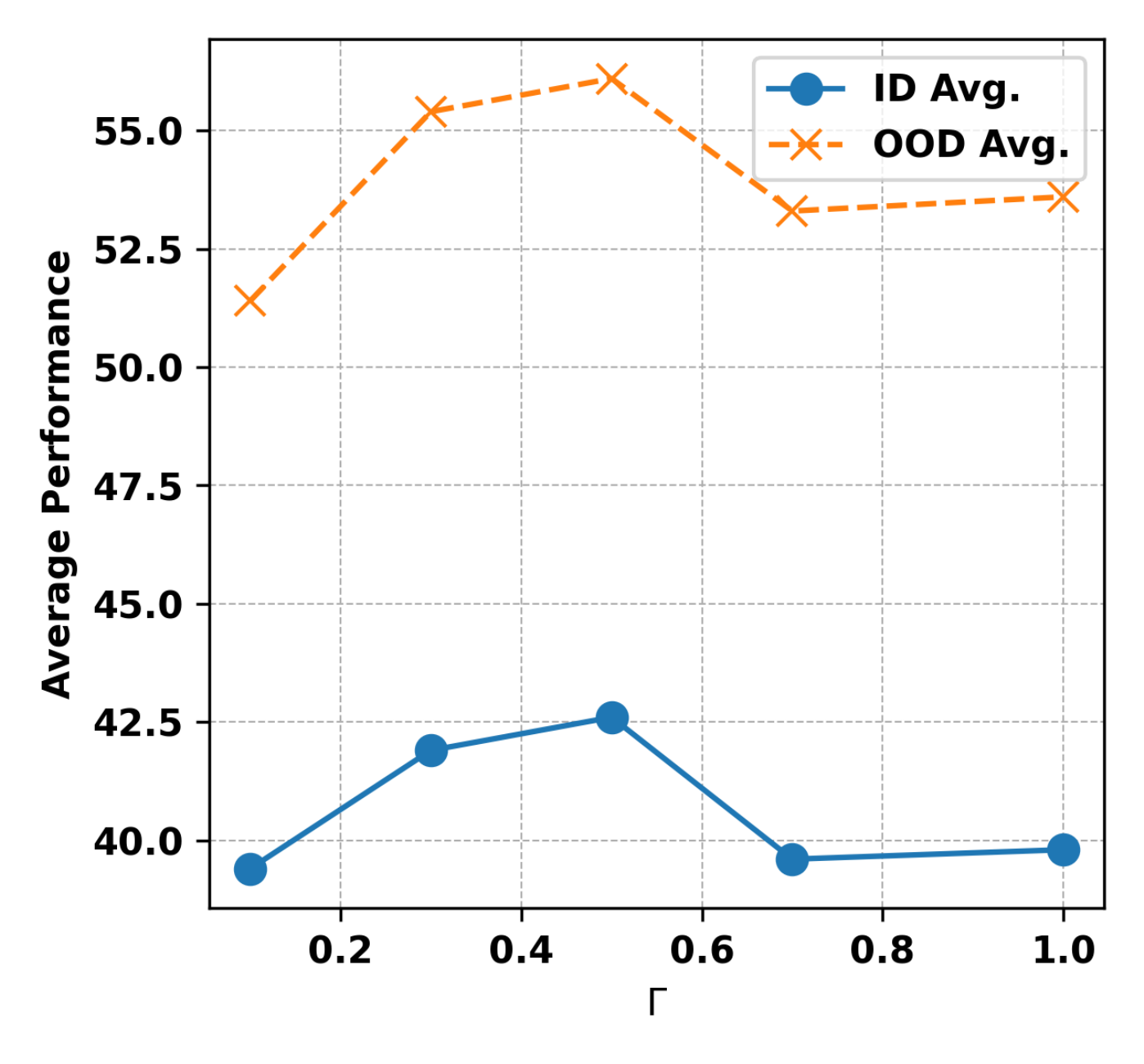}
  \includegraphics[width=0.18\textwidth]{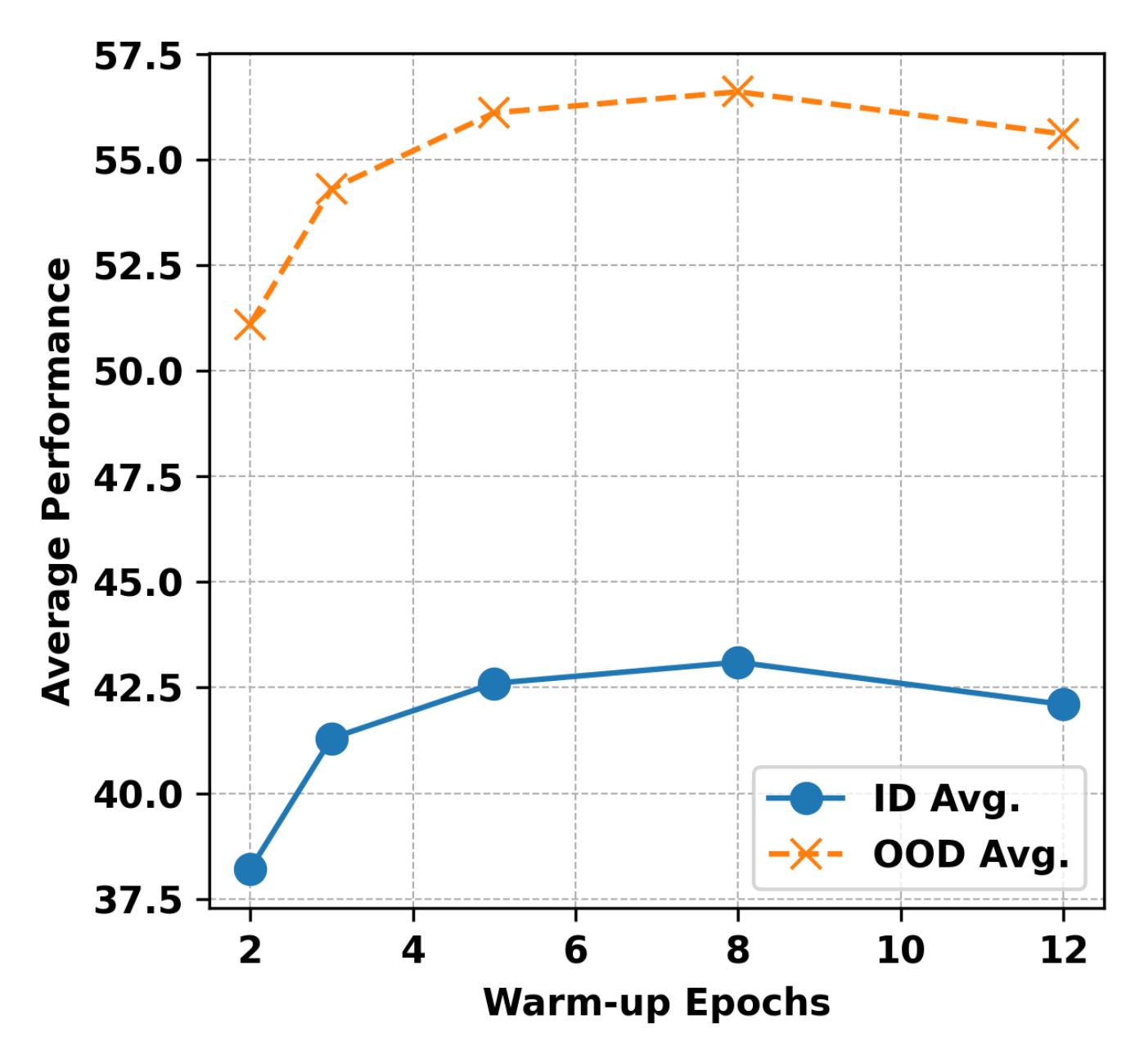}
  \includegraphics[width=0.20\textwidth]{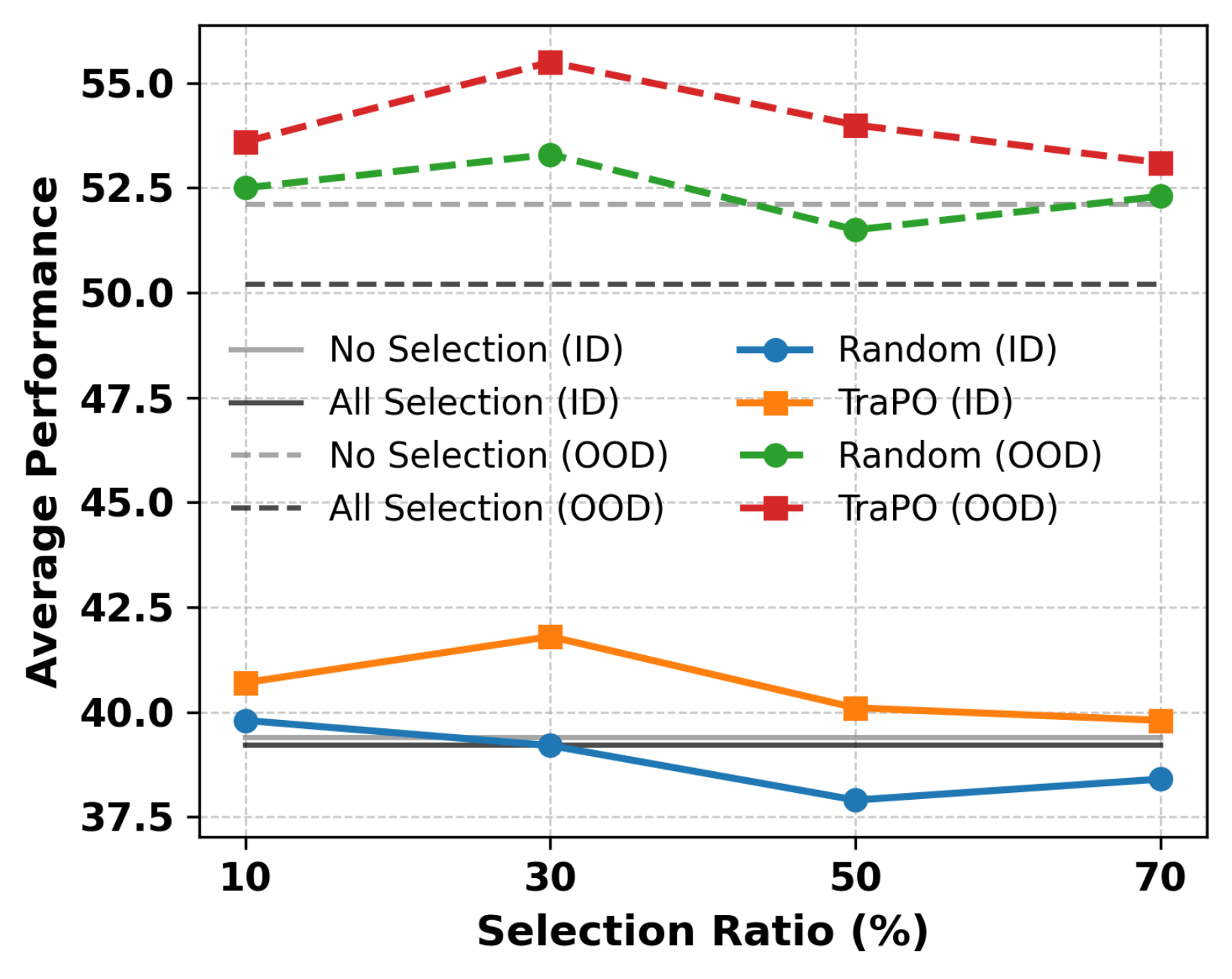}
  \includegraphics[width=0.20\textwidth]{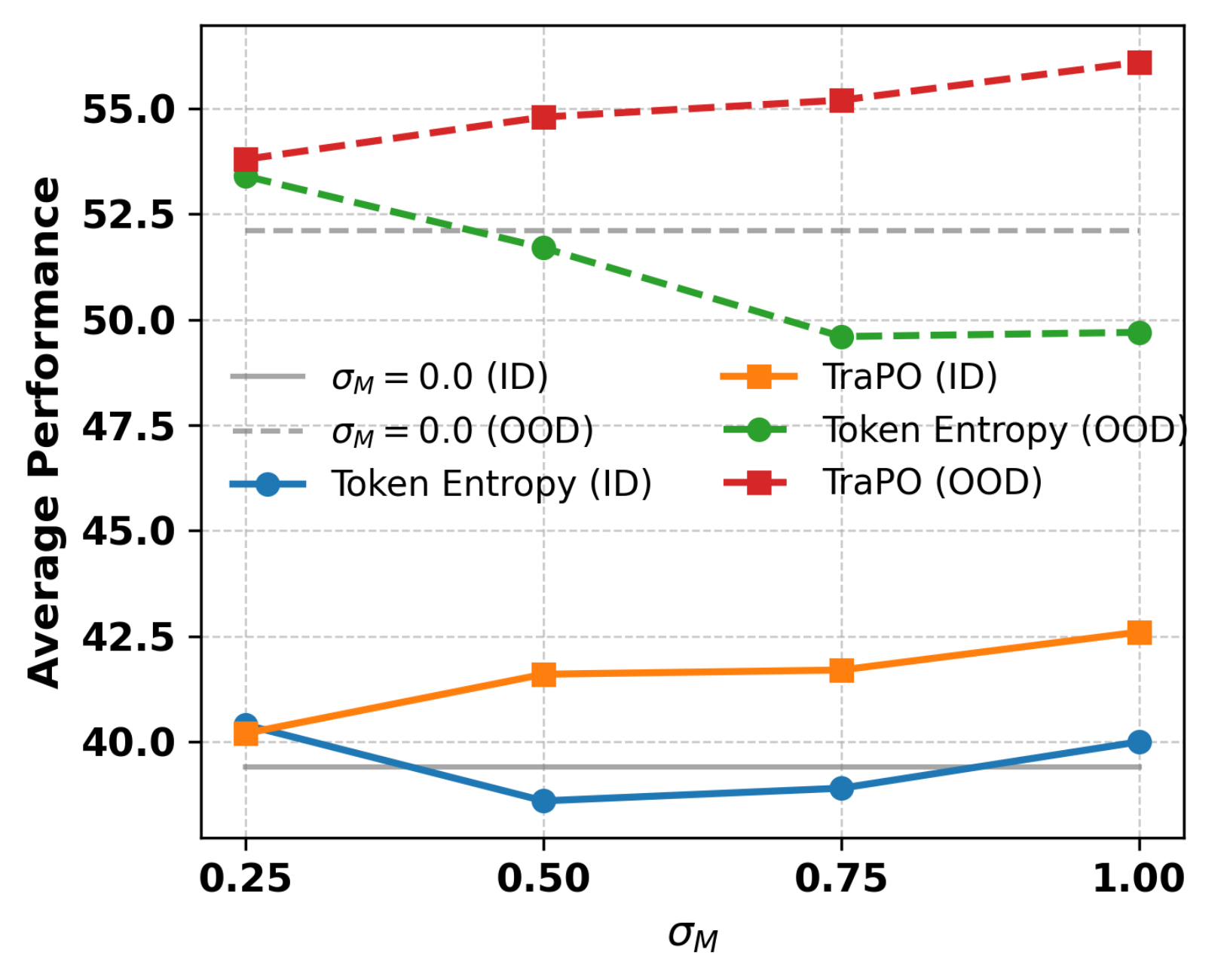}
  \caption{{Sensitivity Analysis. The left three plots show sensitivity analyses of top-p, $\Gamma$, and warmup epochs (Tables \ref{tab:topp}, \ref{tab:Gamma}, and \ref{tab:warmup} in the Appendix). The right two plots compare performance for different ratios of selected and available unlabeled samples ($3K \times \sigma_M$). See tables \ref{tab:random_select} and \ref{tab:sigma_M} in the Appendix for details.}
  }
  \label{fig:sensitive}
\end{figure}

\begin{wrapfigure}{!t}{0.4\textwidth}
	\centering
    \vskip -0.2in
\includegraphics[width=1.0\linewidth]{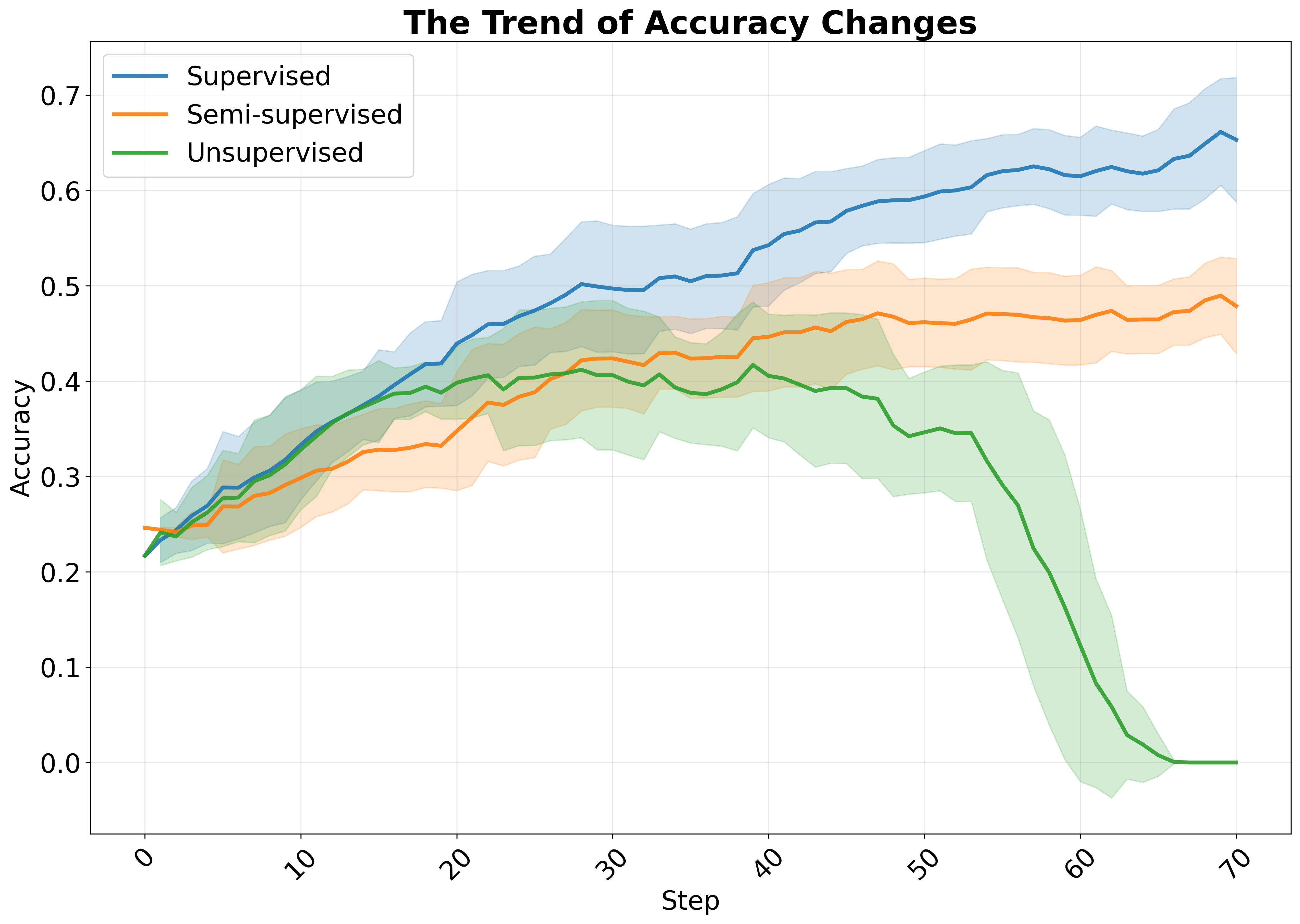} 
\caption{Performance comparison on\\ Llama-3.1-8B.}
    \vskip -0.1in
\label{F-llama}
\end{wrapfigure}

\noindent \textbf{Effectiveness of trajectory matching.}
To evaluate whether trajectory matching identifies reliable unlabeled examples, we analyze the link between trajectory similarity and performance. As shown in the middle plot of Figure~\ref{F-training_dynamic}, samples with dynamics more aligned to labeled data achieve much higher performance. The top 10\% of samples outperform the bottom 10\% by over \textbf{40}\%, confirming that alignment correlates with reliability.
In practice, we use pseudo-labels from voting to estimate unlabeled sample dynamics. The right plot of Figure~\ref{F-training_dynamic} shows that matching pseudo dynamics to true labeled dynamics still yields a strong positive correlation with final performance. This validates the robustness and practical utility of \textsc{TraPO}.

{\noindent \textbf{Sensitivity analysis.} We systematically analyze the impact of top-p, $\Gamma$, and warm-up length with the Qwen-2.5-7B model using 1K labeled and 3K unlabeled samples (left three plots in Figure \ref{fig:sensitive}). For top-p, larger values lead to noisy early-stage predictions and unreliable pseudo-labels, degrading overall performance. For $\Gamma$, setting it too low admits too many low-quality unlabeled samples, while setting it too high is overly conservative, leading to underutilization; both extremes harm the model. Short warm-up lengths lead to unstable pseudo-labeling, but performance stabilizes as the warm-up lengthens.
With different selection ratios and varying proportions ($\sigma_M$) of available unlabeled samples, TraPO outperforms random selection and a strong token-level entropy baseline (the right two plots in Figure \ref{fig:sensitive}). We find that TraPO achieves optimal results using the top 30\% of unlabeled samples, benefiting from high pseudo-label accuracy, whereas adding more unlabeled samples increases noise and reduces gains. These experiments highlight the critical role of intelligent denoising and selection strategies.}

\noindent \textbf{Experiments with other LLMs.} 
Besides Qwen, we also compare the training effectiveness of the three paradigms using the Llama-3.1-8B-Instruct model. The model performance during training is shown in Figure \ref{F-llama}, and detailed results are presented in Table~\ref{tab:main_other_models}.
Here, our semi-supervised \textsc{TraPO} method exhibits a similar trend to supervised training and maintains consistent improvement. In contrast, unsupervised training leads to a rapid performance collapse within tens of training steps. This underscores the critical importance of effective pseudo-supervision selection via trajectory matching in stabilizing the training process.

\section{Conclusion}
In this paper, we present the first exploration of semi-supervised learning in the RLVR setting. We introduce a novel paradigm that leverages a small set of labeled data to guide robust self-improvement on unlabeled data. We propose \textsc{TraPO} (Trajectory based Policy Optimization), a method that enables reliable pseudo-supervision by aligning the learning dynamics of labeled and unlabeled samples through trajectory similarity in pass rate progression. 
Results show \textsc{TraPO} significantly outperforms various baselines using only a fraction of labeled data, achieving an exceptional balance between efficiency and effectiveness.

\bibliographystyle{assets/plainnat}
\bibliography{main}

@article{ye2025limo,
  title={LIMO: Less is More for Reasoning},
  author={Ye, Yixin and Huang, Zhen and Xiao, Yang and Chern, Ethan and Xia, Shijie and Liu, Pengfei},
  journal={arXiv preprint arXiv:2502.03387},
  year={2025}
}

@article{guo2025deepseek,
  title={Deepseek-r1: Incentivizing reasoning capability in llms via reinforcement learning},
  author={Guo, Daya and Yang, Dejian and Zhang, Haowei and Song, Junxiao and Zhang, Ruoyu and Xu, Runxin and Zhu, Qihao and Ma, Shirong and Wang, Peiyi and Bi, Xiao and others},
  journal={arXiv preprint arXiv:2501.12948},
  year={2025}
}

@article{jaech2024openai,
  title={Openai o1 system card},
  author={Jaech, Aaron and Kalai, Adam and Lerer, Adam and Richardson, Adam and El-Kishky, Ahmed and Low, Aiden and Helyar, Alec and Madry, Aleksander and Beutel, Alex and Carney, Alex and others},
  journal={arXiv preprint arXiv:2412.16720},
  year={2024}
}

@misc{grpo,
      title={DeepSeekMath: Pushing the Limits of Mathematical Reasoning in Open Language Models}, 
      author={Zhihong Shao and Peiyi Wang and Qihao Zhu and Runxin Xu and Junxiao Song and Xiao Bi and Haowei Zhang and Mingchuan Zhang and Y. K. Li and Y. Wu and Daya Guo},
      year={2024},
      eprint={2402.03300},
      archivePrefix={arXiv},
      primaryClass={cs.CL},
      url={https://arxiv.org/abs/2402.03300}, 
}

@misc{simplerl,
  title={7B Model and 8K Examples: Emerging Reasoning with Reinforcement Learning is Both Effective and Efficient},
  author={Weihao Zeng and Yuzhen Huang and Wei Liu and Keqing He and Qian Liu and Zejun Ma and Junxian He},
  year={2025},
  howpublished={\url{https://hkust-nlp.notion.site/simplerl-reason}},
  note={Notion Blog}
}

@article{drgrpo,
  title={Understanding R1-Zero-Like Training: A Critical Perspective},
  author={Zichen Liu and Changyu Chen and Wenjun Li and Penghui Qi and Tianyu Pang and Chao Du and Wee Sun Lee and Min Lin},
  journal={arXiv preprint arXiv:2503.20783},
  year={2025}
}

@article{prime,
  title={Process reinforcement through implicit rewards},
  author={Cui, Ganqu and Yuan, Lifan and Wang, Zefan and Wang, Hanbin and Li, Wendi and He, Bingxiang and Fan, Yuchen and Yu, Tianyu and Xu, Qixin and Chen, Weize and others},
  journal={arXiv preprint arXiv:2502.01456},
  year={2025}
}

@misc{orz,
      title={Open-Reasoner-Zero: An Open Source Approach to Scaling Up Reinforcement Learning on the Base Model}, 
      author={Jingcheng Hu and Yinmin Zhang and Qi Han and Daxin Jiang and Xiangyu Zhang and Heung-Yeung Shum},
      year={2025},
      eprint={2503.24290},
      archivePrefix={arXiv},
      primaryClass={cs.LG},
      url={https://arxiv.org/abs/2503.24290}, 
}

@article{dataset_math,
  title={Measuring mathematical problem solving with the math dataset},
  author={Hendrycks, Dan and Burns, Collin and Kadavath, Saurav and Arora, Akul and Basart, Steven and Tang, Eric and Song, Dawn and Steinhardt, Jacob},
  journal={arXiv preprint arXiv:2103.03874},
  year={2021}
}

@inproceedings{dataset_olympiad,
  title={OlympiadBench: A Challenging Benchmark for Promoting AGI with Olympiad-Level Bilingual Multimodal Scientific Problems},
  author={He, Chaoqun and Luo, Renjie and Bai, Yuzhuo and Hu, Shengding and Thai, Zhen and Shen, Junhao and Hu, Jinyi and Han, Xu and Huang, Yujie and Zhang, Yuxiang and others},
  booktitle={Proceedings of the 62nd Annual Meeting of the Association for Computational Linguistics (Volume 1: Long Papers)},
  pages={3828--3850},
  year={2024}
}

@article{dataset_minerva,
  title={Solving quantitative reasoning problems with language models},
  author={Lewkowycz, Aitor and Andreassen, Anders and Dohan, David and Dyer, Ethan and Michalewski, Henryk and Ramasesh, Vinay and Slone, Ambrose and Anil, Cem and Schlag, Imanol and Gutman-Solo, Theo and others},
  journal={Advances in Neural Information Processing Systems},
  volume={35},
  pages={3843--3857},
  year={2022}
}

@article{arc,
      author    = {Peter Clark  and Isaac Cowhey and Oren Etzioni and Tushar Khot and
                    Ashish Sabharwal and Carissa Schoenick and Oyvind Tafjord},
      title     = {Think you have Solved Question Answering? Try ARC, the AI2 Reasoning Challenge},
      journal   = {arXiv:1803.05457v1},
      year      = {2018},
}

@inproceedings{gpqa,
      title={{GPQA}: A Graduate-Level Google-Proof Q\&A Benchmark},
      author={David Rein and Betty Li Hou and Asa Cooper Stickland and Jackson Petty and Richard Yuanzhe Pang and Julien Dirani and Julian Michael and Samuel R. Bowman},
      booktitle={First Conference on Language Modeling},
      year={2024},
      url={https://openreview.net/forum?id=Ti67584b98}
}

@article{mmlu_pro,
  title={Mmlu-pro: A more robust and challenging multi-task language understanding benchmark},
  author={Wang, Yubo and Ma, Xueguang and Zhang, Ge and Ni, Yuansheng and Chandra, Abhranil and Guo, Shiguang and Ren, Weiming and Arulraj, Aaran and He, Xuan and Jiang, Ziyan and others},
  journal={arXiv preprint arXiv:2406.01574},
  year={2024}
}

@misc{openr1,
    title = {Open R1: A fully open reproduction of DeepSeek-R1},
    url = {https://github.com/huggingface/open-r1},
    author = {Hugging Face},
    month = {January},
    year = {2025}
}

@misc{li2024numinamath,
  author       = {Jia Li and Edward Beeching and Lewis Tunstall and Ben Lipkin and Roman Soletskyi and Shengyi Huang and Kashif Rasul and Longhui Yu and Albert Q. Jiang and Ziju Shen and others},
  title        = {Numinamath: The largest public dataset in AI4Maths with 860k pairs of competition math problems and solutions},
  year         = {2024},
  howpublished = {\url{https://huggingface.co/datasets/Numinamath}},
  note         = {Hugging Face repository, 13:9}
}

@misc{qwen2.5_math,
      title={Qwen2.5-Math Technical Report: Toward Mathematical Expert Model via Self-Improvement}, 
      author={An Yang and Beichen Zhang and Binyuan Hui and Bofei Gao and Bowen Yu and Chengpeng Li and Dayiheng Liu and Jianhong Tu and Jingren Zhou and Junyang Lin and Keming Lu and Mingfeng Xue and Runji Lin and Tianyu Liu and Xingzhang Ren and Zhenru Zhang},
      year={2024},
      eprint={2409.12122},
      archivePrefix={arXiv},
      primaryClass={cs.CL},
      url={https://arxiv.org/abs/2409.12122}, 
}

@misc{llama3,
      title={The Llama 3 Herd of Models}, 
      author={Meta Team},
      year={2024},
      eprint={2407.21783},
      archivePrefix={arXiv},
      primaryClass={cs.AI},
      url={https://arxiv.org/abs/2407.21783}, 
}

@misc{simplerl-zoo,
      title={SimpleRL-Zoo: Investigating and Taming Zero Reinforcement Learning for Open Base Models in the Wild}, 
      author={Weihao Zeng and Yuzhen Huang and Qian Liu and Wei Liu and Keqing He and Zejun Ma and Junxian He},
      year={2025},
      eprint={2503.18892},
      archivePrefix={arXiv},
      primaryClass={cs.LG},
      url={https://arxiv.org/abs/2503.18892}, 
}

@article{zuo2025ttrl,
  title={TTRL: Test-Time Reinforcement Learning},
  author={Zuo, Yuxin and Zhang, Kaiyan and Qu, Shang and Sheng, Li and Zhu, Xuekai and Qi, Biqing and Sun, Youbang and Cui, Ganqu and Ding, Ning and Zhou, Bowen},
  journal={arXiv preprint arXiv:2504.16084},
  year={2025}
}

@article{luffyyan2025learning,
  title={Learning to reason under off-policy guidance},
  author={Yan, Jianhao and Li, Yafu and Hu, Zican and Wang, Zhi and Cui, Ganqu and Qu, Xiaoye and Cheng, Yu and Zhang, Yue},
  journal={arXiv preprint arXiv:2504.14945},
  year={2025}
}

@article{yu2025dapo,
  title={Dapo: An open-source llm reinforcement learning system at scale},
  author={Yu, Qiying and Zhang, Zheng and Zhu, Ruofei and Yuan, Yufeng and Zuo, Xiaochen and Yue, Yu and Dai, Weinan and Fan, Tiantian and Liu, Gaohong and Liu, Lingjun and others},
  journal={arXiv preprint arXiv:2503.14476},
  year={2025}
}

@article{jacot2018neural,
  title={Neural tangent kernel: Convergence and generalization in neural networks},
  author={Jacot, Arthur and Gabriel, Franck and Hongler, Cl{\'e}ment},
  journal={Advances in neural information processing systems},
  volume={31},
  year={2018}
}

@article{arora2019exact,
  title={On exact computation with an infinitely wide neural net},
  author={Arora, Sanjeev and Du, Simon S and Hu, Wei and Li, Zhiyuan and Salakhutdinov, Russ R and Wang, Ruosong},
  journal={Advances in neural information processing systems},
  volume={32},
  year={2019}
}

@article{zhu2021geometric,
  title={A geometric analysis of neural collapse with unconstrained features},
  author={Zhu, Zhihui and Ding, Tianyu and Zhou, Jinxin and Li, Xiao and You, Chong and Sulam, Jeremias and Qu, Qing},
  journal={Advances in Neural Information Processing Systems},
  volume={34},
  pages={29820--29834},
  year={2021}
}

@article{ben2010theory,
  title={A theory of learning from different domains},
  author={Ben-David, Shai and Blitzer, John and Crammer, Koby and Kulesza, Alex and Pereira, Fernando and Vaughan, Jennifer Wortman},
  journal={Machine learning},
  volume={79},
  number={1},
  pages={151--175},
  year={2010},
  publisher={Springer}
}

@article{zheng2025group,
  title={Group sequence policy optimization},
  author={Zheng, Chujie and Liu, Shixuan and Li, Mingze and Chen, Xiong-Hui and Yu, Bowen and Gao, Chang and Dang, Kai and Liu, Yuqiong and Men, Rui and Yang, An and others},
  journal={arXiv preprint arXiv:2507.18071},
  year={2025}
}

@article{zhang2025right,
  title={Right question is already half the answer: Fully unsupervised llm reasoning incentivization},
  author={Zhang, Qingyang and Wu, Haitao and Zhang, Changqing and Zhao, Peilin and Bian, Yatao},
  journal={arXiv preprint arXiv:2504.05812},
  year={2025}
}

@article{zhao2025learning,
  title={Learning to reason without external rewards},
  author={Zhao, Xuandong and Kang, Zhewei and Feng, Aosong and Levine, Sergey and Song, Dawn},
  journal={arXiv preprint arXiv:2505.19590},
  year={2025}
}

@article{agarwal2025unreasonable,
  title={The unreasonable effectiveness of entropy minimization in llm reasoning},
  author={Agarwal, Shivam and Zhang, Zimin and Yuan, Lifan and Han, Jiawei and Peng, Hao},
  journal={arXiv preprint arXiv:2505.15134},
  year={2025}
}

@article{li2025confidence,
  title={Confidence Is All You Need: Few-Shot RL Fine-Tuning of Language Models},
  author={Li, Pengyi and Skripkin, Matvey and Zubrey, Alexander and Kuznetsov, Andrey and Oseledets, Ivan},
  journal={arXiv preprint arXiv:2506.06395},
  year={2025}
}

@article{yuan2024self,
  title={Self-rewarding language models},
  author={Yuan, Weizhe and Pang, Richard Yuanzhe and Cho, Kyunghyun and Sukhbaatar, Sainbayar and Xu, Jing and Weston, Jason},
  journal={arXiv preprint arXiv:2401.10020},
  volume={3},
  year={2024}
}

@article{wu2024meta,
  title={Meta-rewarding language models: Self-improving alignment with llm-as-a-meta-judge},
  author={Wu, Tianhao and Yuan, Weizhe and Golovneva, Olga and Xu, Jing and Tian, Yuandong and Jiao, Jiantao and Weston, Jason and Sukhbaatar, Sainbayar},
  journal={arXiv preprint arXiv:2407.19594},
  year={2024}
}

@article{xiong2025self,
  title={Self-rewarding correction for mathematical reasoning},
  author={Xiong, Wei and Zhang, Hanning and Ye, Chenlu and Chen, Lichang and Jiang, Nan and Zhang, Tong},
  journal={arXiv preprint arXiv:2502.19613},
  year={2025}
}

@article{wang2024cream,
  title={Cream: Consistency regularized self-rewarding language models},
  author={Wang, Zhaoyang and He, Weilei and Liang, Zhiyuan and Zhang, Xuchao and Bansal, Chetan and Wei, Ying and Zhang, Weitong and Yao, Huaxiu},
  journal={arXiv preprint arXiv:2410.12735},
  year={2024}
}

@article{zhou2025self,
  title={Self-consistency of the internal reward models improves self-rewarding language models},
  author={Zhou, Xin and Guo, Yiwen and Ma, Ruotian and Gui, Tao and Zhang, Qi and Huang, Xuanjing},
  journal={arXiv preprint arXiv:2502.08922},
  year={2025}
}

@article{zhang2025no,
  title={No Free Lunch: Rethinking Internal Feedback for LLM Reasoning},
  author={Zhang, Yanzhi and Zhang, Zhaoxi and Guan, Haoxiang and Cheng, Yilin and Duan, Yitong and Wang, Chen and Wang, Yue and Zheng, Shuxin and He, Jiyan},
  journal={arXiv preprint arXiv:2506.17219},
  year={2025}
}

@article{zhang2025co,
  title={Co-Reward: Self-supervised Reinforcement Learning for Large Language Model Reasoning via Contrastive Agreement},
  author={Zhang, Zizhuo and Zhu, Jianing and Ge, Xinmu and Zhao, Zihua and Zhou, Zhanke and Li, Xuan and Feng, Xiao and Yao, Jiangchao and Han, Bo},
  journal={arXiv preprint arXiv:2508.00410},
  year={2025}
}

@article{chapelle2009semi,
  title={Semi-supervised learning (chapelle, o. et al., eds.; 2006)[book reviews]},
  author={Chapelle, Olivier and Scholkopf, Bernhard and Zien, Alexander},
  journal={IEEE Transactions on Neural Networks},
  volume={20},
  number={3},
  pages={542--542},
  year={2009},
  publisher={IEEE}
}

@article{berthelot2019mixmatch,
  title={Mixmatch: A holistic approach to semi-supervised learning},
  author={Berthelot, David and Carlini, Nicholas and Goodfellow, Ian and Papernot, Nicolas and Oliver, Avital and Raffel, Colin A},
  journal={Advances in neural information processing systems},
  volume={32},
  year={2019}
}

@article{rasmus2015semi,
  title={Semi-supervised learning with ladder networks},
  author={Rasmus, Antti and Berglund, Mathias and Honkala, Mikko and Valpola, Harri and Raiko, Tapani},
  journal={Advances in neural information processing systems},
  volume={28},
  year={2015}
}

@article{subramanya2011semi,
  title={Semi-Supervised Learning with Measure Propagation.},
  author={Subramanya, Amarnag and Bilmes, Jeff},
  journal={Journal of Machine Learning Research},
  volume={12},
  number={11},
  year={2011}
}

@article{laine2016temporal,
  title={Temporal ensembling for semi-supervised learning},
  author={Laine, Samuli and Aila, Timo},
  journal={arXiv preprint arXiv:1610.02242},
  year={2016}
}

@article{tarvainen2017mean,
  title={Mean teachers are better role models: Weight-averaged consistency targets improve semi-supervised deep learning results},
  author={Tarvainen, Antti and Valpola, Harri},
  journal={Advances in neural information processing systems},
  volume={30},
  year={2017}
}

@article{xie2020unsupervised,
  title={Unsupervised data augmentation for consistency training},
  author={Xie, Qizhe and Dai, Zihang and Hovy, Eduard and Luong, Thang and Le, Quoc},
  journal={Advances in neural information processing systems},
  volume={33},
  pages={6256--6268},
  year={2020}
}

@article{sohn2020fixmatch,
  title={Fixmatch: Simplifying semi-supervised learning with consistency and confidence},
  author={Sohn, Kihyuk and Berthelot, David and Carlini, Nicholas and Zhang, Zizhao and Zhang, Han and Raffel, Colin A and Cubuk, Ekin Dogus and Kurakin, Alexey and Li, Chun-Liang},
  journal={Advances in neural information processing systems},
  volume={33},
  pages={596--608},
  year={2020}
}

@inproceedings{blum1998combining,
  title={Combining labeled and unlabeled data with co-training},
  author={Blum, Avrim and Mitchell, Tom},
  booktitle={Proceedings of the eleventh annual conference on Computational learning theory},
  pages={92--100},
  year={1998}
}

@article{li2025limr,
  title={Limr: Less is more for rl scaling},
  author={Li, Xuefeng and Zou, Haoyang and Liu, Pengfei},
  journal={arXiv preprint arXiv:2502.11886},
  year={2025}
}

@article{li2022making,
  title={Making large language models better reasoners with step-aware verifier},
  author={Li, Yifei and Lin, Zeqi and Zhang, Shizhuo and Fu, Qiang and Chen, Bei and Lou, Jian-Guang and Chen, Weizhu},
  journal={arXiv preprint arXiv:2206.02336},
  year={2022}
}

@article{nguyen2024direct,
  title={Direct evaluation of chain-of-thought in multi-hop reasoning with knowledge graphs},
  author={Nguyen, Minh-Vuong and Luo, Linhao and Shiri, Fatemeh and Phung, Dinh and Li, Yuan-Fang and Vu, Thuy-Trang and Haffari, Gholamreza},
  journal={arXiv preprint arXiv:2402.11199},
  year={2024}
}

@article{he2025can,
  title={Can large language models detect errors in long chain-of-thought reasoning?},
  author={He, Yancheng and Li, Shilong and Liu, Jiaheng and Wang, Weixun and Bu, Xingyuan and Zhang, Ge and Peng, Zhongyuan and Zhang, Zhaoxiang and Zheng, Zhicheng and Su, Wenbo and others},
  journal={arXiv preprint arXiv:2502.19361},
  year={2025}
}

@article{bi2025cot,
  title={Cot-kinetics: A theoretical modeling assessing lrm reasoning process},
  author={Bi, Jinhe and Yan, Danqi and Wang, Yifan and Huang, Wenke and Chen, Haokun and Wan, Guancheng and Ye, Mang and Xiao, Xun and Schuetze, Hinrich and Tresp, Volker and others},
  journal={arXiv preprint arXiv:2505.13408},
  year={2025}
}

@article{plaut2024probabilities,
  title={Probabilities of Chat LLMs Are Miscalibrated but Still Predict Correctness on Multiple-Choice Q\&A},
  author={Plaut, Benjamin and Khanh, Nguyen X and Trinh, Tu},
  journal={arXiv preprint arXiv:2402.13213},
  year={2024}
}

@article{kuhn2023semantic,
  title={Semantic uncertainty: Linguistic invariances for uncertainty estimation in natural language generation},
  author={Kuhn, Lorenz and Gal, Yarin and Farquhar, Sebastian},
  journal={arXiv preprint arXiv:2302.09664},
  year={2023}
}

@article{wang2024latent,
  title={Latent space chain-of-embedding enables output-free llm self-evaluation},
  author={Wang, Yiming and Zhang, Pei and Yang, Baosong and Wong, Derek F and Wang, Rui},
  journal={arXiv preprint arXiv:2410.13640},
  year={2024}
}

@article{dudik2011doubly,
  title={Doubly robust policy evaluation and learning},
  author={Dud{\'\i}k, Miroslav and Langford, John and Li, Lihong},
  journal={arXiv preprint arXiv:1103.4601},
  year={2011}
}

@article{finn2016generalizing,
  title={Generalizing skills with semi-supervised reinforcement learning},
  author={Finn, Chelsea and Yu, Tianhe and Fu, Justin and Abbeel, Pieter and Levine, Sergey},
  journal={arXiv preprint arXiv:1612.00429},
  year={2016}
}

@inproceedings{thomas2016data,
  title={Data-efficient off-policy policy evaluation for reinforcement learning},
  author={Thomas, Philip and Brunskill, Emma},
  booktitle={International conference on machine learning},
  pages={2139--2148},
  year={2016},
  organization={PMLR}
}

@article{kallus2020double,
  title={Double reinforcement learning for efficient off-policy evaluation in markov decision processes},
  author={Kallus, Nathan and Uehara, Masatoshi},
  journal={Journal of Machine Learning Research},
  volume={21},
  number={167},
  pages={1--63},
  year={2020}
}

@article{zhou2023offline,
  title={Offline multi-action policy learning: Generalization and optimization},
  author={Zhou, Zhengyuan and Athey, Susan and Wager, Stefan},
  journal={Operations Research},
  volume={71},
  number={1},
  pages={148--183},
  year={2023},
  publisher={INFORMS}
}

@article{he2025deepmath,
  title={Deepmath-103k: A large-scale, challenging, decontaminated, and verifiable mathematical dataset for advancing reasoning},
  author={He, Zhiwei and Liang, Tian and Xu, Jiahao and Liu, Qiuzhi and Chen, Xingyu and Wang, Yue and Song, Linfeng and Yu, Dian and Liang, Zhenwen and Wang, Wenxuan and others},
  journal={arXiv preprint arXiv:2504.11456},
  year={2025}
}

@article{heidari2024reinforcement,
  title={Reinforcement learning guided semi-supervised learning},
  author={Heidari, Marzi and Zhang, Hanping and Guo, Yuhong},
  journal={Advances in Neural Information Processing Systems},
  volume={37},
  pages={136990--137009},
  year={2024}
}

\newpage
\appendix

\newpage

\part*{Appendix}

\section{Theoretical Proof}
In this section, we provide proofs for the generalization error bound and convergence of the proposed semi-supervised framework \textsc{TraPO}.
\subsection{Notion}
We provide the notions used in the proof in Table \ref{tab:notation}.
\begin{table}[!t]
    \centering
    \caption{Table of Notations and Descriptions}
    \begin{tabular}{c p{10cm}}
        \hline
        \textbf{Notation} & \multicolumn{1}{c}{\textbf{Description}} \\
        \hline
        \multicolumn{2}{c}{\cellcolor{gray!30} {Optimization and Reward Setup}} \\
        \hline
        $\mathcal{J}$ & Group Relative Policy Optimization (GRPO): policy update via response grouping and relative advantage. \\
        $r_i \in \{0,1\}$ & Binary reward: 1 for correct, 0 for incorrect response. \\
        $\mathcal{J}_{\text{pref}}$ & Equivalent preference optimization objective under binary rewards. \\
        $p$ & Empirical accuracy: fraction of correct responses in a batch. \\
        $N^+, N^-$ & Expected number of correct and incorrect responses: $N^+ = pN$, $N^- = (1-p)N$. \\
        $p^+, p^-$ & Group-specific weights: $p^+ = \frac{1-p}{\sqrt{p(1-p)}}$, $p^- = \frac{p}{\sqrt{p(1-p)}}$. \\
        $\hat{A}_{i,l}$ & Advantage estimator: $\hat{A}_{i,l} = \frac{r_i - p}{\sqrt{p(1-p)}}$. \\
        $r_{i,l}(\theta)$ & Probability ratio between current and old policy for token generation. \\
        $\text{clip}(\cdot, 1\pm\varepsilon)$ & Clipping function to stabilize policy updates. \\
        \hline
        \multicolumn{2}{c}{\cellcolor{gray!30} {Generalization and NTK Analysis}} \\
        \hline
        $\Delta \log \pi^t(\tau_k' \| q')$ & Change in log-probability of response $\tau_k'$ after update. \\
        $\Theta((q,\tau),(q',\tau'))$ & Response-level NTK: $\langle \nabla_\theta \log \pi(\tau\|q), \nabla_\theta \log \pi(\tau'\|q') \rangle$. \\
        $\Theta_{++} > 0, \Theta_{--} > 0$ & Gradient alignment: correct-correct and error-error responses align. \\
        Orthogonal gradients & Correct and incorrect response gradients are orthogonal. \\
        $D_{\text{traj}}^{(t)}(q, q')$ & Trajectory divergence: $1 - \cos\angle$ between response pass rate. \\
        $\text{sign}(\Delta \log \pi^t) = +1$ & Positive generalization: similar questions benefit from training. \\
        \hline
        \multicolumn{2}{c}{\cellcolor{gray!30} {Convergence and Risk Bounds}} \\
        \hline
        $d_{\mathcal{H}\Delta\mathcal{H}}(\mathcal{D}_l, \mathcal{D}_u)$ & Domain discrepancy: maximum distinguishability under $\mathcal{H}$. \\
        $d_{\mathcal{H}\Delta\mathcal{H}} \le \alpha \mathbb{E}[D_{\text{traj}}] + \lambda_d$ & Trajectory divergence bounds domain shift. \\
        $R_{\mathcal{D}_u}(\pi_{\theta}^{(t)})$ & Generalization risk on target domain. \\
        $\mathcal{R}_{TC}^{(t)}$ & Dynamic trajectory consistency risk: $\alpha \mathbb{E}[D_{\text{traj}}^{(t)}] + L_y(1 - \bar{C}^{(t)})$. \\
        $\bar{C}^{(t)}$ & Average confidence (e.g., pass rate) at iteration $t$. \\
        $U_t = \mathbb{E}[R_{\mathcal{D}_u}(\pi_{\theta}^{(t)})]$ & Expected target risk, used in convergence analysis. \\
        $U_{t+1} \le U_t - \eta_t \xi_t + \beta_t$ & Monotonic convergence inequality under consistent learning. \\
        $\beta_t$ & Residual term: includes $\Delta D_{\text{traj}}, \Delta C$, and $\eta_t^2 M^2$. \\
        \hline
    \end{tabular}
    \label{tab:notation}
\end{table}

\subsection{GRPO as Preference Optimization}

We begin by formally establishing that GRPO performs preference optimization between correct and incorrect responses when the reward is binary.

\begin{lemma}[GRPO as Preference Optimization]
\label{lem:grpo-preference}
When the reward is binary ($r_i \in \{0,1\}$), the expected GRPO loss for a question $q$ reduces to a weighted preference optimization objective:
\begin{equation}
\label{eq:grpo-preference}
\mathcal{J}_{\text{pref}} = p^+ \sum_{i=1}^{N^+} \min \left( \frac{\pi_{\theta}(\tau_i^+\,|\,q)}{\pi_{\theta_{\text{old}}}(\tau_i^+\,|\,q)}, 1+\varepsilon \right) - p^- \sum_{j=1}^{N^-} \max \left( \frac{\pi_{\theta}(\tau_j^-\,|\,q)}{\pi_{\theta_{\text{old}}}(\tau_j^-\,|\,q)}, 1-\varepsilon \right),
\end{equation}
where:
\begin{itemize}
    \item $p = \frac{1}{N} \sum_{i=1}^N \mathbf{1}[r_i(q) = 1]$ is the empirical correctness rate for $q$,
    \item $N^+ = pN$, $N^- = (1-p)N$ are the expected number of correct and incorrect responses in a batch of $N$ samples,
    \item $p^+ = \frac{1-p}{\sqrt{p(1-p)}}$, $p^- = \frac{p}{\sqrt{p(1-p)}}$ are the group-specific weights.
\end{itemize}
\end{lemma}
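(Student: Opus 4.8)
The plan is to unfold the GRPO objective of Eq.~\eqref{eq:grpo}, specialize the group-normalized advantage to the binary-reward case, and then use the fact that the clipped surrogate collapses to a \emph{one-sided} clip whose direction is fixed by the sign of the advantage. Fix a question $q$ with $N$ sampled responses and empirical correctness rate $p$. The group mean and variance of the rewards are $\mathrm{mean}(r)=p$ and $\mathrm{Var}(r)=p(1-p)$, so the normalized advantage of response $i$ is $\hat{A}_{i}=\frac{r_i-p}{\sqrt{p(1-p)}}$, which takes only two values: $p^+=\frac{1-p}{\sqrt{p(1-p)}}>0$ on each of the $N^+=pN$ correct responses and $-p^-=\frac{-p}{\sqrt{p(1-p)}}<0$ on each of the $N^-=(1-p)N$ incorrect ones; we assume $0<p<1$ (the degenerate cases $p\in\{0,1\}$ give a zero-variance group and are excluded). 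Since the advantage is constant within each group and across the tokens of a response, I will state the argument in terms of the per-response likelihood ratio $\gamma_i(\theta)=\pi_\theta(\tau_i\mid q)/\pi_{\theta_{\mathrm{old}}}(\tau_i\mid q)$, treating the length-normalization prefactor in Eq.~\eqref{eq:grpo} as an overall positive constant and the $-\beta\,\mathbb{D}_{\mathrm{KL}}$ term as a separate additive regularizer, neither of which alters the preference structure.

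\noindent\textbf{Key step.} The algebraic core is the identity, valid for any $\gamma,\epsilon>0$,
\[
\mathrm{CLIP}(\gamma,A,\epsilon)\;=\;\min\!\big[\gamma A,\ \mathrm{clip}(\gamma;1-\epsilon,1+\epsilon)\,A\big]\;=\;
\begin{cases}
A\cdot\min(\gamma,\,1+\epsilon), & A>0,\\[3pt]
A\cdot\max(\gamma,\,1-\epsilon), & A<0,
\end{cases}
\]
which I will verify by the standard three-case analysis of $\gamma$ relative to $[1-\epsilon,1+\epsilon]$: for $A>0$ the factor $A$ is pulled out of the $\min$ and $\min(\gamma,\mathrm{clip}(\gamma))=\min(\gamma,1+\epsilon)$; for $A<0$ pulling out $A$ turns the $\min$ into a $\max$ and $\max(\gamma,\mathrm{clip}(\gamma))=\max(\gamma,1-\epsilon)$. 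Substituting $A=p^+$ on the correct responses and $A=-p^-$ on the incorrect ones and summing gives
\[
\mathcal{J}_{\mathrm{GRPO}}\ \propto\ p^+\sum_{i=1}^{N^+}\min\!\Big(\tfrac{\pi_{\theta}(\tau_i^+\mid q)}{\pi_{\theta_{\mathrm{old}}}(\tau_i^+\mid q)},\,1+\epsilon\Big)\;-\;p^-\sum_{j=1}^{N^-}\max\!\Big(\tfrac{\pi_{\theta}(\tau_j^-\mid q)}{\pi_{\theta_{\mathrm{old}}}(\tau_j^-\mid q)},\,1-\epsilon\Big),
\]
and taking the expectation over the batch sampling --- which freezes the group sizes at their means $N^+=pN$ and $N^-=(1-p)N$ --- yields precisely $\mathcal{J}_{\mathrm{pref}}$ of Eq.~\eqref{eq:grpo-preference}.

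\noindent\textbf{Interpretation and main obstacle.} The resulting functional is manifestly a preference objective: its maximization pushes the likelihood ratios of correct responses up to the ceiling $1+\epsilon$ and those of incorrect responses down to the floor $1-\epsilon$, with relative strengths $p^+$ and $p^-$, so that rare correct answers (small $p$) receive a large reinforcement while rare errors (large $p$) receive a large penalty --- i.e.\ GRPO with binary rewards implicitly performs a group-level preference update between the correct and incorrect pools. The step I expect to demand the most care is \emph{not} the clip algebra but the reduction from the token-level, length-normalized form in Eq.~\eqref{eq:grpo} to the per-response ratios in Eq.~\eqref{eq:grpo-preference}: one must either adopt the sequence-level surrogate as the object of study from the outset, or argue that, because the advantage is shared by all tokens of a response, summing the clipped per-token terms and normalizing is equivalent up to an (immaterial) positive scaling to the per-response expression. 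I would flag this modeling choice explicitly and note that the KL term is carried along untouched.
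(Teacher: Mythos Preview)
Your proposal is correct and follows essentially the same route as the paper: compute the two-valued normalized advantage $p^+,-p^-$, collapse the clipped surrogate to a one-sided $\min/\max$ via the sign-of-$A$ case split (the paper uses $\min(-a,-b)=-\max(a,b)$ for the negative case), and then sum over the correct and incorrect pools. You are in fact more careful than the paper about the token-to-sequence reduction, which the paper handles only by writing $\sum_l \min(\cdot)\approx p^+\min(\pi_\theta(\tau_i^+|q)/\pi_{\theta_{\mathrm{old}}}(\tau_i^+|q),1+\varepsilon)$ ``in the limit of small learning rate or by ignoring token normalization.''
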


\begin{proof}
The standard GRPO loss for a batch of responses $\{\tau_i\}_{i=1}^N$ is:
\[
\mathcal{J} = \sum_{i=1}^N \sum_{l=1}^{|\tau_i|} \min \left( r_{i,l}(\theta) \hat{A}_{i,l}, \hat{A}_{i,l} \cdot \text{clip}(r_{i,l}(\theta), 1-\varepsilon, 1+\varepsilon) \right),
\]
where $r_{i,l}(\theta) = \frac{\pi_\theta(\tau_{i,l} | q, \tau_{i,<l})}{\pi_{\theta_{\text{old}}}(\tau_{i,l} | q, \tau_{i,<l})}$ is the probability ratio at token $l$, and $\hat{A}_{i,l}$ is the advantage estimator.

For binary rewards, $r_i(q) = r_{i,l} = 1$ if the response $\tau_i$ is correct, and $0$ otherwise. The advantage $\hat{A}_{i,l}$ is defined as:
\[
\hat{A}_{i,l} = \frac{r_i - \hat{\mu}}{\hat{\sigma}},
\]
where $\hat{\mu} = p$ is the empirical mean reward (correctness rate), and $\hat{\sigma} = \sqrt{p(1-p)}$ is the empirical standard deviation.

Thus, the advantage simplifies to:
\[
\hat{A}_{i,l} =
\begin{cases}
\frac{1-p}{\sqrt{p(1-p)}} = p^+ & \text{if } r_i = 1 \text{ (correct)}, \\
-\frac{p}{\sqrt{p(1-p)}} = -p^- & \text{if } r_i = 0 \text{ (incorrect)}.
\end{cases}
\]

Now, consider the term in the loss:
\[
\min \left( r_{i,l}(\theta) \hat{A}_{i,l}, \hat{A}_{i,l} \cdot \text{clip}(r_{i,l}(\theta), 1-\varepsilon, 1+\varepsilon) \right).
\]

We analyze this based on the sign of $\hat{A}_{i,l}$:

\textbf{Case 1: $\hat{A}_{i,l} > 0$ ($r_i = 1$, correct response)} \\
In this case, the $\min$ function simplifies to:
\[
\hat{A}_{i,l} \cdot \min\left( r_{i,l}(\theta), 1+\varepsilon \right) = p^+ \cdot \min\left( \frac{\pi_{\theta}(\tau_{i,l}\,|\,q, \tau_{i,<l})}{\pi_{\theta_{\text{old}}}(\tau_{i,l}\,|\,q, \tau_{i,<l})}, 1+\varepsilon \right).
\]
Summing over all tokens $l$ in the response $\tau_i^+$, and noting that $\sum_{l=1}^{|\tau_i^+|} \log \pi_\theta(\tau_{i,l} | q, \tau_{i,<l}) = \log \pi_\theta(\tau_i^+ | q)$, we have (in the limit of small learning rate or by ignoring token normalization):
\[
\sum_{l=1}^{|\tau_i^+|} \min(\cdot) \approx p^+ \min\left( \frac{\pi_{\theta}(\tau_i^+\,|\,q)}{\pi_{\theta_{\text{old}}}(\tau_i^+\,|\,q)}, 1+\varepsilon \right).
\]

\textbf{Case 2: $\hat{A}_{i,l} < 0$ ($r_i = 0$, incorrect response)} \\
Here, $\hat{A}_{i,l} = -p^-$, and the $\min$ function becomes:
\[
\min \left( -p^- r_{i,l}(\theta), -p^- \cdot \text{clip}(r_{i,l}(\theta), 1-\varepsilon, 1+\varepsilon) \right) = -p^- \max\left( r_{i,l}(\theta), 1-\varepsilon \right),
\]
because $\min(-a, -b) = -\max(a,b)$. Summing over tokens:
\[
\sum_{l=1}^{|\tau_j^-|} \min(\cdot) \approx -p^- \max\left( \frac{\pi_{\theta}(\tau_j^-\,|\,q)}{\pi_{\theta_{\text{old}}}(\tau_j^-\,|\,q)}, 1-\varepsilon \right).
\]

Taking the expectation over the response batch $\{\tau_i\}_{i=1}^N \sim \pi_{\theta_{\text{old}}}(\cdot|q)$, and using the fact that there are $N^+ = pN$ correct and $N^- = (1-p)N$ incorrect responses on average, we obtain the expected loss:
\[
\mathbb{E}[\mathcal{J}] = p^+ \sum_{i=1}^{N^+} \min \left( \frac{\pi_{\theta}(\tau_i^+\,|\,q)}{\pi_{\theta_{\text{old}}}(\tau_i^+\,|\,q)}, 1+\varepsilon \right) - p^- \sum_{j=1}^{N^-} \max \left( \frac{\pi_{\theta}(\tau_j^-\,|\,q)}{\pi_{\theta_{\text{old}}}(\tau_j^-\,|\,q)}, 1-\varepsilon \right).
\]

This is exactly the preference optimization objective in \ref{eq:grpo-preference}. This completes the proof of \ref{lem:grpo-preference}.
\end{proof}

\subsection{Gradient Dynamics and NTK Alignment}

We now analyze how training on a question $q$ affects the model's behavior on another question $q'$, leveraging the NTK framework.

\subsubsection{Change in Log-Probability}

We start by deriving the change in the log-probability of generating a response $\tau_k'$ to question $q'$ after a GRPO update on question $q$.

\begin{proposition}[Gradient Update Effect]
\label{prop:delta-logpi}
Let $\Delta \log \pi^t(\tau_k' | q') = \log \pi^{t+1}(\tau_k' | q') - \log \pi^t(\tau_k' | q')$ be the change in log-probability after one GRPO update on $q$. Under the assumption that the parameter update $\theta^{t+1} - \theta^t$ is small and given by the SGD update on $q$, we have:
\begin{equation}
\label{eq:delta-logpi}
\Delta \log \pi^t(\tau_k' \,|\, q') = \left\langle \nabla \log \pi^t(\tau_k' \,|\, q'), p^+ \sum_{i=1}^{N^+} \nabla \log \pi^t(\tau_{i}^+ \,|\, q) - p^- \sum_{j=1}^{N^-} \nabla \log \pi^t(\tau_j^- \,|\, q) \right\rangle.
\end{equation}
\end{proposition}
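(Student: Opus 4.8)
The plan is to combine a first-order Taylor expansion of the log-policy in the parameter displacement with the explicit form of the GRPO gradient supplied by Lemma~\ref{lem:grpo-preference}. First I would write $\delta\theta = \theta^{t+1} - \theta^t$ and expand
\[
\Delta \log \pi^t(\tau_k' \mid q') = \big\langle \nabla_\theta \log \pi^t(\tau_k' \mid q'),\, \delta\theta \big\rangle + O\!\left(\|\delta\theta\|^2\right),
\]
so that the whole task reduces to identifying $\delta\theta$ with the claimed update direction and controlling the quadratic remainder under the stated small-update hypothesis.

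Next I would compute $\nabla_\theta \mathcal{J}_{\text{pref}}$ from Equation~\eqref{eq:grpo-preference} at the current iterate. The key observation is that each GRPO inner update is initialized at $\theta_{\text{old}} = \theta^t$, so every probability ratio $\pi_\theta(\tau\mid q)/\pi_{\theta_{\text{old}}}(\tau\mid q)$ equals $1$ at the point of differentiation; hence the thresholds $1\pm\varepsilon$ are inactive and both $\min(\cdot,1+\varepsilon)$ and $\max(\cdot,1-\varepsilon)$ behave as the identity in a neighborhood of the base point. Using $\nabla_\theta\!\big[\pi_\theta(\tau\mid q)/\pi_{\theta_{\text{old}}}(\tau\mid q)\big] = \big[\pi_\theta(\tau\mid q)/\pi_{\theta_{\text{old}}}(\tau\mid q)\big]\,\nabla_\theta \log\pi_\theta(\tau\mid q)$, which reduces to $\nabla_\theta \log\pi^t(\tau\mid q)$ at $\theta = \theta^t$, the gradient of the preference objective becomes
\[
\nabla_\theta \mathcal{J}_{\text{pref}}\big|_{\theta^t} = p^+ \sum_{i=1}^{N^+} \nabla \log \pi^t(\tau_i^+ \mid q) - p^- \sum_{j=1}^{N^-} \nabla \log \pi^t(\tau_j^- \mid q).
\]
Taking the SGD step $\delta\theta = \eta\,\nabla_\theta \mathcal{J}_{\text{pref}}|_{\theta^t}$ and absorbing $\eta$ into the normalization (equivalently setting $\eta=1$, as in the statement), then substituting into the Taylor expansion and discarding the $O(\|\delta\theta\|^2)$ term, yields exactly Equation~\eqref{eq:delta-logpi}. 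I would also remark that any KL-regularization term $\beta\,\mathbb{D}_{\mathrm{KL}}[\pi_\theta\|\pi_{\mathrm{ref}}]$ either vanishes in the $\beta\to 0$ regime or is folded into the same higher-order bookkeeping, and that the per-token normalization $1/\sum_i|\tau_i|$ together with the identity $\sum_l \log\pi_\theta(\tau_{i,l}\mid q,\tau_{i,<l}) = \log\pi_\theta(\tau_i\mid q)$ were already disposed of in the proof of Lemma~\ref{lem:grpo-preference}, so no new estimate is needed on that front.

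I expect the main obstacle to be making the ``clip is inactive'' and ``remainder is negligible'' steps rigorous rather than heuristic. The clipped surrogate is only piecewise differentiable, so strictly one must check that the base point (where all ratios equal $1$) lies in the interior of the unclipped region, so the one-sided derivatives coincide and the naive chain rule applies; and one must bound the Taylor remainder by the spectral norm of the Hessian of $\log\pi$ times $\|\delta\theta\|^2$, which is precisely where the ``parameter update is small'' assumption is consumed. For the downstream NTK analysis, however, retaining only the leading linear term is exactly what is required, which is what the proposition asserts.
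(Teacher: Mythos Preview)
Your proposal is correct and follows essentially the same route as the paper: first-order Taylor expansion in $\delta\theta$, observation that at $\theta_{\text{old}}=\theta^t$ the ratios equal $1$ so the clip thresholds are inactive, reduction of each ratio's gradient to $\nabla_\theta\log\pi^t(\tau\mid q)$, and absorption of the learning rate. Your additional remarks on piecewise differentiability of the clipped surrogate, on the Hessian-controlled remainder, and on the $\beta$-KL and token-normalization bookkeeping are in fact more explicit than what the paper records, but do not constitute a different argument.
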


\begin{proof}
Using a first-order Taylor expansion of $\log \pi_\theta(\tau_k' | q')$ around $\theta^t$:
\[
\log \pi^{t+1}(\tau_k' | q') = \log \pi^t(\tau_k' | q') + \left\langle \nabla_\theta \log \pi^t(\tau_k' | q'), \theta^{t+1} - \theta^t \right\rangle + O(\|\theta^{t+1} - \theta^t\|^2).
\]

The parameter update $\theta^{t+1} - \theta^t$ is proportional to the negative gradient of the GRPO loss on $q$. From \ref{lem:grpo-preference}, the loss gradient is:
\[
\nabla_\theta \mathcal{J}_q = p^+ \sum_{i=1}^{N^+} \nabla_\theta \left[ \min \left( \frac{\pi_{\theta}(\tau_i^+\,|\,q)}{\pi_{\theta_{\text{old}}}(\tau_i^+\,|\,q)}, 1+\varepsilon \right) \right] - p^- \sum_{j=1}^{N^-} \nabla_\theta \left[ \max \left( \frac{\pi_{\theta}(\tau_j^-\,|\,q)}{\pi_{\theta_{\text{old}}}(\tau_j^-\,|\,q)}, 1-\varepsilon \right) \right].
\]

In the "nearly online" setting of GRPO, where responses are resampled at each iteration, we assume $\pi_\theta \approx \pi_{\theta_{\text{old}}}$, so the ratios are close to 1. In this case, the $\min$ and $\max$ operators are inactive (i.e., the clipping does not bind), and we have:
\[
\nabla_\theta \left[ \min \left( \frac{\pi_{\theta}(\tau_i^+\,|\,q)}{\pi_{\theta_{\text{old}}}(\tau_i^+\,|\,q)}, 1+\varepsilon \right) \right] \approx \nabla_\theta \log \pi_\theta(\tau_i^+ | q),
\]
\[
\nabla_\theta \left[ \max \left( \frac{\pi_{\theta}(\tau_j^-\,|\,q)}{\pi_{\theta_{\text{old}}}(\tau_j^-\,|\,q)}, 1-\varepsilon \right) \right] \approx \nabla_\theta \log \pi_\theta(\tau_j^- | q).
\]

Thus, the update direction is:
\[
\theta^{t+1} - \theta^t \approx -\eta \left( p^+ \sum_{i=1}^{N^+} \nabla_\theta \log \pi^t(\tau_{i}^+ | q) - p^- \sum_{j=1}^{N^-} \nabla_\theta \log \pi^t(\tau_j^- | q) \right),
\]
where $\eta$ is the learning rate. Substituting into the Taylor expansion and dropping higher-order terms, we get:
\[
\Delta \log \pi^t(\tau_k' | q') \approx -\eta \left\langle \nabla \log \pi^t(\tau_k' | q'), p^+ \sum_{i=1}^{N^+} \nabla \log \pi^t(\tau_{i}^+ | q) - p^- \sum_{j=1}^{N^-} \nabla \log \pi^t(\tau_j^- | q) \right\rangle.
\]

The learning rate $\eta$ is a positive scalar. Since we are interested in the \emph{sign} of the change (increase or decrease), we can absorb $-\eta$ into the expression and consider the inner product as the primary determinant of the sign. For notational simplicity and consistency with the original text, we present the update direction without $\eta$, leading to \ref{eq:delta-logpi}. This completes the proof of \ref{prop:delta-logpi}.
\end{proof}

To analyze the sign of $\Delta \log \pi^t(\tau_k' | q')$, we introduce the response-level NTK and state the gradient alignment assumption.

\begin{definition}[Response-level NTK]
\label{def:ntk}
The response-level Neural Tangent Kernel (NTK) between two response-generation events $(q, \tau)$ and $(q', \tau')$ is defined as:
\[
\Theta\big((q,\tau), (q',\tau')\big) := \left\langle \nabla_\theta \log \pi_\theta(\tau \mid q), \nabla_\theta \log \pi_\theta(\tau' \mid q') \right\rangle.
\]
\end{definition}

Under the NTK regime for sufficiently wide neural networks, $\Theta$ converges to a deterministic limit and remains approximately constant during training \citep{jacot2018neural, arora2019exact}.

\begin{assumption}[Gradient Alignment]
\label{ass:ntk_alignment}
Let $q, q'$ be two questions from the same task family $\mathcal{T}$, with $q \sim q'$ indicating semantic similarity. Then, in the infinite-width limit, the following asymptotic properties hold:
\begin{enumerate}[label=(\roman*)]
    \item \textbf{(Correct-Correct Alignment)} For all correct responses $\tau_i^+ \in \mathcal{R}^+(q)$, $\tau_k'^+ \in \mathcal{R}^+(q')$:
    \[
    \lim_{\text{width} \to \infty} \left\langle \nabla_\theta \log \pi_\theta(\tau_k'^+ \mid q'), \nabla_\theta \log \pi_\theta(\tau_i^+ \mid q) \right\rangle = \Theta_{kk',ii'}^{++} > 0.
    \]
    \item \textbf{(Incorrect-Incorrect Alignment)} For all incorrect responses $\tau_j^- \in \mathcal{R}^-(q)$, $\tau_k'^- \in \mathcal{R}^-(q')$:
    \[
    \lim_{\text{width} \to \infty} \left\langle \nabla_\theta \log \pi_\theta(\tau_k'^- \mid q'), \nabla_\theta \log \pi_\theta(\tau_j^- \mid q) \right\rangle = \Theta_{kk',jj'}^{--} > 0.
    \]
    \item \textbf{(Correct-Incorrect Orthogonality)} For all $\tau_i^+ \in \mathcal{R}^+(q)$, $\tau_j^- \in \mathcal{R}^-(q)$, $\tau_k' \in \{\tau_k'^+, \tau_k'^-\}$:
    \[
    \lim_{\text{width} \to \infty} \left\langle \nabla_\theta \log \pi_\theta(\tau_k'^+ \mid q'), \nabla_\theta \log \pi_\theta(\tau_j^- \mid q) \right\rangle = 0,
    \]
    \[
    \lim_{\text{width} \to \infty} \left\langle \nabla_\theta \log \pi_\theta(\tau_k'^- \mid q'), \nabla_\theta \log \pi_\theta(\tau_i^+ \mid q) \right\rangle = 0.
    \]
\end{enumerate}
\end{assumption}

\begin{remark}
This assumption is motivated by the structure of the NTK. For semantically similar inputs and valid (correct) outputs, the corresponding feature representations activate overlapping sets of neurons, leading to positive kernel values. Conversely, correct and incorrect responses represent conflicting patterns, and their gradient directions become nearly orthogonal in overparameterized models \citep{zhu2021geometric}.
\end{remark}

\subsubsection{Main Generalization Result}

With the NTK alignment assumption in place, we can now prove that training on $q$ improves performance on a similar $q'$.

\begin{proposition}[Generalization through Gradient Alignment]
\label{prop:generalization}
Let $q$ and $q'$ be two questions that are similar in structure and difficulty, denoted $q \sim q'$, belonging to a shared task family $\mathcal{T}$. Let $\tau_k'$ be a response to $q'$. Under \ref{ass:ntk_alignment} and the GRPO update rule, the sign of the change in log-probability $\Delta \log \pi^t(\tau_k' \mid q')$ is determined as follows in the infinite-width limit:
\[
\mathrm{sign}\left( \Delta \log \pi^t(\tau_k' \mid q') \right) =
\begin{cases}
+1 & \text{if } \tau_k' \text{ is a correct response to } q', \\
-1 & \text{if } \tau_k' \text{ is an incorrect response to } q'.
\end{cases}
\]
\end{proposition}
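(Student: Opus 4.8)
The plan is to reduce the statement to a one-line sign computation built from three ingredients already established: the closed form for $\Delta\log\pi^t$ in Proposition~\ref{prop:delta-logpi}, the response-level kernel of Definition~\ref{def:ntk}, and the alignment structure of Assumption~\ref{ass:ntk_alignment}. Starting from Proposition~\ref{prop:delta-logpi} and expanding the inner product by bilinearity, I would write
\[
\Delta\log\pi^t(\tau_k'\mid q') \;=\; p^+ \sum_{i=1}^{N^+} \Theta\big((q',\tau_k'),(q,\tau_i^+)\big) \;-\; p^- \sum_{j=1}^{N^-} \Theta\big((q',\tau_k'),(q,\tau_j^-)\big),
\]
so that the sign of the update is governed by a combination of response-level NTK entries between $\tau_k'$ and the correct/incorrect rollouts on $q$, with strictly positive weights $p^+ = (1-p)/\sqrt{p(1-p)}$ and $p^- = p/\sqrt{p(1-p)}$ whenever $p\in(0,1)$.

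Next I would split on whether $\tau_k'$ is correct or incorrect. If $\tau_k'=\tau_k'^+$ is correct, part (i) of Assumption~\ref{ass:ntk_alignment} gives $\Theta((q',\tau_k'^+),(q,\tau_i^+))\to\Theta^{++}_{kk',ii'}>0$ for every correct rollout, while part (iii) gives $\Theta((q',\tau_k'^+),(q,\tau_j^-))\to 0$ for every incorrect rollout; hence in the infinite-width limit the second sum vanishes and the first is a sum of $N^+=pN\ge 1$ strictly positive terms scaled by $p^+>0$, so $\Delta\log\pi^t(\tau_k'^+\mid q')>0$. The case $\tau_k'=\tau_k'^-$ is symmetric: part (iii) kills the correct-side contributions, $\Theta((q',\tau_k'^-),(q,\tau_i^+))\to 0$, and part (ii) gives $\Theta((q',\tau_k'^-),(q,\tau_j^-))\to\Theta^{--}_{kk',jj'}>0$, so only $-p^-\sum_j\Theta^{--}<0$ survives and $\Delta\log\pi^t(\tau_k'^-\mid q')<0$. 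Reading off the two signs yields exactly the claimed dichotomy.

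The algebra here is routine; the delicate part — and the main obstacle — is making the ``sign'' statement genuinely rigorous in the limiting sense. Three points need care. (a) Proposition~\ref{prop:delta-logpi} already discards the $O(\|\theta^{t+1}-\theta^t\|^2)$ Taylor remainder and assumes the GRPO clip is inactive (the ``nearly online'' regime $\pi_\theta\approx\pi_{\theta_{\text{old}}}$); I would argue that in the wide-network/small-step regime the leading NTK term is bounded away from zero uniformly (each $\Theta^{\pm\pm}$ is a strictly positive constant and there is at least one correct and one incorrect rollout), so the discarded corrections, being higher order in the step size, cannot overturn its sign. (b) I must exclude the degenerate batches $p\in\{0,1\}$, where $p^\pm$ and the normalization $1/\sqrt{p(1-p)}$ are undefined — equivalently, assume the rollout group on $q$ contains at least one correct and one incorrect response, $N^+,N^-\ge 1$. (c) Interchanging the infinite-width limit with the finite inner product and the finite sums over $\{\tau_i^+\}$, $\{\tau_j^-\}$ is harmless but should be stated explicitly. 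I would also note that, by near-constancy of $\Theta$ along training in the NTK regime \citep{jacot2018neural, arora2019exact}, this sign is stable in $t$, which is precisely what lets the downstream convergence argument iterate the resulting positive-generalization property.
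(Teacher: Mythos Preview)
Your proposal is correct and follows essentially the same route as the paper: substitute the expression from Proposition~\ref{prop:delta-logpi}, split on whether $\tau_k'$ is correct or incorrect, and invoke parts (i)--(iii) of Assumption~\ref{ass:ntk_alignment} to kill the cross terms and fix the sign of the surviving sum. Your additional remarks on the Taylor remainder, the degenerate cases $p\in\{0,1\}$, and the interchange of limits are more careful than the paper's own proof, which simply applies the assumption and reads off the sign without discussing these subtleties.
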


\begin{proof}
We substitute \ref{eq:delta-logpi} and analyze the two cases separately.

\textbf{Case 1: $\tau_k'$ is a correct response ($\tau_k' = \tau_k'^+$)} \\
\begin{align}
\Delta \log \pi^t(\tau_k'^+ \mid q')
&= p^+ \sum_{i=1}^{N^+} \left\langle \nabla_\theta \log \pi^t(\tau_k'^+ \mid q'), \nabla_\theta \log \pi^t(\tau_i^+ \mid q) \right\rangle \notag \\
&\quad - p^- \sum_{j=1}^{N^-} \left\langle \nabla_\theta \log \pi^t(\tau_k'^+ \mid q'), \nabla_\theta \log \pi^t(\tau_j^- \mid q) \right\rangle.
\end{align}

By \ref{ass:ntk_alignment}(i), each inner product in the first sum is strictly positive in the infinite-width limit. Since $p^+ > 0$, the entire first term is positive.

By \ref{ass:ntk_alignment}(iii), each inner product in the second sum is zero. Thus, the second term vanishes.

Therefore, $\Delta \log \pi^t(\tau_k'^+ \mid q') > 0$, meaning the log-probability of the correct response $\tau_k'^+$ increases.

\textbf{Case 2: $\tau_k'$ is an incorrect response ($\tau_k' = \tau_k'^-$)} \\
\begin{align}
\Delta \log \pi^t(\tau_k'^- \mid q')
&= p^+ \sum_{i=1}^{N^+} \left\langle \nabla_\theta \log \pi^t(\tau_k'^- \mid q'), \nabla_\theta \log \pi^t(\tau_i^+ \mid q) \right\rangle \notag \\
&\quad - p^- \sum_{j=1}^{N^-} \left\langle \nabla_\theta \log \pi^t(\tau_k'^- \mid q'), \nabla_\theta \log \pi^t(\tau_j^- \mid q) \right\rangle.
\end{align}

By \ref{ass:ntk_alignment}(iii), each inner product in the first sum is zero.

By \ref{ass:ntk_alignment}(ii), each inner product in the second sum is strictly positive. Since $p^- > 0$, the sum is positive, but it is preceded by a negative sign, making the entire second term negative.

Therefore, $\Delta \log \pi^t(\tau_k'^- \mid q') < 0$, meaning the log-probability of the incorrect response $\tau_k'^-$ decreases.

Combining both cases proves \ref{prop:generalization}. This shows that GRPO implicitly pushes the model in a direction that generalizes to similar tasks by reinforcing correct responses and suppressing incorrect ones.
\end{proof}

\begin{corollary}
In the NTK regime, GRPO encourages an inductive bias towards solutions that lie in directions of high kernel alignment across correct responses within a task family. This promotes generalization even with sparse supervision.
\end{corollary}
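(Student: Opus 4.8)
The plan is to read off the Corollary as a qualitative consequence of Propositions~\ref{prop:delta-logpi} and~\ref{prop:generalization}, lifted from a single-step sign statement to a claim about the cumulative trajectory and the geometry of the effective update. First I would note that Proposition~\ref{prop:delta-logpi} already displays the update direction explicitly: one GRPO step on a question $q$ with correctness rate $p$ moves the parameters along $p^+\sum_{i=1}^{N^+}\nabla_\theta\log\pi^t(\tau_i^+\mid q)-p^-\sum_{j=1}^{N^-}\nabla_\theta\log\pi^t(\tau_j^-\mid q)$, which lies entirely in the span of the correct- and incorrect-response gradients for $q$. Pairing this increment with the response-level NTK of Definition~\ref{def:ntk} and invoking the correct--incorrect orthogonality clause of Assumption~\ref{ass:ntk_alignment}, the incorrect block contributes nothing to the change in log-probability of a correct response $\tau_k'$ to a similar question $q'\sim q$ from the task family $\mathcal{T}$; only the correct--correct block $\Theta^{++}$ survives. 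Hence the \emph{only} coordinates of the GRPO update that coherently act on other questions in $\mathcal{T}$ are those aligned with the subspace spanned by correct-response gradients---precisely the ``directions of high kernel alignment across correct responses'' in the statement.

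Next I would iterate over the training steps. Proposition~\ref{prop:generalization} gives $\mathrm{sign}(\Delta\log\pi^t(\tau_k'\mid q'))=+1$ for every correct $\tau_k'$ and $=-1$ for every incorrect one, at each step where Assumption~\ref{ass:ntk_alignment} holds. In the NTK regime the kernel $\Theta$ is approximately frozen throughout training \citep{jacot2018neural,arora2019exact}, so these per-step contributions do not cancel or decay; summing them shows that the correct-versus-incorrect log-margin on $q'$ grows monotonically and $\pi^t(\cdot\mid q')$ concentrates on correct responses even though $q'$ is never directly supervised. To sharpen the ``inductive bias'' clause I would invoke the standard implicit-regularization fact for linearized (NTK) models: a gradient flow that interpolates the supervised correct responses does so with minimum RKHS norm, whose representer expansion lies in the span of the training-response gradients and is therefore concentrated on the leading eigendirections of $\Theta^{++}$ restricted to $\mathcal{T}$. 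This is the sense in which GRPO prefers high-alignment solutions, and the sparse-supervision clause follows at once: a single labeled $q$---or, in \textsc{TraPO}, the labeled anchor set together with the trajectory-matched unlabeled samples that inherit the same dynamics---already fixes this direction for the entire family.

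I expect the main obstacle to be making ``inductive bias'' precise without overclaiming. Propositions~\ref{prop:delta-logpi} and~\ref{prop:generalization} supply only a sign, not a rate, so a quantitative statement requires either (i) the minimum-norm interpolation lemma for linearized models, together with a spectral lower bound on the smallest positive eigenvalue of $\Theta^{++}$ restricted to $\mathcal{T}$ to control the margin growth, or (ii) a uniform bound on the first-order Taylor error in Proposition~\ref{prop:delta-logpi}, namely $\|\theta^t-\theta^0\|=O(1)$ in the infinite-width scaling, so that the frozen-kernel approximation holds along the whole trajectory. I would import both ingredients from the NTK literature rather than re-derive them, and I would state the Corollary at the same informal, asymptotic level as Assumption~\ref{ass:ntk_alignment} itself, since a fully rigorous version would demand extra regularity conditions on $\pi_\theta$ that lie outside the scope of this analysis.
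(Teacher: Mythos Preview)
Your proposal is correct and the reasoning is sound. Note, however, that in the paper this Corollary is stated immediately after Proposition~\ref{prop:generalization} with no separate proof at all: it is treated as a direct qualitative reading of the preceding propositions, nothing more. Your argument is therefore considerably more elaborate than what the paper provides. The core of your derivation---that the update direction from Proposition~\ref{prop:delta-logpi} lies in the span of correct- and incorrect-response gradients, that Assumption~\ref{ass:ntk_alignment}(iii) kills the cross terms so only $\Theta^{++}$ drives improvement on similar $q'$, and that iterating Proposition~\ref{prop:generalization} under a frozen kernel gives monotone margin growth---is exactly the intended interpretation and matches the paper's implicit logic. Your additional layer invoking minimum-RKHS-norm interpolation and spectral control on $\Theta^{++}$ is a genuine strengthening: it makes ``inductive bias'' precise in a way the paper leaves entirely informal, and it honestly flags the regularity assumptions (bounded parameter drift, Taylor error control) that the paper sweeps under the ``NTK regime'' rug. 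This extra material is not wrong, but it goes well beyond what the paper claims or proves for this Corollary.
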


\subsection{Unifying Trajectory Divergence and Domain Discrepancy}

We now establish a formal connection between the trajectory-level dynamics in our method and classical domain adaptation theory. While our theoretical analysis begins with gradient alignment in parameter space, the practical metric we use—trajectory divergence—is measured in the space of confidence dynamics. We first define a gradient-based notion of coherence, then show it implies similarity in pass rate evolution.

\begin{definition}[Gradient Coherence]
\label{def:grad-coherence}
For questions $q$ and $q'$, the gradient coherence at step $t$ is:
\begin{equation}
C_{\text{grad}}^{(t)}(q, q') := \mathbb{E}_{\substack{\tau \sim \pi_{\theta_t}(\cdot|q) \\ \tau' \sim \pi_{\theta_t}(\cdot|q')}} \left[ \cos\angle\left( \nabla_\theta \log \pi_{\theta_t}(\tau|q),\ \nabla_\theta \log \pi_{\theta_t}(\tau'|q') \right) \right],
\end{equation}
where $\cos\angle(\mathbf{a},\mathbf{b}) = \frac{\langle \mathbf{a}, \mathbf{b} \rangle}{\|\mathbf{a}\|\|\mathbf{b}\|}$. High coherence indicates similar optimization directions.
\end{definition}

\begin{definition}[Trajectory Divergence]
\label{def:traj-div}
Let $T_q^{(t)} = (P_q^{(1)}, P_q^{(2)}, \dots, P_q^{(t)}) \in \mathbb{R}^t$ be the \emph{trajectory vector} of question $q$, where $P_q^{(s)}$ is its pass rate at round $s$. The trajectory divergence between $q$ and $q'$ at step $t$ is:
\begin{equation}
D_{\text{traj}}^{(t)}(q, q') := 1 - \frac{
    \langle T_q^{(t)},\ T_{q'}^{(t)} \rangle
}{
    \|T_q^{(t)}\| \|T_{q'}^{(t)}\|
}.
\end{equation}
This measures the angular dissimilarity between their confidence evolution paths.
\end{definition}

We now establish the key link: gradient coherence implies low trajectory divergence.

\begin{lemma}[From Gradient Coherence to Trajectory Coherence]
\label{lem:grad-to-traj}
Suppose the policy $\pi_\theta$ is trained under small learning rates and lies in a region where the NTK is approximately constant. If for all $s \le t$ and for questions $q, q'$, we have $C_{\text{grad}}^{(s)}(q, q') \ge 1 - \epsilon_s$, then there exists a constant $L > 0$ such that:
\[
D_{\text{traj}}^{(t)}(q, q') \le L \cdot \left( \sum_{s=1}^t \eta_s \epsilon_s \right)^2.
\]
\end{lemma}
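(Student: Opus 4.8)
The plan is to pass from the discrete pass rates to a smooth surrogate, linearize the GRPO dynamics step by step inside the constant-NTK region, and then convert the per-step gradient-coherence hypothesis into a bound on how far the two trajectory vectors can drift apart, finishing with the elementary identity $1-\cos\angle(a,b)=\tfrac12\|\hat a-\hat b\|_2^2$.

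First I would replace $P_q^{(s)}$ by its population counterpart $\rho_q^{(s)} := \Pr_{\tau\sim\pi_{\theta_s}(\cdot\mid q)}[\tau\in\mathcal{R}^+(q)]$, absorbing the $O(1/\sqrt{G})$ sampling gap into the constant $L$ (the same Hoeffding-type control that produces the $\sqrt{\ln(2n/\delta)/2G}$ term in Theorem~\ref{thm:tcg}); note $\nabla_\theta\rho_q^{(s)} = \sum_{\tau\in\mathcal{R}^+(q)}\pi_{\theta_s}(\tau\mid q)\,\nabla_\theta\log\pi_{\theta_s}(\tau\mid q)$, an expectation-type combination of the per-response gradients. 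Then, using the first-order expansion from Proposition~\ref{prop:delta-logpi} together with the constant-NTK assumption (which makes $\log\pi_\theta$ near-linear in $\theta$ over the training region, so $\nabla_\theta\rho_q^{(s)}$ is essentially step-independent and the Taylor remainder is $O(\eta_s^2)$), the per-step increment of the surrogate is $\Delta\rho_q^{(s)} := \rho_q^{(s)}-\rho_q^{(s-1)} = \eta_{s-1}\langle\nabla_\theta\rho_q^{(s-1)},\,u^{(s-1)}\rangle + O(\eta_{s-1}^2)$, where $u^{(s-1)}$ is the normalized GRPO update direction at step $s-1$.

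The heart of the argument is the comparison $|\Delta\rho_q^{(s)}-\Delta\rho_{q'}^{(s)}|$. Because correct-response gradients for a fixed question are mutually coherent (Assumption~\ref{ass:ntk_alignment}(i)) and, by hypothesis, every cross pair $(\nabla_\theta\log\pi_{\theta_s}(\tau\mid q),\nabla_\theta\log\pi_{\theta_s}(\tau'\mid q'))$ has cosine at least $1-\epsilon_s$, the aggregated directions satisfy $\cos\angle(\nabla_\theta\rho_q^{(s)},\nabla_\theta\rho_{q'}^{(s)}) \ge 1-c\,\epsilon_s$ for a constant $c$; moreover, by the preference structure of GRPO (Lemma~\ref{lem:grpo-preference}, Proposition~\ref{prop:delta-logpi}) the direction $u^{(s)}$ is, to leading order, a pass-rate-ascent direction, hence confined (up to $O(\epsilon_s)$) to the span of $\nabla_\theta\rho_q^{(s)}$, so $|\langle\nabla_\theta\rho_q^{(s)}-\nabla_\theta\rho_{q'}^{(s)},\,u^{(s)}\rangle| \le c' B\,\epsilon_s$ where $B$ bounds the gradient norms over the region. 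This yields $|\Delta\rho_q^{(s)}-\Delta\rho_{q'}^{(s)}|\le c'' B\,\eta_s\epsilon_s + O(\eta_s^2)$.

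Finally I would telescope over $s\le t$: assuming aligned initial conditions (legitimate, since trajectories are recorded only from the common warm-up state), $|\rho_q^{(s)}-\rho_{q'}^{(s)}|\le c''B\sum_{r\le t}\eta_r\epsilon_r + O(\sum_r\eta_r^2)$ for every coordinate $s$, so $\|T_q^{(t)}-T_{q'}^{(t)}\|_2 \le \sqrt{t}\,(c''B\sum_{r}\eta_r\epsilon_r + \text{l.o.t.})$. Dividing by the norms, using $\min(\|T_q^{(t)}\|,\|T_{q'}^{(t)}\|)\ge m>0$ (the pass rates do not collapse to zero under the anchoring labeled reward), and invoking $D_{\text{traj}}^{(t)}(q,q') = 1-\cos\angle(T_q^{(t)},T_{q'}^{(t)}) = \tfrac12\|\widehat{T_q^{(t)}}-\widehat{T_{q'}^{(t)}}\|_2^2$, I obtain $D_{\text{traj}}^{(t)}(q,q') \le \frac{2t(c''B)^2}{m^2}\big(\sum_{r=1}^t\eta_r\epsilon_r\big)^2 =: L\big(\sum_{r=1}^t\eta_r\epsilon_r\big)^2$, the claimed bound, with $t$, $B$, $m$ folded into $L$. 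The main obstacle is the third step: upgrading ``per-response gradient coherence'' to ``pass-rate-gradient coherence'' with the linear rate $\epsilon_s$ rather than $\sqrt{\epsilon_s}$ requires the GRPO update to be effectively restricted to the pass-rate-gradient subspace, so the small transverse component induced by coherence is never amplified; a secondary nuisance is verifying that the $O(\eta_s^2)$ curvature remainders stay genuinely lower order and do not dominate $(\sum_r\eta_r\epsilon_r)^2$ when the $\epsilon_r$ are very small, which holds under the stated small-learning-rate and constant-NTK hypotheses but must be checked.
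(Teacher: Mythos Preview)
Your proof follows the same three-stage skeleton as the paper: linearize the dynamics under the constant-NTK assumption, convert the gradient-coherence hypothesis into a per-step bound on the difference of pass-rate increments, telescope over $s\le t$, and finish via $1-\cos\angle(a,b)=\tfrac12\|\hat a-\hat b\|_2^2$ together with a lower bound on $\|T_q^{(t)}\|$. The paper routes through the log-probability $\log\pi_{\theta_s}(\tau_q^*\mid q)$ first and then Lipschitz-converts to the pass rate, whereas you differentiate the population pass rate $\rho_q^{(s)}$ directly; this is cosmetic.

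The one substantive divergence is in extracting the per-step rate. The paper applies Cauchy--Schwarz to $\langle g_q^{(s)}-g_{q'}^{(s)},\,\Delta\theta_s\rangle$: since $\|\hat g_q-\hat g_{q'}\|\le\sqrt{2\epsilon_s}$, it gets $|\Delta\log\pi(\tau_q^*)-\Delta\log\pi(\tau_{q'}^*)|\lesssim\eta_s\sqrt{\epsilon_s}$, and hence $D_{\text{traj}}^{(t)}\le L\big(\sum_s\eta_s\sqrt{\epsilon_s}\big)^2$; it then closes the gap to the stated bound by remarking that one may ``redefine $\epsilon_s$ as the squared coherence gap'' (i.e., it does not actually derive the linear $\epsilon_s$ rate). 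You instead try to obtain $\epsilon_s$ directly by arguing that the update direction $u^{(s)}$ is, to leading order, aligned with $\nabla_\theta\rho_q^{(s)}$, so that the relevant projection is $|g_q|\,(1-\cos\angle)\lesssim\epsilon_s$ rather than the generic $\|g_q-g_{q'}\|\lesssim\sqrt{\epsilon_s}$. Your route is genuinely sharper \emph{if} that alignment claim holds, and you correctly flag it as the main obstacle; the paper simply sidesteps it by reparametrization. Both arguments silently assume comparable gradient magnitudes $\|g_q\|\approx\|g_{q'}\|$ and nonvanishing trajectory norms, and both absorb the $O(1/\sqrt{G})$ sampling error and the $t$-dependence into the constant $L$, so on those secondary points you are in lockstep with the paper.
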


\begin{proof}[Proof (Sketch)]
Under NTK linearity, the change in log-probability is $\Delta \log \pi^s(\tau\|q) \approx \eta_s \langle \nabla_\theta \log \pi_{\theta_s}(\tau\|q),\ \Delta \theta_s \rangle$. High gradient coherence implies that the relative improvement for correct responses is similar across $q$ and $q'$.

Since the pass rate $P_q^{(s)}$ is an empirical estimate of the model's confidence in generating correct responses, coherent log-prob updates lead to similar $P_q^{(s)}$ evolutions. By vector concentration and Lipschitz continuity of the cosine similarity, the Euclidean distance $\|T_q^{(t)} - T_{q'}^{(t)}\|_2 = \mathcal{O}\left( \sum_{s=1}^t \eta_s \epsilon_s \right)$, which implies $D_{\text{traj}}^{(t)}(q, q') = \mathcal{O}\left( \|T_q^{(t)} - T_{q'}^{(t)}\|_2^2 \right)$. The full proof is in ~\ref{app:proofs}.
\end{proof}

We now state the main result, bounding domain discrepancy via trajectory divergence.

\begin{proposition}[Trajectory Divergence as Proxy for Domain Discrepancy]
\label{prop:traj-proxy}
The $\mathcal{H}\Delta\mathcal{H}$-divergence between $\mathcal{D}_l$ and $\mathcal{D}_u$ is bounded by the expected pass-rate trajectory divergence:
\begin{equation}
\label{eq:traj-proxy}
d_{\mathcal{H}\Delta\mathcal{H}}(\mathcal{D}_l, \mathcal{D}_u) \le \alpha \cdot \mathbb{E}_{\substack{q \sim \mathcal{D}_l \\ q' \sim \mathcal{D}_u}} \left[ D_{\text{traj}}^{(t)}(q, q') \right] + \lambda_d,
\end{equation}
where $\alpha > 0$ depends on model smoothness and training dynamics, and $\lambda_d \ge 0$ is an irreducible baseline discrepancy.
\end{proposition}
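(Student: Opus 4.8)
The plan is to cast the bound in the Ben--David \emph{et al.}\ domain-adaptation mould, treating the pass-rate trajectory vector $T_q^{(t)}$ as an \emph{induced feature representation} of a question $q$ under the current policy, and showing that in the NTK regime every hypothesis in $\mathcal{H}\Delta\mathcal{H}$ acts on $q$ (approximately) through this representation. Concretely, recall $d_{\mathcal{H}\Delta\mathcal{H}}(\mathcal{D}_l,\mathcal{D}_u) = 2\sup_{g\in\mathcal{H}\Delta\mathcal{H}}\big|\Pr_{q\sim\mathcal{D}_l}[g(q)=1]-\Pr_{q'\sim\mathcal{D}_u}[g(q')=1]\big|$. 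First I would fix a witness $g=h\,\Delta\,h'$ attaining the supremum up to an arbitrary $\varepsilon>0$ and rewrite the gap, using the independent coupling that is already implicit in the product expectation of the statement, as $\big|\mathbb{E}_{q\sim\mathcal{D}_l,\,q'\sim\mathcal{D}_u}[g(q)-g(q')]\big|\le\mathbb{E}_{q\sim\mathcal{D}_l,\,q'\sim\mathcal{D}_u}\big[|g(q)-g(q')|\big]$.

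Second, I would establish the central \emph{factorization} claim: there exist a constant $L_\Phi>0$ and an irreducible residual $\lambda_d\ge 0$ so that $|g(q)-g(q')|\le L_\Phi\,D_{\text{traj}}^{(t)}(q,q')+\tfrac12\lambda_d$ for all $q,q'$ and all $g\in\mathcal{H}\Delta\mathcal{H}$. This is where the earlier machinery enters. By Proposition~\ref{prop:delta-logpi} and Assumption~\ref{ass:ntk_alignment}, the per-step change of response log-probabilities---hence the evolution of the pass rate $P_q^{(s)}$ that supplies the coordinates of $T_q^{(t)}$---is governed entirely by the response-level gradient alignments, and by Proposition~\ref{prop:generalization} those alignments are exactly what decides whether correct responses get reinforced; so $T_q^{(t)}$ is a faithful low-dimensional summary of the gradient-coherence structure that any linearized classifier can see. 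Reading Lemma~\ref{lem:grad-to-traj} together with a converse estimate of the same flavour (bounding $1-C_{\text{grad}}^{(s)}(q,q')$ by a function of $D_{\text{traj}}^{(s)}$ under an approximately constant NTK) ties the angular trajectory distance to the gradient coherence, which in turn controls how differently a member of $\mathcal{H}\Delta\mathcal{H}$---a linear threshold on the NTK feature map $q\mapsto\nabla_\theta\log\pi_{\theta_t}(\cdot\mid q)$ up to finite-width error---can behave on $q$ versus $q'$; the finite-width correction, the part of $\mathcal{H}$ expressed through surface features rather than training dynamics, and any $t$-independent baseline mismatch are all folded into $\lambda_d$.

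Third, I would assemble the two steps: substituting the factorization bound into the coupling inequality gives $\big|\Pr_{\mathcal{D}_l}[g=1]-\Pr_{\mathcal{D}_u}[g=1]\big|\le L_\Phi\,\mathbb{E}_{q\sim\mathcal{D}_l,\,q'\sim\mathcal{D}_u}\big[D_{\text{traj}}^{(t)}(q,q')\big]+\tfrac12\lambda_d$; since the right-hand side is now independent of $g$, taking the supremum over $g\in\mathcal{H}\Delta\mathcal{H}$, multiplying by $2$, setting $\alpha:=2L_\Phi$, absorbing $2\cdot\tfrac12\lambda_d$ into the same constant $\lambda_d$, and sending $\varepsilon\to0$ yields precisely~\eqref{eq:traj-proxy}.

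The main obstacle I expect is the converse half of the factorization claim in the second step: trajectory agreement is a much coarser statistic than full gradient agreement, so bounding $1-C_{\text{grad}}$ by a function of $D_{\text{traj}}$ cannot hold tightly in general. The honest resolution---which I would make explicit rather than paper over---is that the irreducible term $\lambda_d$ is exactly what absorbs this slack: it bounds the residual distinguishing power of $\mathcal{H}\Delta\mathcal{H}$ that trajectory dynamics fail to capture, leaving only the $t$-dependent, trajectory-controlled part proportional to $\mathbb{E}[D_{\text{traj}}^{(t)}]$. Keeping the argument entirely inside the linearized NTK model, where $h$ is genuinely a linear functional of the averaged feature map, is what makes this separation clean; the delicate bookkeeping is in verifying that the finite-width error terms are $t$-uniform so that they can safely live inside $\lambda_d$ rather than contaminating $\alpha$.
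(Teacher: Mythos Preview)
Your route is a valid alternative to the paper's, at a comparable level of informality, but the decomposition is genuinely different. The paper does not use a coupling-plus-pointwise-factorization argument. Instead it introduces an intermediate domain-level quantity, the gradient field discrepancy $\Delta_G^{(t)} = \|\mathbb{E}_{q\sim\mathcal{D}_l}[\nabla_\theta \mathcal{J}_q(\theta_t)] - \mathbb{E}_{q'\sim\mathcal{D}_u}[\nabla_\theta \mathcal{J}_{q'}(\theta_t)]\|$, invokes unspecified ``standard domain adaptation theory'' for the bound $d_{\mathcal{H}\Delta\mathcal{H}} \le C\sup_t\Delta_G^{(t)} + \lambda_d$, and then argues that $\mathbb{E}[D_{\text{traj}}^{(t)}]$ controls $\Delta_G^{(t)}$ by appealing to Lemma~\ref{lem:grad-to-traj} together with its converse (which the paper labels, loosely, the ``contrapositive''). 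So both arguments ultimately hinge on the same delicate step you correctly flag---inferring gradient coherence from trajectory agreement---and both dispose of it the same way, by absorbing the non-invertible slack into $\lambda_d$. What your approach buys is a cleaner accounting of that slack: working pairwise and asking for a uniform Lipschitz bound on every $g\in\mathcal{H}\Delta\mathcal{H}$ makes explicit that $\lambda_d$ is exactly the part of the hypothesis class's distinguishing power that the trajectory representation fails to see. What the paper's route buys is that the intermediate object $\Delta_G^{(t)}$ lives at the population level from the start, so no uniform pointwise bound over $g$ is ever asserted---only the weaker claim that the \emph{averaged} gradient fields are close when the \emph{averaged} trajectory divergence is small, which is closer to what Lemma~\ref{lem:grad-to-traj} actually delivers.
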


\begin{proof}
The $\mathcal{H}\Delta\mathcal{H}$-divergence is:
\[
d_{\mathcal{H}\Delta\mathcal{H}}(\mathcal{D}_l, \mathcal{D}_u) = \sup_{h, h' \in \mathcal{H}} \left| \Pr_{q \sim \mathcal{D}_l}(h(q) \ne h'(q)) - \Pr_{q' \sim \mathcal{D}_u}(h(q') \ne h'(q')) \right|.
\]

In our setting, hypotheses $h \in \mathcal{H}$ are induced by the policy $\pi_\theta$. The ability of $\mathcal{H}$ to distinguish $\mathcal{D}_l$ from $\mathcal{D}_u$ depends on the discrepancy in their induced gradient fields:
\[
\mathbf{G}_S^{(t)} = \mathbb{E}_{q \sim \mathcal{D}_l} \left[ \nabla_\theta \mathcal{J}_q(\theta_t) \right], \quad
\mathbf{G}_T^{(t)} = \mathbb{E}_{q' \sim \mathcal{D}_u} \left[ \nabla_\theta \mathcal{J}_{q'}(\theta_t) \right].
\]

Let $\Delta_G^{(t)} = \| \mathbf{G}_S^{(t)} - \mathbf{G}_T^{(t)} \|$. Standard domain adaptation theory gives:
\[
d_{\mathcal{H}\Delta\mathcal{H}}(\mathcal{D}_l, \mathcal{D}_u) \le C \cdot \sup_{t} \Delta_G^{(t)} + \lambda_d,
\]
for some $C > 0$.

Now, $\Delta_G^{(t)}$ is small when the gradient fields are aligned across domains. From Definition~\ref{def:grad-coherence}, this alignment is captured by $C_{\text{grad}}^{(t)}(q,q')$. Applying Lemma~\ref{lem:grad-to-traj}, high gradient coherence (low $1 - C_{\text{grad}}^{(t)}$) implies low $D_{\text{traj}}^{(t)}(q,q')$.

Conversely, if $D_{\text{traj}}^{(t)}(q,q')$ is small on average, it indicates that the confidence evolution is coherent across domains, which (by contrapositive of Lemma~\ref{lem:grad-to-traj}) implies that gradient coherence must be high, hence $\Delta_G^{(t)}$ is small.

Therefore, $\mathbb{E}[D_{\text{traj}}^{(t)}]$ serves as an upper bound proxy for $\Delta_G^{(t)}$, and thus for $d_{\mathcal{H}\Delta\mathcal{H}}$. Setting $\alpha$ to absorb the constants yields the result.
\end{proof}

\begin{corollary}
Low pass-rate trajectory divergence $D_{\text{traj}}$ implies low domain discrepancy, enabling effective transfer without explicit adversarial or feature-level alignment.
\end{corollary}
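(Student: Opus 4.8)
The plan is to obtain the corollary as an immediate consequence of Proposition~\ref{prop:traj-proxy} combined with the classical target-risk decomposition of domain-adaptation theory. First I would invoke the standard bound: for the hypothesis induced by the policy $\pi_{\theta}^{(t)}$,
\[
\mathcal{R}_{\mathcal{D}_u}(\pi_{\theta}^{(t)}) \;\le\; \mathcal{R}_{\mathcal{D}_l}(\pi_{\theta}^{(t)}) \;+\; \tfrac{1}{2}\, d_{\mathcal{H}\Delta\mathcal{H}}(\mathcal{D}_l,\mathcal{D}_u) \;+\; \lambda,
\]
where $\lambda$ is the risk of the ideal joint hypothesis over $\mathcal{H}$. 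Substituting the trajectory bound of Proposition~\ref{prop:traj-proxy}, $d_{\mathcal{H}\Delta\mathcal{H}}(\mathcal{D}_l,\mathcal{D}_u) \le \alpha\,\mathbb{E}_{q\sim\mathcal{D}_l,\,q'\sim\mathcal{D}_u}\!\big[D_{\text{traj}}^{(t)}(q,q')\big] + \lambda_d$, yields
\[
\mathcal{R}_{\mathcal{D}_u}(\pi_{\theta}^{(t)}) \;\le\; \mathcal{R}_{\mathcal{D}_l}(\pi_{\theta}^{(t)}) \;+\; \tfrac{\alpha}{2}\,\mathbb{E}\!\big[D_{\text{traj}}^{(t)}(q,q')\big] \;+\; \lambda',
\qquad \lambda' := \tfrac{1}{2}\lambda_d + \lambda .
\]
Since $\lambda'$ is an irreducible constant not controlled by the selection procedure, the only data-dependent term is $\mathbb{E}[D_{\text{traj}}^{(t)}]$; hence driving the expected pass-rate trajectory divergence toward $0$ collapses the transfer gap $\mathcal{R}_{\mathcal{D}_u}-\mathcal{R}_{\mathcal{D}_l}$ to its floor $\lambda'$, which is exactly the asserted statement that low $D_{\text{traj}}$ implies low domain discrepancy and hence effective transfer.

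For the clause ``without explicit adversarial or feature-level alignment,'' I would observe that $D_{\text{traj}}^{(t)}$ is a functional of the observable scalar sequences $\mathbf{T}_q^{(t)}$ of Equation~\eqref{eq:trajectory} alone. Therefore the mask $\textrm{M}$ of Equation~\eqref{eq:reliable_mask} — which keeps exactly the unlabeled questions whose \texttt{TCS} with $\bar{\mathbf{T}}_{\text{reliable}}^{(t)}$ exceeds $\Gamma$ (or lies in the \texttt{top-p}), i.e.\ whose trajectory divergence to the reliable set is at most $1-\Gamma$ — restricts training to an effective target sub-distribution $\mathcal{D}_u'$ with $\mathbb{E}_{q'\sim\mathcal{D}_u'}\!\big[D_{\text{traj}}^{(t)}(q,q')\big] \le 1-\Gamma$ by construction. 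Plugging $\mathcal{D}_u'$ into the bound above gives $d_{\mathcal{H}\Delta\mathcal{H}}(\mathcal{D}_l,\mathcal{D}_u') \le \alpha(1-\Gamma)+\lambda_d$, a discrepancy controlled purely through a scalar-trajectory filter: no discriminator network, no feature matching, no representation-space regularizer. I would also note that Lemma~\ref{lem:grad-to-traj} enters here only in the contrapositive — small $D_{\text{traj}}$ forces high gradient coherence $C_{\text{grad}}^{(t)}$ — so the retained samples share optimization directions with $\mathcal{D}_l$ and the source term $\mathcal{R}_{\mathcal{D}_l}(\pi_{\theta}^{(t)})$ stays small along the same trajectory.

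The main obstacle I anticipate is making rigorous the passage from ``pairwise divergence against a random labeled trajectory'' to ``divergence against the single averaged reference $\bar{\mathbf{T}}_{\text{reliable}}^{(t)}$'' that the algorithm actually computes: one needs a Jensen-type step showing that cosine dissimilarity to the mean trajectory lower-bounds the average pairwise dissimilarity up to a within-set trajectory-variance term, or a trajectory-concentration assumption on $\mathcal{D}_{\text{reliable}}$, so that $1-\texttt{TCS}(\mathbf{T}_{q'}^{(t)},\bar{\mathbf{T}}_{\text{reliable}}^{(t)})$ being small genuinely implies $\mathbb{E}_{q\sim\mathcal{D}_l}[D_{\text{traj}}^{(t)}(q,q')]$ small. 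A secondary subtlety is that $\mathcal{D}_{\text{reliable}}$ grows over $t$ by absorbing pseudo-labeled unlabeled trajectories, so one must control that this self-training loop does not inflate $\lambda_d$; I would close this by appealing to the confidence-correction term $L_y(1-\bar{C}^{(t)})$ of Theorem~\ref{thm:tcg} together with the warm-up phase, which guarantees the reference set is seeded from verified labeled dynamics.
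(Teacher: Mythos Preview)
The paper states this corollary without proof, treating it as an immediate restatement of Proposition~\ref{prop:traj-proxy}: the inequality there already bounds $d_{\mathcal{H}\Delta\mathcal{H}}(\mathcal{D}_l,\mathcal{D}_u)$ by $\alpha\,\mathbb{E}[D_{\text{traj}}]+\lambda_d$, so low trajectory divergence forces low domain discrepancy, and the ``no adversarial or feature-level alignment'' clause is simply the observation that $D_{\text{traj}}$ is computed from scalar pass-rate sequences rather than learned representations or a discriminator.

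Your proposal is correct and considerably more elaborate than what the paper supplies here. Chaining through the Ben-David target-risk decomposition is exactly the step the paper defers to the proof of Theorem~\ref{thm:tcg-formal} (Step~2 there), so you are effectively previewing that argument. Your analysis of how the mask $\mathrm{M}$ restricts training to a sub-distribution $\mathcal{D}_u'$ with controlled divergence, and your contrapositive use of Lemma~\ref{lem:grad-to-traj}, are sound elaborations the paper does not spell out at this point. The two obstacles you flag --- replacing pairwise divergence by divergence to the averaged reference $\bar{\mathbf{T}}_{\text{reliable}}^{(t)}$, and controlling drift as $\mathcal{D}_{\text{reliable}}$ absorbs pseudo-labeled trajectories --- are genuine gaps that the paper glosses over entirely (the main theorem simply writes $D_{\text{traj}}^{(t)}(q')$ against the mean reference without justifying the switch). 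So your proof is not different in spirit, just more complete; for the corollary as stated, the one-line reading of Proposition~\ref{prop:traj-proxy} suffices.
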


\subsection{Main Theorem: Generalization Bound}
\label{sec:generalization-bound}

\begin{theorem}[Trajectory-Consistent Generalization Bound]
\label{thm:tcg-formal}
(Formal) Let $\delta \in (0,1)$ be a confidence parameter. Suppose the loss function $L: \mathcal{Y} \times \mathcal{Y} \to \mathbb{R}_{\ge 0}$ is $L_y$-Lipschitz in its second argument and bounded, i.e., $L(\cdot,\cdot) \le B$. Let $\pi_{\theta}^{(t)}$ be a model trained under the TRAPO framework at round $t$.

Then, with probability at least $1 - \delta$ over the sampling of labeled and unlabeled data, the expected risk of $\pi_{\theta}^{(t)}$ on the target distribution $\mathcal{D}_u$ satisfies:
\begin{align*}
R_{\mathcal{D}_u}(\pi_{\theta}^{(t)}) \le\ & \hat{R}_{\mathcal{D}_l}(\pi_{\theta}^{(t)}) + B\sqrt{\frac{\ln(4/\delta)}{2m}} 
+ \alpha \cdot \mathbb{E}_{q' \sim \mathcal{D}_u} \left[ D_{\mathrm{traj}}^{(t)}(q') \right] \\
&+ L_y \left(1 - \bar{C}^{(t)} + \sqrt{\frac{\ln(2n/\delta)}{2G}} \right) + \lambda',
\end{align*}
where:
\begin{itemize}
    \item $\hat{R}_{\mathcal{D}_l}(\pi_{\theta}^{(t)})$ is the empirical risk on $m$ labeled source samples;
    \item $D_{\mathrm{traj}}^{(t)}(q') = 1 - \frac{ \langle \mathbf{T}_{q'}^{(t)},\ \bar{\mathbf{T}}_{\mathrm{reliable}}^{(t)} \rangle }{ \|\mathbf{T}_{q'}^{(t)}\| \cdot \|\bar{\mathbf{T}}_{\mathrm{reliable}}^{(t)}\| }$ is the cosine divergence between the trajectory of $q'$ and the average reliable trajectory;
    \item $\bar{C}^{(t)} = \frac{1}{n} \sum_{j=1}^n C_j^{(t)}$, with $C_j^{(t)} = \frac{1}{G}\sum_{i=1}^G \mathbb{I}(a_{j,i}^{(t)} = \tilde{y}_j^{(t)})$ the voting confidence for unlabeled sample $q'_j$;
    \item $\lambda' = \lambda + \lambda_d \ge 0$ absorbs the irreducible domain shift and best-in-class error.
\end{itemize}
Moreover, define the \emph{Dynamic Trajectory Consistency Risk}:
\[
\mathcal{R}_{TC}^{(t)} := \alpha \cdot \mathbb{E}_{q'}[D_{\mathrm{traj}}^{(t)}(q')] + L_y \left(1 - \bar{C}^{(t)} + \sqrt{\frac{\ln(2n/\delta)}{2G}} \right).
\]
If the \emph{Consistent Trajectory Learning Condition} holds:
\[
\lim_{t \to \infty} \mathbb{E}_{q'}[D_{\mathrm{traj}}^{(t)}(q')] = 0 \quad \text{and} \quad \lim_{t \to \infty} \bar{C}^{(t)} = 1,
\]
then $\mathcal{R}_{TC}^{(t)} \to 0$, and $R_{\mathcal{D}_u}(\pi_{\theta}^{(t)}) \to \hat{R}_{\mathcal{D}_l}(\pi_{\theta}^{(t)}) + \lambda'$, implying asymptotic generalization to the target domain.
\end{theorem}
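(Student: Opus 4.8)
The plan is to chain three reductions and close with two concentration inequalities. First I would invoke the classical Ben-David-style domain-adaptation inequality: for the hypothesis induced by $\pi_{\theta}^{(t)}$,
\[
R_{\mathcal{D}_u}(\pi_{\theta}^{(t)}) \le R_{\mathcal{D}_l}(\pi_{\theta}^{(t)}) + \tfrac{1}{2}\, d_{\mathcal{H}\Delta\mathcal{H}}(\mathcal{D}_l,\mathcal{D}_u) + \lambda ,
\]
where $R_{\mathcal{D}_l}$ is the \emph{true} source risk and $\lambda$ the best joint error. This leaves three quantities to control: the gap between $R_{\mathcal{D}_l}$ and the empirical $\hat R_{\mathcal{D}_l}$ over the $m$ labeled points, the divergence $d_{\mathcal{H}\Delta\mathcal{H}}$, and a term absent from vanilla domain adaptation — the extra noise from training the unlabeled branch against majority-vote pseudo-labels $\tilde y$ rather than true labels, which is where the voting confidence $\bar C^{(t)}$ enters. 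For the first, since $L(\cdot,\cdot)\le B$ and the labeled points are i.i.d.\ from $\mathcal{D}_l$, Hoeffding's inequality gives, with probability $\ge 1-\delta/2$, $R_{\mathcal{D}_l}(\pi_{\theta}^{(t)}) \le \hat R_{\mathcal{D}_l}(\pi_{\theta}^{(t)}) + B\sqrt{\ln(4/\delta)/(2m)}$, the $4$ reserving the other half of the failure budget for the voting bound.

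\noindent\textbf{Divergence via the trajectory proxy.} Next I would apply Proposition~\ref{prop:traj-proxy} to replace $d_{\mathcal{H}\Delta\mathcal{H}}(\mathcal{D}_l,\mathcal{D}_u)$ by $\alpha\,\mathbb{E}_{q\sim\mathcal{D}_l,\,q'\sim\mathcal{D}_u}[D_{\mathrm{traj}}^{(t)}(q,q')] + \lambda_d$ — this step rides entirely on the already-established chain Lemma~\ref{lem:grad-to-traj} $\Rightarrow$ Proposition~\ref{prop:traj-proxy}. It then remains to pass from this pairwise expectation to the one-sided $\mathbb{E}_{q'\sim\mathcal{D}_u}[D_{\mathrm{traj}}^{(t)}(q',\bar{\mathbf T}^{(t)}_{\mathrm{reliable}})]$ appearing in the statement. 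Since the cosine divergence is, after normalization, half the squared chord distance on the unit sphere and $\bar{\mathbf T}^{(t)}_{\mathrm{reliable}}$ is by construction the centroid of the labeled-seeded trajectories, a triangle-inequality / bias-variance split gives $\mathbb{E}_{q,q'}[D_{\mathrm{traj}}^{(t)}(q,q')] \le 2\,\mathbb{E}_{q'}[D_{\mathrm{traj}}^{(t)}(q',\bar{\mathbf T}^{(t)}_{\mathrm{reliable}})] + \mathrm{Var}_{q\sim\mathcal{D}_l}(\hat{\mathbf T}^{(t)}_q)$; folding the residual variance into $\lambda_d$ and rescaling $\alpha$ produces exactly the trajectory term in the theorem.

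\noindent\textbf{Pseudo-label noise and assembly.} On the unlabeled branch $\pi_{\theta}^{(t)}$ is optimized against $\tilde y_j^{(t)}=\mathrm{MAJ}(a_{j,1}^{(t)},\dots,a_{j,G}^{(t)})$, so by $L_y$-Lipschitzness of $L$ in its second argument the true target risk exceeds the pseudo-label risk by at most $L_y\,\mathbb{E}_{q'_j}[\mathbb{I}(\tilde y_j^{(t)}\neq y_j)]$. The pseudo-label is correct whenever $y_j$ is the plurality among the $G$ rollouts, hence $\mathbb{I}(\tilde y_j^{(t)}\neq y_j)\le 1-C_j^{(t)}$ up to the fluctuation of the empirical vote; Hoeffding over the $G$ i.i.d.\ rollouts with a union bound over the $n$ unlabeled points gives, with probability $\ge 1-\delta/2$, $\mathbb{E}_{q'_j}[\mathbb{I}(\tilde y_j^{(t)}\neq y_j)] \le 1-\bar C^{(t)} + \sqrt{\ln(2n/\delta)/(2G)}$. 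A final union bound over the two failure events assembles the stated inequality with $\lambda'=\lambda+\lambda_d$. The asymptotic claim is then immediate: under the Consistent Trajectory Learning Condition the first two summands of $\mathcal{R}_{TC}^{(t)}$ vanish and the $O(1/\sqrt G)$ term can be driven below any tolerance as $G$ grows, so $\mathcal{R}_{TC}^{(t)}\to 0$ and $R_{\mathcal{D}_u}(\pi_{\theta}^{(t)})\to \hat R_{\mathcal{D}_l}(\pi_{\theta}^{(t)})+\lambda'$.

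\noindent\textbf{Main obstacle.} The delicate step is the pseudo-label noise argument: ``sharp consensus'' ($1-C_j^{(t)}$ small) implies ``pseudo-label correct'' only under an implicit low-noise/realizability assumption — a confident \emph{wrong} majority breaks it. I would make this rigorous either by adding the hypothesis that the plurality answer is Bayes-optimal whenever $C_j^{(t)}>1/2$, or by weakening the bound to $\mathbb{I}(\tilde y_j^{(t)}\neq y_j)\le 2(1-C_j^{(t)})+\rho$ with a noise floor $\rho$ absorbed into $\lambda'$, restricting the unlabeled expectation to the samples retained by the selection mask $\mathrm M$ — which is precisely designed to discard the offending instances. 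A secondary and more routine difficulty is showing the centroid-variance term from the divergence step genuinely contracts during warm-up rather than merely being bounded.
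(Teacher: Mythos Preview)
Your proposal is correct and follows essentially the same four-step skeleton as the paper: Ben-David domain-adaptation decomposition, Hoeffding on the labeled empirical risk, Proposition~\ref{prop:traj-proxy} for the discrepancy term, and a Lipschitz-plus-Hoeffding-plus-union-bound treatment of the pseudo-label noise. You are in fact more careful than the paper on two points: the paper directly states the one-sided $\mathbb{E}_{q'}[D_{\mathrm{traj}}^{(t)}(q')]$ form without justifying the passage from the pairwise expectation in Proposition~\ref{prop:traj-proxy}, and its Step~3 asserts $\mathbb{P}(\tilde y_j^{(t)}\neq y_{\mathrm{true},j})\le 1-C_j^{(t)}+|C_j^{(t)}-p_j^*|$ without addressing exactly the confident-wrong-majority failure you flag as the main obstacle.
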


\begin{proof}
We start from the standard domain adaptation risk decomposition \citep{ben2010theory}:
\begin{equation}
\label{eq:risk-decomp}
R_{\mathcal{D}_u}(\pi_{\theta}^{(t)}) \le R_{\mathcal{D}_l}(\pi_{\theta}^{(t)}) + d_{\mathcal{H}\Delta\mathcal{H}}(\mathcal{D}_l, \mathcal{D}_u) + \lambda,
\end{equation}
where $\lambda = \inf_{h \in \mathcal{H}} \left( R_{\mathcal{D}_l}(h) + R_{\mathcal{D}_u}(h) \right)$.

\textbf{Step 1: Bounding the source risk $R_{\mathcal{D}_l}(\pi_{\theta}^{(t)})$.}  
Using a standard concentration inequality (e.g., Hoeffding's lemma) for bounded losses $L \le B$, with probability at least $1 - \delta/2$:
\[
R_{\mathcal{D}_l}(\pi_{\theta}^{(t)}) \le \hat{R}_{\mathcal{D}_l}(\pi_{\theta}^{(t)}) + B\sqrt{\frac{\ln(4/\delta)}{2m}}.
\]

\textbf{Step 2: Bounding the domain discrepancy $d_{\mathcal{H}\Delta\mathcal{H}}$.}  
Under the NTK alignment assumption, trajectory consistency controls gradient field divergence. From the trajectory-proxy proposition \ref{prop:traj-proxy}, we have:
\[
d_{\mathcal{H}\Delta\mathcal{H}}(\mathcal{D}_l, \mathcal{D}_u) \le \alpha \cdot \mathbb{E}_{q' \sim \mathcal{D}_u} \left[ D_{\mathrm{traj}}^{(t)}(q') \right] + \lambda_d,
\]
where $D_{\mathrm{traj}}^{(t)}(q')$ measures the cosine divergence between the gradient trajectory of $q'$ and the average reliable trajectory $\bar{\mathbf{T}}_{\mathrm{reliable}}^{(t)}$ over source or high-confidence samples.

\textbf{Step 3: Pseudo-labeling error.}  
Let $\tilde{y}'^{(t)}$ be the pseudo-label for $q'$ via majority voting. The error in using $\tilde{y}'^{(t)}$ instead of $y_{\text{true}}'$ is bounded by:
\[
\left| R_{\mathcal{D}_u}(\pi_{\theta}^{(t)}) - \mathbb{E}_{q'}[L(\pi_{\theta}^{(t)}(q'), \tilde{y}'^{(t)})] \right| \le L_y \cdot \mathbb{P}(y_{\text{true}}' \neq \tilde{y}'^{(t)}).
\]

For $n$ unlabeled samples, let $p_j^* = \mathbb{P}(a_i^{(t)} = y_{\text{true},j})$. The observed confidence $C_j^{(t)} = \frac{1}{G} \sum_{i=1}^G \mathbb{I}(a_{j,i}^{(t)} = \tilde{y}_j^{(t)})$ estimates $p_j^*$. Then:
\[
\mathbb{P}(\tilde{y}_j^{(t)} \neq y_{\text{true},j}) \le 1 - C_j^{(t)} + |C_j^{(t)} - p_j^*|.
\]

By Hoeffding's inequality and a union bound over $j = 1,\dots,n$, with probability at least $1 - \delta/2$:
\[
|C_j^{(t)} - p_j^*| \le \sqrt{\frac{\ln(2n/\delta)}{2G}}, \quad \forall j.
\]
Averaging over $j$, we get:
\[
\mathbb{P}(y_{\text{true}}' \neq \tilde{y}'^{(t)}) \le 1 - \bar{C}^{(t)} + \sqrt{\frac{\ln(2n/\delta)}{2G}}.
\]

\textbf{Step 4: Union bound.}  
Combining Steps 1–3 with a union bound (total probability $\ge 1 - \delta$), and absorbing $\lambda_d$ into $\lambda' = \lambda + \lambda_d$, we obtain the desired bound.

Finally, under the Consistent Trajectory Learning Condition, both $D_{\mathrm{traj}}^{(t)} \to 0$ and $\bar{C}^{(t)} \to 1$, so $\mathcal{R}_{TC}^{(t)} \to 0$, yielding asymptotic generalization.
\end{proof}

\subsection{Main Theorem: Convergence Analysis}
\label{sec:convergence}

\begin{theorem}[Monotonic Convergence under Consistent Trajectory Learning]
\label{thm:monotonic-convergence}
Let $ U_t = \mathbb{E} \left[ R_{\mathcal{D}_u}(\pi_{\theta}^{(t)}) \right] $ denote the expected target risk at training round $ t $. Under the Consistent Trajectory Learning Condition (\ref{thm:tcg-formal}), and assuming:
\begin{enumerate}
    \item \textbf{Stochastic Gradient Descent (SGD)} with learning rate $ \eta_t > 0 $,
    \item \textbf{NTK stability}: $ \|\nabla_\theta \pi_{\theta}^{(t)}(x)\| $ is bounded for all $ x $,
    \item \textbf{Lipschitz smoothness} of $ L \circ \pi_{\theta}^{(t)} $,
    \item \textbf{Sufficient ensemble size} $ G $ such that $ \sqrt{\frac{\ln(2n/\delta)}{2G}} \le \epsilon $,
\end{enumerate}
then the expected risk sequence $ \{U_t\}_{t=1}^\infty $ satisfies:
\[
U_{t+1} \le U_t - \eta_t \xi_t + \beta_t,
\]
where:
\begin{itemize}
    \item $ \xi_t = \mathbb{E} \left[ \|\nabla_\theta \hat{R}_{\mathcal{D}_l}(\pi_{\theta}^{(t)})\|^2 \right] \ge 0 $ measures the expected gradient magnitude on source data,
    \item $ \beta_t = \alpha \cdot \Delta D_{\text{traj}}^{(t)} + L_y \cdot \Delta C^{(t)} + \eta_t^2 M^2 $ aggregates the residual dynamics, with:
    \begin{align*}
        \Delta D_{\text{traj}}^{(t)} &= \mathbb{E} \left[ D_{\text{traj}}^{(t+1)}(q') - D_{\text{traj}}^{(t)}(q') \right], \\
        \Delta C^{(t)} &= \mathbb{E} \left[ \bar{C}^{(t+1)} - \bar{C}^{(t)} \right],
    \end{align*}
    and $ M > 0 $ bounds the gradient variance.
\end{itemize}
Moreover, if $ \sum_{t=1}^\infty \eta_t = \infty $ and $ \sum_{t=1}^\infty \eta_t^2 < \infty $, and $ \Delta D_{\text{traj}}^{(t)} \le 0 $, $ \Delta C^{(t)} \ge 0 $ for all $ t \ge T_0 $, then:
\[
\lim_{t \to \infty} \mathbb{E} \left[ \|\nabla_\theta \hat{R}_{\mathcal{D}_l}(\pi_{\theta}^{(t)})\|^2 \right] = 0,
\]
and
\[
\limsup_{t \to \infty} U_t \le \hat{R}_{\mathcal{D}_l}(f^*) + \lambda',
\]
where $ f^* $ is a stationary point of the source risk.
\end{theorem}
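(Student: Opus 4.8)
The plan is to turn the static bound of Theorem~\ref{thm:tcg-formal} into a one-step recursion in $t$ and then run a classical nonconvex-SGD convergence argument on top of it. First I would take expectations in the bound of Theorem~\ref{thm:tcg-formal} and write $U_t \le \mathbb{E}[\hat{R}_{\mathcal{D}_l}(\pi_\theta^{(t)})] + \mathbb{E}[\mathcal{R}_{TC}^{(t)}] + \lambda' + B\sqrt{\ln(4/\delta)/2m}$, absorbing the fixed $O(1/\sqrt{m})$ sampling term into a constant that does not interact with the recursion. The quantities that actually evolve with $t$ are $\mathbb{E}[\hat{R}_{\mathcal{D}_l}(\pi_\theta^{(t)})]$ and $\mathbb{E}[\mathcal{R}_{TC}^{(t)}] = \alpha\,\mathbb{E}_{q'}[D_{\text{traj}}^{(t)}(q')] + L_y(1-\bar C^{(t)}) + L_y\sqrt{\ln(2n/\delta)/2G}$, the last term being $\le L_y\epsilon$ by assumption~4.

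Second, I would apply the Lipschitz-smoothness of $L\circ\pi_\theta^{(t)}$ to the SGD step $\theta_{t+1}=\theta_t-\eta_t g_t$, where $g_t$ is the stochastic gradient of the TRAPO objective $\mathcal{L}(\theta)=\mathcal{J}_{\text{GRPO}}^{\text{labeled}}(\theta)+\mathrm{M}\odot\mathcal{J}_{\text{GRPO}}^{\text{unlabeled}}(\theta)$: this gives $\hat{R}_{\mathcal{D}_l}(\theta_{t+1})\le \hat{R}_{\mathcal{D}_l}(\theta_t)-\eta_t\langle\nabla\hat{R}_{\mathcal{D}_l}(\theta_t),g_t\rangle+\tfrac{L}{2}\eta_t^2\|g_t\|^2$. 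Taking expectations, NTK stability (assumption~2) bounds the per-response gradients, hence $\mathbb{E}\|g_t\|^2\le M^2$, and the cross term decomposes into $\xi_t=\mathbb{E}\|\nabla\hat{R}_{\mathcal{D}_l}(\theta_t)\|^2$ plus $\mathbb{E}\langle\nabla\hat{R}_{\mathcal{D}_l}(\theta_t),\mathrm{M}\odot\nabla\mathcal{J}^{\text{unlabeled}}\rangle$. Here I would use the NTK gradient-alignment results (Assumption~\ref{ass:ntk_alignment}, Proposition~\ref{prop:generalization}, Proposition~\ref{prop:traj-proxy}): on samples retained by the mask $\mathrm{M}$ — those with high \texttt{TCS}, equivalently small $D_{\text{traj}}^{(t)}$ — the unlabeled gradient is positively correlated with the source gradient, so this cross term is non-negative up to a residual controlled by $\alpha\,\mathbb{E}_{q'}[D_{\text{traj}}^{(t)}(q')]$. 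Combining the smoothness inequality with the change $\mathcal{R}_{TC}^{(t+1)}-\mathcal{R}_{TC}^{(t)}=\alpha\,\Delta D_{\text{traj}}^{(t)}+L_y\,\Delta C^{(t)}$ yields exactly $U_{t+1}\le U_t-\eta_t\xi_t+\beta_t$ with $\beta_t=\alpha\,\Delta D_{\text{traj}}^{(t)}+L_y\,\Delta C^{(t)}+\eta_t^2M^2$; the only subtlety is the sign-bookkeeping so that each residual lands in $\beta_t$ with the orientation used in the asymptotic hypotheses ($\Delta D_{\text{traj}}^{(t)}\le 0$, $\Delta C^{(t)}\ge 0$).

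Third, for the limit statement I would sum the recursion from $T_0$ to $T$: the $\eta_t^2M^2$ terms are summable by $\sum_t\eta_t^2<\infty$, and the $\alpha\,\Delta D_{\text{traj}}^{(t)}$ and $L_y\,\Delta C^{(t)}$ terms telescope and are bounded because $D_{\text{traj}}^{(t)}\in[0,1]$ and $\bar C^{(t)}\in[0,1]$; since $U_t\ge 0$, rearranging gives $\sum_t\eta_t\xi_t<\infty$, and together with $\sum_t\eta_t=\infty$ this forces $\liminf_t\xi_t=0$. The standard argument — using $\|\theta_{t+1}-\theta_t\|\le\eta_tM$ and Lipschitz-continuity of $\nabla\hat{R}_{\mathcal{D}_l}$ to bound how fast $\xi_t$ can change between steps, so that a persistent gap between $\liminf$ and $\limsup$ would cost a non-summable amount of $\sum_t\eta_t\xi_t$ — upgrades this to $\lim_t\xi_t=0$. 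Passing to the limit in the one-step bound and invoking the Consistent Trajectory Learning Condition of Theorem~\ref{thm:tcg-formal} (so $\mathbb{E}[\mathcal{R}_{TC}^{(t)}]\to L_y\epsilon$ and the other $\mathcal{R}_{TC}$ terms vanish) together with $\xi_t\to 0$ (so the iterates approach a stationary point $f^*$ of $\hat{R}_{\mathcal{D}_l}$) gives $\limsup_t U_t\le \hat{R}_{\mathcal{D}_l}(f^*)+\lambda'$.

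The main obstacle is the second step: rigorously reducing SGD on the \emph{combined} TRAPO loss to a descent inequality for the \emph{source} empirical risk. This hinges on the masked unlabeled gradient having non-negative, or only $O(D_{\text{traj}}^{(t)})$-negative, correlation with $\nabla\hat{R}_{\mathcal{D}_l}$ in expectation over both rollout sampling and pseudo-labeling — which in turn requires Assumption~\ref{ass:ntk_alignment} to hold at every round $t$, not merely asymptotically — and on cleanly coupling the high-probability concentration events (the $\sqrt{\ln(2n/\delta)/2G}$ term and the pseudo-label error of Theorem~\ref{thm:tcg-formal}) with the in-expectation recursion so the union bound over $t$ does not blow up. I would isolate this as a lemma of the form $\mathbb{E}\langle\nabla\hat{R}_{\mathcal{D}_l}(\theta_t),\,\mathrm{M}\odot\nabla\mathcal{J}^{\text{unlabeled}}(\theta_t)\rangle\ge-\alpha\,\mathbb{E}_{q'}[D_{\text{traj}}^{(t)}(q')]$, which is where the real content sits; everything else is the routine Robbins–Monro / descent-lemma bookkeeping.
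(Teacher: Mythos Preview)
Your plan and the paper's diverge at the first substantive step. The paper applies the smoothness/descent lemma directly to the \emph{target} risk $R_{\mathcal{D}_u}$, obtaining
\[
U_{t+1}\le U_t-\eta_t\,\mathbb{E}\bigl[\langle\nabla_\theta R_{\mathcal{D}_u}(\pi_\theta^{(t)}),\,g_t\rangle\bigr]+\tfrac{L}{2}\eta_t^2\,\mathbb{E}\|g_t\|^2,
\]
and then argues that $\mathbb{E}\|\nabla R_{\mathcal{D}_u}\|^2\ge\xi_t-\|\nabla_\theta\mathcal{R}_{TC}^{(t)}\|$, heuristically identifying the $\theta$-gradient norm $\|\nabla_\theta\mathcal{R}_{TC}^{(t)}\|$ with the discrete-time increments $\alpha|\Delta D_{\mathrm{traj}}^{(t)}|+L_y|\Delta C^{(t)}|$. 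You instead apply the descent lemma to the \emph{source} empirical risk $\hat R_{\mathcal{D}_l}$, and then plan to pass to $U_t$ via the static bound of Theorem~\ref{thm:tcg-formal} plus the telescoping change in $\mathcal{R}_{TC}^{(t)}$.

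Your route is cleaner in two respects: you avoid the paper's somewhat formal conflation of a $\theta$-gradient with a $t$-difference, and you never implicitly assume $\mathbb{E}[g_t\mid\theta_t]=\nabla R_{\mathcal{D}_u}(\theta_t)$. It has, however, one genuine gap for the \emph{first} displayed conclusion. Writing $V_t:=\mathbb{E}[\hat R_{\mathcal{D}_l}(\pi_\theta^{(t)})]+\mathbb{E}[\mathcal{R}_{TC}^{(t)}]+\lambda'+B\sqrt{\ln(4/\delta)/2m}$, the generalization bound gives only $U_t\le V_t$. Your smoothness step plus the $\mathcal{R}_{TC}$ increment yields a recursion $V_{t+1}\le V_t-\eta_t\xi_t+\beta_t$ on the \emph{upper bound}, but from $U_{t+1}\le V_{t+1}\le V_t-\eta_t\xi_t+\beta_t$ you cannot replace $V_t$ by $U_t$: the inequality $U_t\le V_t$ points the wrong way. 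So your argument does not actually deliver $U_{t+1}\le U_t-\eta_t\xi_t+\beta_t$ as stated. It \emph{does} deliver the asymptotic claims, since $\lim_t\xi_t=0$ follows from summing the $V_t$-recursion exactly as you describe, and $\limsup_tU_t\le\limsup_tV_t\le\hat R_{\mathcal{D}_l}(f^*)+\lambda'$. If you want the recursion on $U_t$ verbatim, you have to run the descent lemma on $R_{\mathcal{D}_u}$ itself, as the paper does; the cross-term lemma you isolate at the end is then precisely where the paper's gradient-alignment step $\mathbb{E}\|\nabla R_{\mathcal{D}_u}\|^2\gtrsim\xi_t$ sits, and your diagnosis that this is where all the real content lives is correct.
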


\begin{proof}
We analyze the expected change in target risk:
\[
U_{t+1} - U_t = \mathbb{E} \left[ R_{\mathcal{D}_u}(f_{t+1}) - R_{\mathcal{D}_u}(\pi_{\theta}^{(t)}) \right].
\]

Using the smoothness of $ L \circ \pi_{\theta}^{(t)} $ and the update $ \theta_{t+1} = \theta_t - \eta_t g_t $, where $ g_t $ is the stochastic gradient, we have:
\[
R_{\mathcal{D}_u}(f_{t+1}) \le R_{\mathcal{D}_u}(\pi_{\theta}^{(t)}) - \eta_t \langle \nabla_\theta R_{\mathcal{D}_u}(\pi_{\theta}^{(t)}), g_t \rangle + \frac{L}{2} \eta_t^2 \|g_t\|^2.
\]

Taking expectation over the stochastic gradient and data sampling:
\[
U_{t+1} \le U_t - \eta_t \mathbb{E} \left[ \|\nabla_\theta R_{\mathcal{D}_u}(\pi_{\theta}^{(t)})\|^2 \right] + \frac{L}{2} \eta_t^2 \mathbb{E} \left[ \|g_t\|^2 \right].
\]

Now, from \ref{thm:tcg-formal}, we know:
\[
R_{\mathcal{D}_u}(\pi_{\theta}^{(t)}) \le \hat{R}_{\mathcal{D}_l}(\pi_{\theta}^{(t)}) + \mathcal{R}_{TC}^{(t)} + \text{const}.
\]
Thus, the gradient $ \nabla_\theta R_{\mathcal{D}_u}(\pi_{\theta}^{(t)}) $ is aligned with $ \nabla_\theta \hat{R}_{\mathcal{D}_l}(\pi_{\theta}^{(t)}) $ and $ \nabla_\theta \mathcal{R}_{TC}^{(t)} $. Specifically:
\[
\mathbb{E} \left[ \|\nabla_\theta R_{\mathcal{D}_u}(\pi_{\theta}^{(t)})\|^2 \right] \ge \mathbb{E} \left[ \|\nabla_\theta \hat{R}_{\mathcal{D}_l}(\pi_{\theta}^{(t)})\|^2 \right] - \left\| \nabla_\theta \mathcal{R}_{TC}^{(t)} \right\|.
\]

Now, observe that:
\[
\left\| \nabla_\theta \mathcal{R}_{TC}^{(t)} \right\| \le \alpha \cdot \left| \frac{d}{dt} \mathbb{E}[D_{\text{traj}}^{(t)}] \right| + L_y \cdot \left| \frac{d}{dt} \bar{C}^{(t)} \right| \approx \alpha \cdot |\Delta D_{\text{traj}}^{(t)}| + L_y \cdot |\Delta C^{(t)}|,
\]
in discrete time.

Under the assumption that trajectory divergence is decreasing ($ \Delta D_{\text{traj}}^{(t)} \le 0 $) and confidence is increasing ($ \Delta C^{(t)} \ge 0 $), the residual $ \beta_t $ captures the rate of improvement in transferability.

Furthermore, $ \mathbb{E}[\|g_t\|^2] \le M^2 $ under NTK stability and bounded loss.

Thus, we obtain:
\[
U_{t+1} \le U_t - \eta_t \xi_t + \beta_t,
\]
with $ \xi_t = \mathbb{E}[\|\nabla_\theta \hat{R}_{\mathcal{D}_l}(\pi_{\theta}^{(t)})\|^2] $, $ \beta_t = \alpha \cdot \Delta D_{\text{traj}}^{(t)} + L_y \cdot \Delta C^{(t)} + \eta_t^2 M^2 $.

Now, summing over $ t $:
\[
\sum_{t=1}^\infty \eta_t \xi_t \le U_1 - \liminf U_t + \sum_{t=1}^\infty \beta_t.
\]

If $ \Delta D_{\text{traj}}^{(t)} \le 0 $ and $ \Delta C^{(t)} \ge 0 $, then $ \beta_t \le \eta_t^2 M^2 $ eventually, and $ \sum \eta_t^2 < \infty $ implies $ \sum \eta_t \xi_t < \infty $. Since $ \sum \eta_t = \infty $, we must have $ \xi_t \to 0 $, i.e.,
\[
\lim_{t \to \infty} \mathbb{E} \left[ \|\nabla_\theta \hat{R}_{\mathcal{D}_l}(\pi_{\theta}^{(t)})\|^2 \right] = 0.
\]

Finally, from \ref{thm:tcg-formal}, since $ \mathcal{R}_{TC}^{(t)} \to 0 $, we get:
\[
\limsup_{t \to \infty} U_t \le \hat{R}_{\mathcal{D}_l}(f^*) + \lambda',
\]
where $ f^* $ is a stationary point. This completes the proof.
\end{proof}

\subsection{Addition Proofs}
\label{app:proofs}

We provide the full proof of Lemma~\ref{lem:grad-to-traj}, which connects gradient coherence in parameter space to trajectory coherence in the space of confidence dynamics.

\begin{lemma}[Restatement of Lemma~\ref{lem:grad-to-traj}]
Suppose the policy $\pi_\theta$ is trained under small learning rates $\{\eta_s\}_{s=1}^t$, and lies in a region where the Neural Tangent Kernel (NTK) is approximately constant. If for all $s \le t$ and for questions $q, q'$, the gradient coherence satisfies $C_{\text{grad}}^{(s)}(q, q') \ge 1 - \epsilon_s$, then there exists a constant $L > 0$ such that:
\[
D_{\text{traj}}^{(t)}(q, q') \le L \cdot \left( \sum_{s=1}^t \eta_s \epsilon_s \right)^2.
\]
\end{lemma}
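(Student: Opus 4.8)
The plan is to linearize the training dynamics through the approximately-constant NTK, write each pass-rate trajectory as a smooth function of the cumulative parameter displacement, telescope the trajectory divergence into a sum of per-step gradient-mismatch terms, bound each term by $O(\eta_s\epsilon_s)$ using the coherence hypothesis, and finally convert the Euclidean bound on $\|\mathbf{T}_q^{(t)}-\mathbf{T}_{q'}^{(t)}\|$ into the claimed cosine bound. I would begin from the NTK-linearity hypothesis, which gives $\log\pi_{\theta_s}(\tau|q) = \log\pi_{\theta_0}(\tau|q) + \langle\nabla_\theta\log\pi_{\theta_0}(\tau|q),\ \theta_s-\theta_0\rangle + O(\|\theta_s-\theta_0\|^2)$ for every response $\tau$; since the pass rate $P_q^{(s)}$ is (the empirical estimate of) the model's probability of a correct response, a smooth function of these log-probabilities with bounded gradient, it is to leading order a smooth function of $\theta_s-\theta_0$ whose sensitivity direction $v_q^{(s)}$ is a confidence-weighted combination of the response-level gradients at $q$ (write $u_q^{(s)}$ for its normalization, analogously for $q'$). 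Writing the GRPO update from Proposition~\ref{prop:delta-logpi} as $\theta_s-\theta_{s-1}=-\eta_s g_s$, this yields the first-order increment $\Delta P_q^{(s)} = -\eta_s\langle v_q^{(s)},g_s\rangle + O(\eta_s^2)$.

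The heart of the argument is the per-step mismatch $|\Delta P_q^{(s)}-\Delta P_{q'}^{(s)}| \le \eta_s\,|\langle v_q^{(s)}-v_{q'}^{(s)},g_s\rangle| + O(\eta_s^2)$, which I would want to bound \emph{linearly} in $\epsilon_s$. The steps here are: first, invoke Lemma~\ref{lem:grpo-preference} and the correct/incorrect orthogonality of Assumption~\ref{ass:ntk_alignment}(iii) so that the incorrect-response part of $g_s$ drops out of the inner products with $v_q^{(s)},v_{q'}^{(s)}$, leaving only its correct-response component $g_s^+$; second, use that within the shared task family $g_s^+$ is $O(\epsilon_s)$-coherent with $u_q^{(s)}$ and $u_{q'}^{(s)}$, and that convex combinations of $\epsilon_s$-coherent vectors remain $O(\epsilon_s)$-coherent, so $\langle u_q^{(s)},u_{q'}^{(s)}\rangle \ge 1-c\,\epsilon_s$; third, decompose $v_q^{(s)}-v_{q'}^{(s)}$ into parts parallel and orthogonal to $g_s^+$ and use $\langle u_q^{(s)}-u_{q'}^{(s)},u_q^{(s)}\rangle = 1-\langle u_q^{(s)},u_{q'}^{(s)}\rangle \le c\,\epsilon_s$, so the parallel part contributes $O(\epsilon_s)$ and the orthogonal part only a cross-term of order $\sqrt{\epsilon_s}\cdot\sqrt{\epsilon_s}$, giving $|\langle v_q^{(s)}-v_{q'}^{(s)},g_s\rangle| \le K\epsilon_s$ with $K$ controlled by the NTK-stable bounds on $\|v_q^{(s)}\|$ and $\|g_s\|$.

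With this in hand the rest is routine: telescoping over $r\le s$ and absorbing the $O(\sum_r\eta_r^2)$ curvature residual as lower order under small learning rates gives $|P_q^{(s)}-P_{q'}^{(s)}| \le K'\sum_{r=1}^t\eta_r\epsilon_r$, hence $\|\mathbf{T}_q^{(t)}-\mathbf{T}_{q'}^{(t)}\|_2^2 = \sum_{s=1}^t(P_q^{(s)}-P_{q'}^{(s)})^2 \le t\,K'^2\big(\sum_{r=1}^t\eta_r\epsilon_r\big)^2$; then the identity $1-\cos\angle(a,b) = \frac{\|a-b\|^2-(\|a\|-\|b\|)^2}{2\|a\|\|b\|} \le \frac{\|a-b\|^2}{2\|a\|\|b\|}$ with $a=\mathbf{T}_q^{(t)}$, $b=\mathbf{T}_{q'}^{(t)}$, together with $\|\mathbf{T}_q^{(t)}\|_2,\|\mathbf{T}_{q'}^{(t)}\|_2 = \Theta(\sqrt t)$ in the non-degenerate regime where average pass rates stay bounded away from $0$, makes the factor $t$ cancel and yields $D_{\text{traj}}^{(t)}(q,q') \le L\big(\sum_{s=1}^t\eta_s\epsilon_s\big)^2$. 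I expect the main obstacle to be the third step of the previous paragraph: the naive estimate $\|u_q^{(s)}-u_{q'}^{(s)}\| = \sqrt{2(1-\langle u_q^{(s)},u_{q'}^{(s)}\rangle)} = O(\sqrt{\epsilon_s})$ only gives the weaker $O\big((\sum_s\eta_s\sqrt{\epsilon_s})^2\big)$, and squeezing out the linear rate relies on the structural fact that the effective update direction lies within $O(\epsilon_s)$ of the span of the correct-response gradients — so that only the order-$\epsilon_s$ projection survives — which in turn requires a mild intra-question coherence assumption and a careful cross-term bound.
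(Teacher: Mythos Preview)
Your three-step plan (NTK linearization of pass-rate increments, per-step mismatch bound, then telescoping and Euclidean-to-cosine conversion) is exactly the structure of the paper's proof. Where you diverge is in the per-step estimate: the paper takes the naive route you flag, bounding $\|\hat g_q^{(s)}-\hat g_{q'}^{(s)}\|\le\sqrt{2\epsilon_s}$ and hence $|\Delta\log\pi^s(\tau_q^*\mid q)-\Delta\log\pi^s(\tau_{q'}^*\mid q')|\le C_1\eta_s\sqrt{\epsilon_s}$, which after Steps~2--3 yields only $D_{\text{traj}}^{(t)}\le L\big(\sum_s\eta_s\sqrt{\epsilon_s}\big)^2$. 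The paper does not attempt your parallel/orthogonal decomposition; instead it closes by remarking that one may ``weaken $\sqrt{\epsilon_s}$ to $\epsilon_s$ under $\epsilon_s\in(0,1)$, or redefine $\epsilon_s$ as the squared coherence gap'' --- i.e.\ the stated linear rate is obtained by redefinition rather than by a sharper estimate. Your projection argument is a genuine attempt to earn that rate honestly: if the normalized update $\hat g_s^+$ has inner product at least $1-\epsilon_s$ with both $u_q^{(s)}$ and $u_{q'}^{(s)}$, then indeed $|\langle u_q^{(s)}-u_{q'}^{(s)},\hat g_s^+\rangle|\le\epsilon_s$ (difference of two numbers in $[1-\epsilon_s,1]$), which is the right mechanism. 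But as you correctly anticipate, this requires that the \emph{batch} update direction be $\epsilon_s$-coherent with each individual sensitivity direction, an intra-question assumption not contained in the hypothesis $C_{\text{grad}}^{(s)}(q,q')\ge 1-\epsilon_s$; the paper simply does not address this and absorbs the discrepancy into the final redefinition. Your handling of the $t$ factor via $\|\mathbf T_q^{(t)}\|=\Theta(\sqrt t)$ is also more explicit than the paper's, which just assumes the trajectories are ``bounded away from zero.''
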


\begin{proof}
We proceed in three steps: (1) bound the difference in log-probability updates under gradient coherence; (2) relate log-prob changes to pass rate evolution; (3) bound the cosine distance between trajectory vectors.

\paragraph{Step 1: Gradient coherence implies coherent log-prob updates.}

Under the NTK regime, the model evolves via kernel gradient descent, and the change in log-probability after update $s$ is approximately linear in the gradient:
\[
\Delta \log \pi^s(\tau\|q) := \log \pi_{\theta_{s}}(\tau\|q) - \log \pi_{\theta_{s-1}}(\tau\|q) \approx \eta_{s-1} \langle \nabla_\theta \log \pi_{\theta_{s-1}}(\tau\|q),\ \Delta \theta_{s-1} \rangle.
\]
Let $\tau_q^*$ and $\tau_{q'}^*$ be the correct responses for $q$ and $q'$. We are interested in how the model's confidence in generating correct responses evolves.

Let $\mathbf{g}_q^{(s)} = \nabla_\theta \log \pi_{\theta_s}(\tau_q^*\|q)$ and $\mathbf{g}_{q'}^{(s)} = \nabla_\theta \log \pi_{\theta_s}(\tau_{q'}^*\|q')$. By Definition~\ref{def:grad-coherence}, we have:
\[
\frac{\langle \mathbf{g}_q^{(s)}, \mathbf{g}_{q'}^{(s)} \rangle}{\|\mathbf{g}_q^{(s)}\| \|\mathbf{g}_{q'}^{(s)}\|} \ge 1 - \epsilon_s.
\]
This implies (by standard vector inequality):
\[
\left\| \frac{\mathbf{g}_q^{(s)}}{\|\mathbf{g}_q^{(s)}\|} - \frac{\mathbf{g}_{q'}^{(s)}}{\|\mathbf{g}_{q'}^{(s)}\|} \right\| \le \sqrt{2\epsilon_s}.
\]

Assume the gradient norms are bounded: $\|\mathbf{g}_q^{(s)}\| \le \mathrm{G}$, $\|\mathbf{g}_{q'}^{(s)}\| \le \mathrm{G}$. Then:
\[
\| \mathbf{g}_q^{(s)} - \mathbf{g}_{q'}^{(s)} \| \le \mathrm{G} \sqrt{2\epsilon_s} + | \|\mathbf{g}_q^{(s)}\| - \|\mathbf{g}_{q'}^{(s)}\| |.
\]
For simplicity, assume gradient magnitudes evolve similarly (or absorb into constants), so:
\[
\| \mathbf{g}_q^{(s)} - \mathbf{g}_{q'}^{(s)} \| \le \mathrm{G}' \sqrt{\epsilon_s}.
\]

Now, the parameter update is $\Delta \theta_s = -\eta_s \nabla_\theta \mathcal{J}_s$, which is a weighted sum of gradients over the batch. If $q$ and $q'$ are both in the batch or their gradients are representative, then:
\[
| \Delta \log \pi^s(\tau_q^*\|q) - \Delta \log \pi^s(\tau_{q'}^*\|q') | \le \eta_s \| \mathbf{g}_q^{(s)} - \mathbf{g}_{q'}^{(s)} \| \cdot \|\Delta \theta_s\| / \eta_s \le \eta_s \mathrm{G}' \sqrt{\epsilon_s} \cdot M,
\]
where $M$ bounds the update direction. Thus:
\[
| \Delta \log \pi^s(\tau_q^*\|q) - \Delta \log \pi^s(\tau_{q'}^*\|q') | \le \eta_s C_1 \sqrt{\epsilon_s}.
\]
Summing over $s = 1$ to $t$, the total difference in log-prob evolution is:
\[
| \log \pi_{\theta_t}(\tau_q^*\|q) - \log \pi_{\theta_t}(\tau_{q'}^*\|q') | \le C_1 \sum_{s=1}^t \eta_s \sqrt{\epsilon_s}.
\]

\paragraph{Step 2: Log-prob coherence implies pass rate coherence.}

The pass rate $P_q^{(s)}$ is defined as:
\[
P_q^{(s)} = \frac{1}{N} \sum_{k=1}^N \mathbf{1}\left[ f_{\theta_s}(q; \xi_k) \text{ passes} \right],
\]
where $\xi_k$ represents stochasticity (e.g., dropout, sampling). $P_q^{(s)}$ is an empirical estimate of $\Pr(\text{correct} \| q, \theta_s)$.

Assume the mapping from $\log \pi_{\theta_s}(\tau_q^*\|q)$ to $\mathbb{E}[P_q^{(s)}]$ is $L$-Lipschitz (holds for softmax policies under bounded gradients). Then:
\[
| \mathbb{E}[P_q^{(s)}] - \mathbb{E}[P_{q'}^{(s)}] | \le L' | \log \pi_{\theta_s}(\tau_q^*\|q) - \log \pi_{\theta_s}(\tau_{q'}^*\|q') | \le L' C_1 \sum_{r=1}^s \eta_r \sqrt{\epsilon_r}.
\]

By concentration (e.g., Hoeffding’s inequality), with high probability:
\[
| P_q^{(s)} - P_{q'}^{(s)} | \le L' C_1 \sum_{r=1}^s \eta_r \sqrt{\epsilon_r} + \nu_s,
\]
where $\nu_s = \mathcal{O}(1/\sqrt{G})$ is sampling error. For large $N$, $\nu_s$ is negligible.

\paragraph{Step 3: Trajectory vector proximity implies low divergence.}

Let $T_q^{(t)} = (P_q^{(1)}, \dots, P_q^{(t)})$, $T_{q'}^{(t)} = (P_{q'}^{(1)}, \dots, P_{q'}^{(t)})$. Then:
\[
\| T_q^{(t)} - T_{q'}^{(t)} \|_2^2 = \sum_{s=1}^t |P_q^{(s)} - P_{q'}^{(s)}|^2 \le \sum_{s=1}^t \left( L' C_1 \sum_{r=1}^s \eta_r \sqrt{\epsilon_r} \right)^2.
\]

Using the inequality $(\sum_{r=1}^s a_r)^2 \le s \sum_{r=1}^s a_r^2$ and assuming $\eta_r, \epsilon_r$ small, we get:
\[
\| T_q^{(t)} - T_{q'}^{(t)} \|_2^2 \le C_2 \left( \sum_{s=1}^t \eta_s \sqrt{\epsilon_s} \right)^2 \le C_2 \left( \sum_{s=1}^t \eta_s \right) \left( \sum_{s=1}^t \eta_s \epsilon_s \right),
\]
but more conservatively, if $\eta_s \epsilon_s$ summable, then:
\[
\| T_q^{(t)} - T_{q'}^{(t)} \|_2 = \mathcal{O}\left( \sum_{s=1}^t \eta_s \epsilon_s^{1/2} \right).
\]

Now, the cosine distance:
\[
D_{\text{traj}}^{(t)}(q, q') = 1 - \frac{\langle T_q^{(t)}, T_{q'}^{(t)} \rangle}{\|T_q^{(t)}\| \|T_{q'}^{(t)}\|} = \frac{1}{2} \left\| \frac{T_q^{(t)}}{\|T_q^{(t)}\|} - \frac{T_{q'}^{(t)}}{\|T_{q'}^{(t)}\|} \right\|^2 + \mathcal{O}(\|T_q^{(t)} - T_{q'}^{(t)}\|^2).
\]

If the trajectories are bounded away from zero (i.e., not all zeros), then:
\[
D_{\text{traj}}^{(t)}(q, q') \le L \cdot \| T_q^{(t)} - T_{q'}^{(t)} \|_2^2 \le L \cdot \left( \sum_{s=1}^t \eta_s \sqrt{\epsilon_s} \right)^2.
\]

To match the lemma statement, we can weaken $\sqrt{\epsilon_s}$ to $\epsilon_s$ under $\epsilon_s \in (0,1)$, or redefine $\epsilon_s$ as the squared coherence gap. In either case, there exists a constant $L > 0$ such that:
\[
D_{\text{traj}}^{(t)}(q, q') \le L \cdot \left( \sum_{s=1}^t \eta_s \epsilon_s \right)^2,
\]
which completes the proof.
\end{proof}

\section{Discussion and Limitations}
First, our results demonstrate that semi-supervised training using 4K labeled data combined with 16K unlabeled data outperforms fully supervised training on 45K labeled data. This encouraging finding aligns with the insight proposed by \cite{li2025limr} in the context of RLVR training: thorough training (i.e., more training epochs) on smaller curated datasets can yield better performance than training with larger datasets for fewer epochs. Our work further extends this observation by showing that unlabeled data, when carefully selected using guidance from labeled data training, can effectively enhance the model’s reasoning capabilities, thus amplifying the benefits of semi-supervised RLVR.

In addition, due to computational constraints, our evaluation is currently limited to models under the 7B parameter scale. Exploring the applicability and scalability of this semi-supervised paradigm to larger language models (e.g., 13B or beyond) remains an important direction for future research, as larger models may benefit even more from effective utilization of unlabeled data.

One key observation from our experiments comparing Qwen2.5-Math-7B and LLaMA-3.1-8B is that the more effective supervised training with labeled data is on a model, the better the labeled data guides the selection and utilization of unlabeled data. Conversely, if RLVR training with labeled data yields only marginal gains, its impact on unlabeled data filtering is also limited. Therefore, we recommend applying \textsc{TraPO} primarily in settings where the labeled data and model are well aligned. 

Finally, we believe it is a promising direction, similar to active learning, to investigate what types of labeled examples most effectively guide unlabeled data training, and we plan to explore this in future work.

\section{Experiment Details}\label{appdix-Exp-Details}

\subsection{Detailed Setup}
\paragraph{Implementation Details.}
Following Dr.GRPO~\citep{drgrpo}, we disable length and standard error normalization in the GRPO loss (Eq.~\ref{eq:grpo}) for all experiments. 
By default, we use Qwen2.5-Math-7B~\citep{qwen2.5_math}, following prior work~\cite{prime,simplerl,drgrpo}. 
Besides, we remove the KL regularization by setting $\beta = 0$ and set the entropy coefficient to 0.01. 
Our rollout batch size is 64, with 8 rollouts per prompt, and update batch size 64. 
Rollouts are generated with temperature sampling ($T=1.0$). 
We use Math-Verify \footnote{https://github.com/huggingface/Math-Verify} as the reward function, without format or length bonuses. 
For unlabeled data selection in the training with 1K labeled ID samples and 1K unlabeled OOD samples, we set the \texttt{top-p} threshold to 0.1, the threshold $\Gamma$ to 0.4, and the warmup stage consists of 10 epochs. For unlabeled data selection in the training with 1K labeled ID samples and 3K unlabeled ID samples, we set the \texttt{top-p} threshold to 0.1, the threshold $\Gamma$ to 0.4, and the warmup stage consists of 8 epochs. For unlabeled data selection in the training with 4K labeled ID samples and 12K unlabeled ID samples, we set the \texttt{top-p} threshold to 0.1, the threshold $\Gamma$ to 0.4, and the warmup stage consists of 8 epochs.
In addition, given that experiments are performed across different data scales, the samples used in non-full-data scenarios are random sampled from the original dataset without.

\paragraph{Training.}

In addition to Qwen2.5-Math-7B, we extend \textsc{TraPO} to DeepSeek-R1-Distill-Qwen-1.5B~\citep{guo2025deepseek} and LLaMA-3.1-8B-Instruct ~\citep{llama3}. To ensure fairness, we maintain 8 samples per prompt for all RL-trained models. 
The learning rate is constantly set as 1e-6. For all training, we follow \cite{luffyyan2025learning} and use the same validation set to select the best checkpoint.
All the experiments were run with an $8 \times$ NVIDIA H200 with 141GB memory. 

Our implementation is based on LUFFY\footnote{https://github.com/ElliottYan/LUFFY} and veRL\footnote{https://github.com/volcengine/verl}, which use vLLM\footnote{https://github.com/vllm-project/vllm} as the rollout generators. We are thankful for these open-source repositories.

\paragraph{Qwen2.5-Series Models.} Since the context length of Qwen2.5-Math-base is 4096 and the generation length of off-policy samples could be lengthy, we change the rope theta from 10000 to 40000 and extend the window size to 16384.
For all Qwen2.5-Series models, we use the same dataset as described in Sec. \ref{sec-exp}. 

\paragraph{DeepSeek-R1-Distill-Qwen-1.5B.} DeepSeek-R1-Distill-Qwen-1.5B is a compact, 1.5-billion-parameter language model distilled from the high-performing DeepSeek-R1 series \citep{guo2025deepseek}. Built on the Qwen architecture, it combines strong reasoning capabilities with high efficiency, offering excellent performance in math and logic tasks despite its small size. For DeepSeek-R1-Distill-Qwen-1.5B, we use the same dataset as described in Sec. \ref{sec-exp}.

\paragraph{Llama-3.1-8B.} For Llama3.1-8B, we follow Simple-RL-Zoo~\cite{simplerl-zoo} and use a simplified prompt, and we do not ask the model to generate \verb|<think>\n| \verb|</think>\n| tokens. 

\subsection{System Prompt}\label{app:system}
All our trained models, except LLaMA-3.1-8B, share the same system prompt for training and inference: 
\begin{tcolorbox}[
    center,
    arc=0mm,
    boxrule=1pt,
    colback=blue!6!white,
    colframe=black,
    colbacktitle=black,
    attach boxed title to top left={yshift=-0.1in,xshift=0.15in},
    boxed title style={boxrule=0pt,colframe=white}
]
Your task is to follow a systematic, thorough reasoning process before providing the final solution. This involves analyzing, summarizing, exploring, reassessing, and refining your thought process through multiple iterations. Structure your response into two sections: Thought and Solution. In the Thought section, present your reasoning using the format: “\verb|<think>\n| {thoughts} \verb|</think>\n|”. Each thought should include detailed analysis, brainstorming, verification, and refinement of ideas. After “\verb|</think>\n|” in the Solution section, provide the final, logical, and accurate answer, clearly derived from the exploration in the Thought section. If applicable, include the answer in \verb|\boxed{}| for closed-form results like multiple choices or mathematical solutions. \\
\textbf{User:} This is the problem: \verb|{QUESTION}| \\
\textbf{Assistant:} \verb|<think>|
\end{tcolorbox}

For LLaMA-3.1-8B, we do not use the above system prompt as we find the model cannot follow such an instruction. Thus, we use a simplified version that only includes the CoT prompt and do not include \verb|<think>| token. 
\begin{tcolorbox}[
    center,
    arc=0mm,
    boxrule=1pt,
    colback=blue!6!white,
    colframe=black,
    colbacktitle=black,
    attach boxed title to top left={yshift=-0.1in,xshift=0.15in},
    boxed title style={boxrule=0pt,colframe=white}
]
\textbf{User: }\verb|{QUESTION}| \\
\textbf{Answer:} Let's think step by step.
\end{tcolorbox}

\subsection{Baseline Description}\label{subsec-apdix-Baseline}
\begin{itemize}[nosep, topsep=0pt, leftmargin=*]
\item Unsupervised Baselines:
    \begin{itemize}
    \item \textbf{TTRL} \citep{zuo2025ttrl}: treating the majority-voted output as the pseudo-label and training with GRPO.
    \item \textbf{Self-Certainty} \citep{zhao2025learning}: maximizing the KL divergence between the model’s rollout token probabilities and a uniform distribution to encourage confident predictions.
    \item \textbf{Token-Level Entropy} \citep{agarwal2025unreasonable}: minimizing the entropy of individual output tokens during rollout to promote consistency.
    \item \textbf{Sentence-Level Entropy} \citep{agarwal2025unreasonable}: maximizing the overall sentence probability of the generated output to favor high-likelihood sequences.
    \end{itemize}
\item Supervised Baselines:
    \begin{itemize}
    \item \textbf{Simple-RL}~\citep{simplerl}: training from Qwen2.5-Math-7B using rule-based reward. 
    \item \textbf{Oat-Zero}~\citep{drgrpo}: training from Qwen2.5-Math-7B and rule-based reward, proposing to remove the standard deviation in GRPO advantage computation and token-level normalization in policy loss computation.  
    \item  \textbf{PRIME-Zero}~\citep{prime}: using policy rollouts and outcome labels through implict process rewards. 
    \item  \textbf{OpenReasonerZero}~\citep{prime}: a recent open-source implementation of RLVR methods.
    \item  \textbf{Fully Supervised}~\citep{luffyyan2025learning}: trained on-policy RL within the RLVR paradigm using Dr.GRPO \citep{drgrpo} with the same reward and data.
    \end{itemize}
\end{itemize}

\section{More Experiments}
\subsection{Comparison with More Supervised RLVR Baselines}\label{Exp:compared_with_other_sup}
In Table \ref{tab:merged_results}, we compare our method with additional fully supervised RLVR baselines, all of which are trained on the complete 45K labeled dataset, with results taken directly from \cite{luffyyan2025learning}. The results show that our model, trained with only 4K labeled and 12K unlabeled samples, achieves performance that surpasses all baselines trained on the full 45K labeled data. For instance, our \textsc{TraPO} method outperforms the outstanding Oat-Zero baseline by $1.9\%$ in in-distribution performance and by a significant $14.5\%$ in out-of-distribution performance. This further underscores the effectiveness and value of our proposed \textsc{TraPO}.
\begin{table*}[!t]
\centering
\caption{Comparison with other fully supervised training methods. \textbf{Bold} and \underline{underline} indicate the best and second-best results, respectively.}
\label{tab:merged_results}
\setlength{\tabcolsep}{2.5pt}  
\renewcommand{\arraystretch}{1.3} 
\resizebox{\textwidth}{!}{%
\begin{tabular}{lccccc>{\columncolor{yellow!20}}c|ccc>{\columncolor{cyan!20}}c}
\toprule
\multirow{2}{*}{\textbf{Model}} & \multicolumn{6}{c}{\textbf{In-Distribution Performance}} & \multicolumn{4}{c}{\textbf{Out-of-Distribution Performance}} \\
\cmidrule(lr){2-7} \cmidrule(lr){8-11}
 & \textbf{AIME 24/25} & \textbf{AMC} & \textbf{MATH-500} & \textbf{Minerva} & \textbf{Olympiad} & \textbf{Avg.} & \textbf{ARC-c} & \textbf{GPQA}$^{*}$ & \textbf{MMLU-Pro} & \textbf{Avg.} \\
\midrule
Qwen-Base~\citep{qwen2.5_math}      
  & 11.5/4.9    & 31.3 & 43.6 & 7.4  & 15.6 & 19.0      & 18.2 & 11.1 & 16.9 & 15.4     \\
Qwen-Instruct~\citep{qwen2.5_math} 
  & 12.5/10.2   & 48.5 & 80.4 & 32.7 & 41.0 & 37.6      & 70.3 & 24.7 & 34.1 & 43.0     \\
\midrule
\normalrow
\multicolumn{11}{c}{Fully Supervised Methods Trained on 45K Samples w/ All Labels} \\
\midrule
SimpleRL-Zero~\citep{simplerl}                
  & \underline{27.0}/6.8    & 54.9 & 76.0 & 25.0 & 34.7 & 37.4      & 30.2 & 23.2 & 34.5 & 29.3     \\
OpenReasoner-Zero~\citep{orz}                 
  & 16.5/15.0   & 52.1 & 82.4 & 33.1 & \underline{47.1} & 41.0      & 66.2 & 29.8 & \textbf{58.7} & 51.6 \\
PRIME-Zero~\citep{prime}                      
  & 17.0/12.8   & 54.0 & 81.4 & \underline{39.0} & 40.3 & 40.7      & 73.3 & 18.2 & 32.7 & 41.4     \\
Oat-Zero~\citep{drgrpo}                       
  & \textbf{33.4}/11.9 & \underline{61.2} & 78.0 & 34.6 & 43.4 & 43.7 & 70.1 & 23.7 & 41.7 & 45.2     \\
On-Policy RL~\citep{luffyyan2025learning}                                
  & 25.1/\underline{15.3}   & \textbf{62.0} & \underline{84.4} & \textbf{39.3} & 46.8 & \underline{45.5} & \underline{82.3} & \underline{40.4} & 49.3 & \underline{57.3} \\
\midrule
\rowhighlight
\multicolumn{11}{c}{\textsc{TraPO} Trained w/ \textit{4K} Labeled Samples \& \textit{12K} Unlabeled Samples} \\
\midrule
\textbf{\textsc{TraPO} (ours)} 
    &{24.3}/\textbf{17.1}    &{60.0} &\textbf{84.6}  &\textbf{39.3}  &\textbf{48.3}  &\textbf{45.6}       &\textbf{84.6}  &\textbf{43.9}  &\underline{50.7}  &\textbf{59.7} \\

\bottomrule
\end{tabular}%
\label{tab:compare_w_diff_supervised}
}
\end{table*}

\begin{table*}[!t]
\centering

\caption{\textcolor{blue}{Overall performance on nine competition-level benchmark performance on  LLaMA-3.1-8B-Instruct ~\citep{llama3}.}}
\label{tab:main_other_models}
\setlength{\tabcolsep}{2.5pt}  
\renewcommand{\arraystretch}{1.3} 
\resizebox{\textwidth}{!}{%
\begin{tabular}{lccccc>{\columncolor{yellow!20}}c|ccc>{\columncolor{cyan!20}}c}
\toprule
\multirow{2}{*}{\textbf{Model}} & \multicolumn{6}{c}{\textbf{In-Distribution Performance}} & \multicolumn{4}{c}{\textbf{Out-of-Distribution Performance}} \\
\cmidrule(lr){2-7} \cmidrule(lr){8-11}
 & \textbf{AIME 24/25} & \textbf{AMC} & \textbf{MATH-500} & \textbf{Minerva} & \textbf{Olympiad} & \textbf{Avg.} & \textbf{ARC-c} & \textbf{GPQA}$^{*}$ & \textbf{MMLU-Pro} & \textbf{Avg.} \\
\midrule
\normalrow
\multicolumn{11}{c}{Original Model} \\
\midrule
Original Model  & 5.1/0.4 & 18.6 & 44.6 & 19.5 & 14.1 & 17.1&24.2 &{0.5}   &38.6 &\textbf{21.1}\\
  \midrule
\normalrow
\multicolumn{11}{c}{Unsupervised Methods Trained on \textit{1K} Unlabeled ID Samples \& \textit{1K} Unlabeled OOD Samples} \\
TTRL           
  &6.1/0.1   &21.8   &46.6 &25.4  &16.7   &\underline{19.5} &11.0 &0.0   &41.8 &17.6\\
Self-certainty            
  & 6.9/1.2   &20.3 &45.5  &23.7  &17.1  &19.1 &13.3 & 0.0&39.5  & 17.6  \\
Token-level Entropy            
  & 5.3/0.1& 19.6   &43.5 &22.7  &16.9  &18.0 &10.5  &0.0 &38.7 & 16.4    \\
Sentence-level Entropy             
  & 7.2/0.2   &20.9 &46.4  &24.7  &16.5 &19.3  &11.7 &0.0 &41.5 &17.7     \\
  \midrule
\rowhighlight
\multicolumn{11}{c}{Semi-supervised Methods Trained on \textit{1K} Labeled ID Samples \& \textit{1K} Unlabeled OOD Samples} \\
TTRL              
   &7.1/0.1    &20.5 &46.4  &24.6  &17.3 &19.3  &11.5 &0.0 &40.9 &17.5     \\
Self-certainty  & 6.6/0.6   &20.7 &46.4  &23.2  &16.3 &19.0  &12.7 &0.0 &40.3 &17.7     \\
Token-level Entropy  &6.4/0.1    &20.5 &44.6  &23.3  &16.4 &18.6  &11.3 &0.0 &41.6 &17.6     \\
Sentence-level Entropy  &7.5/0.1    &21.3 &46.7  &25.1  &16.9 &19.6  &12.3 &0.0 &41.9 &18.1     \\
\textbf{\textsc{TraPO} (ours)}             
  &{9.9}/{0.2}   &{21.5}   &{48.0} &{26.1}  &{18.7}   &\textbf{20.7} &{12.1} &0.0   &{43.4} &\underline{18.5}\\   
\textcolor{gray}{{Fully Supervised w/ \textit{2K} Labels}}               
  &\textcolor{gray}{6.9}/\textcolor{gray}{1.6}   &\textcolor{gray}{22.2}   &\textcolor{gray}{52.2} &\textcolor{gray}{21.0}  &\textcolor{gray}{17.5}   &\textcolor{gray}{20.2} &\textcolor{gray}{10.4} &\textcolor{gray}{0.0}   &\textcolor{gray}{47.5} &\textcolor{gray}{19.3}\\
\bottomrule
\end{tabular}%
}
\end{table*}

\begin{table}[t]
\centering
\caption{Overall performance on nine competition-level benchmark performance on DeepSeek-R1-Distill-Qwen-1.5B~\citep{guo2025deepseek}.}
\label{tab:main_other_models_ds_qwen}
\resizebox{\textwidth}{!}{
\begin{tabular}{lcccccc|cccc}
\toprule
\textbf{Model}                      & \textbf{AIME 24/25} & \textbf{AMC} & \textbf{MATH-500} & \textbf{Minerva} & \textbf{Olympiad} & \textbf{\textit{Avg.}}   & \textbf{ARC-c} & \textbf{GPQA}$^{*}$ & \textbf{MMLU-Pro} & \textbf{\textit{Avg.}} \\
\midrule
Original Model &21.0/20.3   &51.6   &76.6 &26.5  &36.7   &38.8 &3.7 &0.0   &11.0 &4.9\\
\midrule
Unsupervised (TTRL)  &26.1/21.7   &57.0   &80.6 &28.7  &42.7   &42.8 &25.7 &0.0   &31.9 &19.2\\
Semi-supervised (\textsc{TraPO}) &\underline{27.9}/\textbf{22.6}   &\underline{61.9}   &\underline{82.2} &\underline{32.0}  &\underline{45.3}   &\underline{45.3} &\underline{34.4} &0.0   &\underline{33.5} &\underline{22.6}\\
Supervised &\textbf{28.5}/\underline{22.5}   &\textbf{64.1}   &\textbf{84.6} &\textbf{37.1}  &\textbf{47.0}   &\textbf{47.3} &\textbf{57.3} &0.0   &\textbf{38.9} &\textbf{32.1}\\
\bottomrule
\end{tabular}}
\end{table}

\subsection{Extend TraPO to More Models}\label{Exp:TraPOwithMoreModels}
We further investigate whether our proposed semi-supervised paradigm, \textsc{TraPO}, generalizes to \textit{small models}, \textit{instruction-tuned models}, and \textit{weak models}. To this end, we conduct experiments on DeepSeek-R1-Distill-Qwen-1.5B (representing small models) and LLaMA-3.1-8B-Instruct (representing instruction-tuned and relatively weaker models), under unsupervised, semi-supervised, and fully supervised training settings. The experimental setup follows that of Table \ref{tab:paradigms_results}. As shown in Table \ref{tab:main_other_models} and Table \ref{tab:main_other_models_ds_qwen}, \textsc{TraPO} consistently outperforms the unsupervised baseline (TTRL) by a significant margin and approaches (or even surpasses) the performance of the fully supervised baseline on both models. Specifically, on DeepSeek-R1-Distill-Qwen-1.5B, \textsc{TraPO} improves over TTRL by $2.0\%$ in in-distribution performance and $9.5\%$ in out-of-distribution performance. On LLaMA-3.1-8B-Instruct, it exceeds TTRL by $1.2\%$ in ID performance and $0.9\%$ in OOD performance. Notably, \textsc{TraPO} even outperforms the fully supervised baseline by $0.5\%$ in ID performance. These results strongly demonstrate the robustness, adaptability, and broad applicability of our method across diverse model scales and architectures.

\subsection{TraPO Is a Universal Component}\label{Exp:UniversalComponent}
We demonstrate that \textsc{TraPO} serves as a universal and modular component, whose pass rate trajectory-based sample selection mechanism can be readily integrated into various semi-supervised baselines to identify reliable unsupervised reward signals. As shown in Figure \ref{F-filter}, we apply this selection strategy to three representative baselines: Sentence-level Entropy, Token-level Entropy, and TTRL. Compared to the naive semi-supervised counterparts that simply combine supervised and unsupervised objectives, augmenting these methods with our sample selection framework consistently yields performance gains across multiple benchmarks. This further validates the \textbf{extensibility} and \textbf{plug-and-play} nature of our approach, indicating that the core principle of \textsc{TraPO}—dynamically identifying high-quality unlabeled samples via learning trajectories—is broadly applicable and complementary to diverse semi-supervised paradigms.

\begin{figure}[!t]
  \centering
  \begin{subfigure}[b]{0.31\textwidth}
    \includegraphics[width=\textwidth]{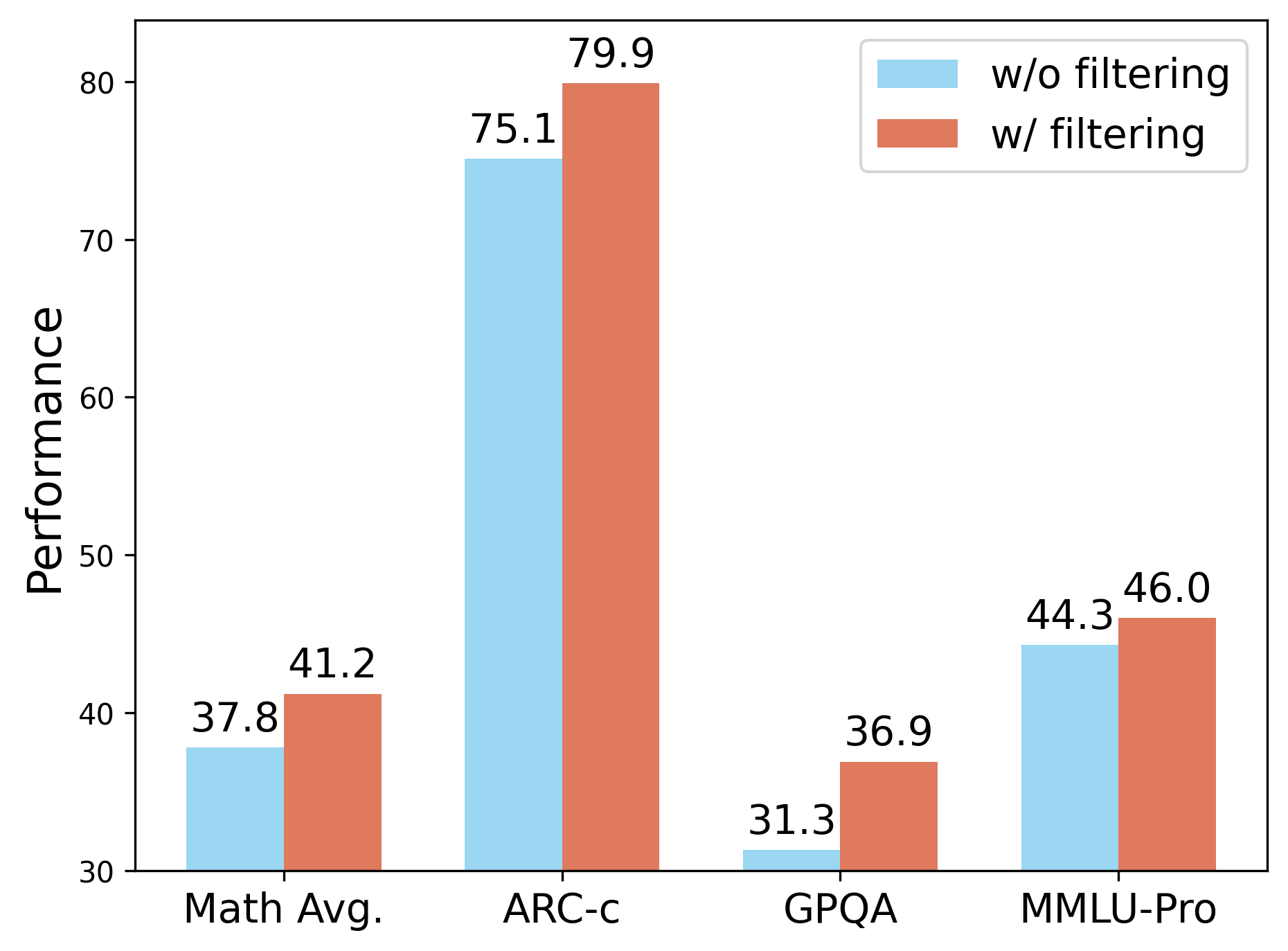}
    \caption{Sequence Entropy Method}
    \label{fig:seq_entropy}
  \end{subfigure}
  \hfill
  \begin{subfigure}[b]{0.31\textwidth}
    \includegraphics[width=\textwidth]{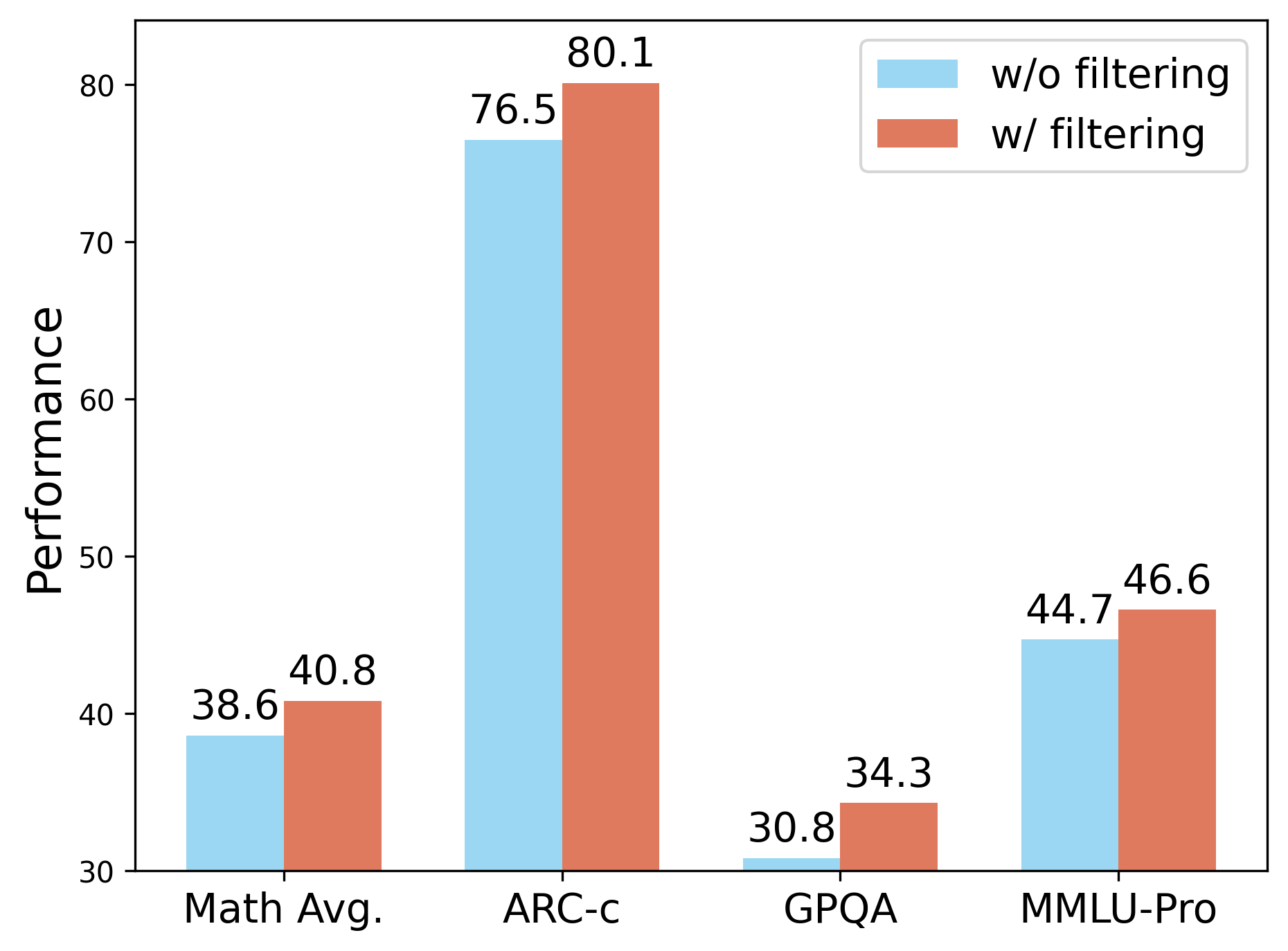}
    \caption{Token Entropy Method}
    \label{fig:token_entropy}
  \end{subfigure}
  \hfill
  \begin{subfigure}[b]{0.31\textwidth}
    \includegraphics[width=\textwidth]{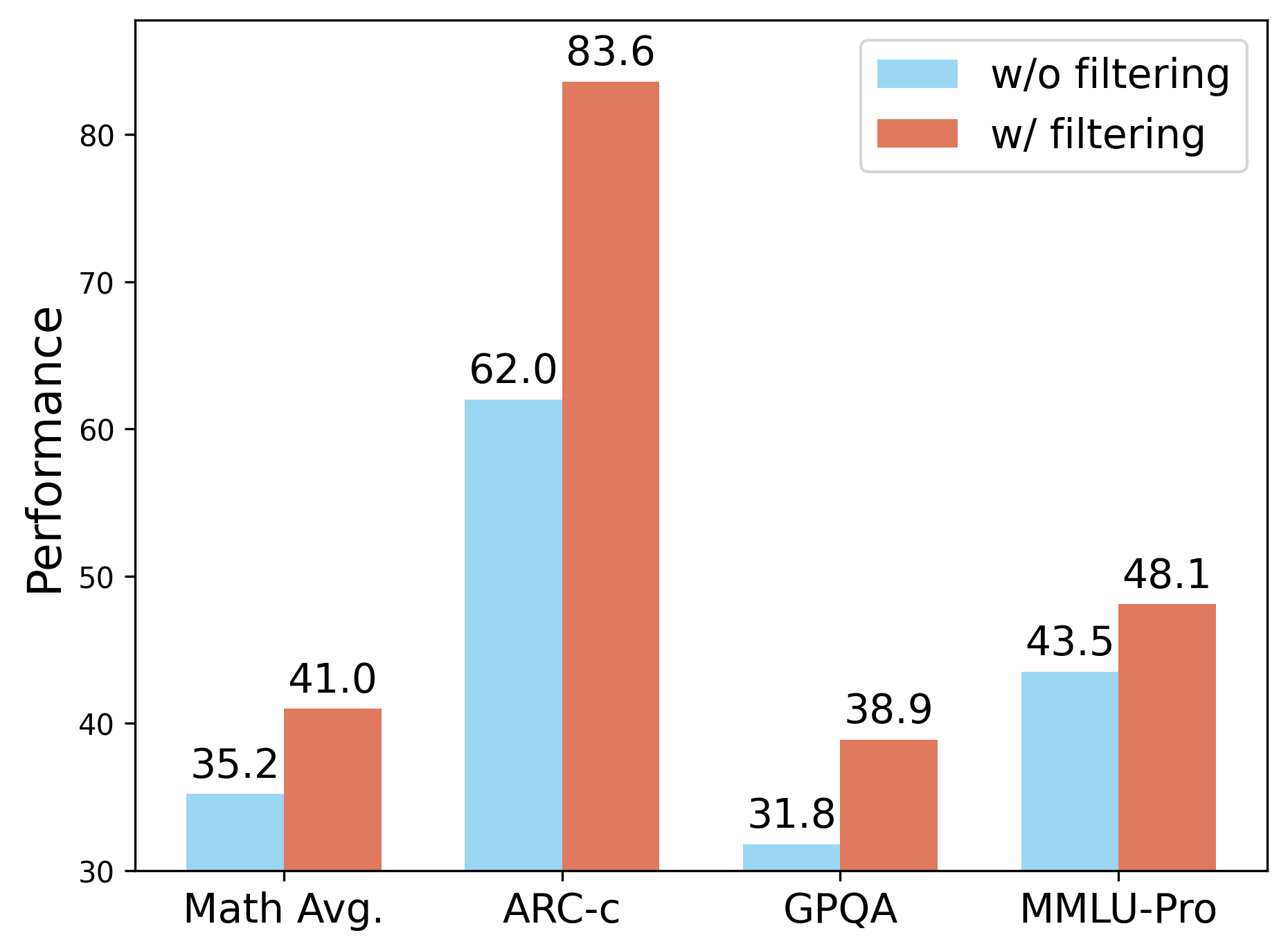}
    \caption{TTRL Method}
    \label{fig:ttrl}
  \end{subfigure}
  \caption{Different unsupervised methods combined with our trajectory-based filtering approach can improve performance, compared to a naive semi-supervised method that directly combines supervised and unsupervised approaches. The experimental setup follows Table \ref{tab:paradigms_results}.}
  \label{F-filter}
\end{figure}

{\subsection{Run TraPO on DeepMath}\label{Exp:Deepmath}
To further verify TraPO's broad applicability, we run it on DeepMath \citep{he2025deepmath}, a recently released dataset for mathematical reasoning. We randomly select 2K samples as labeled data and 8K samples as unlabeled data. We compare results from unsupervised, naive semi-supervised, and fully supervised methods. As shown in Table \ref{tab:deepmath}, our method, TraPO, outperforms all unsupervised methods and naive supervised methods. Specifically, on the ID test set, TraPO achieves a 1.5\% improvement over the best naive semi-supervised method combined with TTRL, and is only 1.2\% behind fully supervised training. Notably, on the OOD test set, TraPO even surpasses fully supervised training by 2.4\%, highlighting that TraPO is not only label-efficient but also delivers outstanding performance.}

{\subsection{Different Selection Strategies}\label{Exp:SelectStrategies}
Under a fixed selection ratio (30\%), we compare TraPO with other possible selection strategies, including simple random selection, sentence-level entropy-based selection (where lower entropy indicates more reliable pseudo-labels for the corresponding rollouts), and self-certainty (where higher self-certainty suggests more reliable pseudo-labels for the corresponding rollout). The experimental results in Table \ref{tab:select_method} show that with a fixed selection ratio of 30\%, all other methods are significantly inferior to our selection method, TraPO, on both the ID and OOD test sets.
}

{\subsection{Stability of TraPO}\label{Exp:Stability}
We seek to verify whether TraPO is sufficiently stable and insensitive to sample order. To this end, we ran TraPO three times with data randomly shuffled. Across these three trials (Qwen-2.5-7B, 1K labeled, 3K unlabeled), both the results and the selected samples were nearly identical, confirming TraPO's robustness (see table \ref{tab:three_run}).
}

{\subsection{Training Cost Analysis of TraPO}\label{trainingcost}
We analyze the practical training cost of TraPO from both theoretical and empirical perspectives.
\paragraph{Time Complexity.}
Each labeled or unlabeled sample is rolled out $G$ times per epoch, in line with standard RLVR practices. Let $T$ represent the total number of training epochs, $N_L, N_U$ the number of labeled and unlabeled samples, $ C_{\text{sim}}$ the computational cost of a cosine similarity computation over short vectors, and $C_{\text{gen}}$ the computational cost of a single rollout.    
The only additional operation is a cosine similarity computation $C_{\text{sim}}$ over short vectors, which is negligible compared to the cost of rollout generation $C_{\text{gen}}$, i.e, $C_{\text{sim}} << C_{\text{gen}}$.    
The time complexity of fully supervised training (using $N = N_L + N_U$ labeled samples) is:  
\begin{equation}
T_{\text{Sup}} = O(T \cdot N \cdot G \cdot C_{\text{gen}})
\end{equation}
TraPO has the same complexity:  
\begin{equation}
T_{\text{TraPO}} = O\big(T \cdot (N_L + N_U) \cdot G \cdot C_{\text{gen}}\big) + O\big( T\cdot (N_L + N_U)\cdot C_{\text{sim}} \big)  \approx  O(T \cdot N \cdot G \cdot C_{\text{gen}})  
\end{equation}
Therefore, TRAPO and fully supervised RLVR share identical time complexity, both dominated by forward sampling and GRPO updates. 
\paragraph{Empirical Training Cost.}  
In our experiments, TraPO, supervised RLVR, and unsupervised RLVR are trained under identical conditions: same number of epochs, batch sizes, and hardware configuration (8$\times$H200 GPUs). Notably, TraPO reaches its best checkpoint at nearly the same training step as the supervised baseline, indicating no significant overhead in convergence speed. Table~\ref{tab:training_time} summarizes the wall-clock training times across different data scales, demonstrating that TraPO incurs no substantial additional training cost compared to supervised RLVR.
}

\begin{table}[h]
\centering
\caption{Wall-clock training time (reported as ``GPU-hours $\times$  GPUs'') across data regimes.}
\label{tab:training_time}
\begin{tabular}{lccc}
\toprule
Data Size & Unsupervised & Supervised & Semi-Supervised (TraPO) \\
\midrule
4k  & $\sim$7 $\times$ 8 & $\sim$25 $\times$ 8 & $\sim$26 $\times$ 8 \\
8k  & $\sim$13 $\times$ 8 & $\sim$39 $\times$ 8 & $\sim$38 $\times$ 8 \\
45k & $\sim$11 $\times$ 8 & $\sim$57 $\times$ 8 & $\sim$55 $\times$ 8 \\
\bottomrule
\end{tabular}
\end{table}

{\subsection{Different Ways of Utilizing Reliable Passrate Databases}\label{Exp:Mean_or_Max}
One may also consider other variants, such as not using the average pass rate trajectory and instead selecting, from the unlabeled samples, those whose pass rate trajectory is most similar to the trajectory of any labeled sample for inclusion in training. However, this approach can lead to unstable selection because, among the unlabeled samples, problems that are too difficult, too easy, or of moderate difficulty can all exhibit relatively similar pass-rate trajectories among the labeled samples. As a result, the selection is ineffective (Table \ref{tab:MeanorMax}).
}

\section{More Related Work}
\paragraph{Semi-supervised Reinforcement Learning.}
Semi-supervised learning has been widely studied in supervised settings, where labeled and unlabeled data are combined to improve model performance under limited annotation budgets~\citep{blum1998combining,chapelle2009semi,subramanya2011semi,rasmus2015semi,laine2016temporal,tarvainen2017mean,berthelot2019mixmatch,xie2020unsupervised,sohn2020fixmatch,heidari2024reinforcement}. In reinforcement learning, early work explored combining reward-based learning with self-supervised signals or pseudo-rewards derived from environment dynamics or intrinsic motivation~\citep{dudik2011doubly,finn2016generalizing,thomas2016data,kallus2020double,zhou2023offline}. These methods typically treat supervised and unsupervised signals independently, for instance by summing reward and consistency objectives, or by pre-training on unlabeled data before fine-tuning on labeled trajectories.

However, such semi-supervised RL approaches are ill-suited for large language model (LLM) training under verifiable rewards (RLVR). In RLVR, the policy is optimized using feedback signals derived from answer verification (e.g., correctness of final outputs), rather than explicit action-level rewards. Unsupervised methods in this space rely on internal consistency, such as low token entropy~\citep{agarwal2025unreasonable}, high self-certainty~\citep{zhao2025learning}, or majority voting~\citep{zuo2025ttrl}, to construct pseudo-rewards. While these signals can guide exploration, they often reinforce incorrect or degenerate reasoning patterns in the absence of external supervision, leading to model collapse~\citep{zhang2025co}.

Our work departs from prior approaches by introducing a \textit{guidance} mechanism: the labeled data are not merely used to provide an additional reward signal, but to actively \textit{steer} the selection and utilization of unlabeled samples. Specifically, we observe that reliable reasoning trajectories on unlabeled data exhibit learning dynamics similar to those on labeled data. By measuring trajectory similarity in the reward model space, \textsc{TraPO} identifies high-quality unlabeled samples whose reasoning patterns are consistent with verified ones. This ensures that unsupervised signals are only leveraged when they align with externally validated behavior, preventing the amplification of spurious patterns.

This paradigm shift from independent combination to supervised guidance addresses a key limitation of traditional methods. In high-dimensional open-ended generation tasks, such as reasoning with LLMs, consistency alone is insufficient for correctness. Without supervision to anchor the learning process, models easily overfit to superficial patterns or self-reinforced errors. \textsc{TraPO} resolves this by using minimal labeled data as a ``north star'' enabling stable and practical learning from large amounts of unlabeled data. As we show empirically, this leads to superior performance and data efficiency, surpassing both fully supervised baselines trained on orders of magnitude more labels and unsupervised methods that fail to generalize.

\section{Pseudo Code}
We provide the pseudo code \ref{alg:trapo_simple}.
\begin{algorithm}[!t]
\caption{\textsc{TraPO}: Trajectory-based Policy Optimization}
\label{alg:trapo_simple}
\begin{algorithmic}[1]
\Require Labeled data $\mathcal{D}_l$, Unlabeled data $\mathcal{D}_u$, Warm-up epochs $T_{\text{warm}}$, Threshold $\Gamma$, Top-$p$ fraction
\Ensure Policy $\pi_\theta$

\noindent \Init \ \ Pass rate trajectories $\mathbf{T}_q \leftarrow [\,]$ for all $q$ \\
    Reliable database $\mathcal{D}_{\text{reliable}} \leftarrow \{ \mathbf{T}_l \mid l \in \mathcal{D}_l \}$

\For{each training epoch $t$}
    \State Generate responses for $\mathcal{D}_l \cup \mathcal{D}_u$ using $\pi_\theta$
    \State Compute (pseudo) pass rates $P_q^{(t)}$ for all questions
    \State Update trajectories: $\mathbf{T}_q^{(t)} \leftarrow \mathbf{T}_q^{(t-1)} \oplus P_q^{(t)}$

    \If{$t > T_{\text{warm}}$}
        \State Compute average reliable trajectory $\bar{\mathbf{T}}_{\text{reliable}}^{(t)}$
        \For{$u \in \mathcal{D}_u$}
            \State Compute similarity: $\texttt{TCS}_u = \cos(\hat{\mathbf{T}}_u^{(t)}, \hat{\bar{\mathbf{T}}}_{\text{reliable}}^{(t)})$
        \EndFor
        \State Select reliable unlabeled samples:
        \[
        \mathcal{U}_{\text{reliable}} = \texttt{top-p}(\texttt{TCS}) \cup \{ u \mid \texttt{TCS}_u \geq \Gamma \}
        \]
        \State Add their trajectories to $\mathcal{D}_{\text{reliable}}$
    \EndIf

    \State Compute loss:
    \[
    \mathcal{L}(\theta) = \mathcal{J}_{\text{GRPO}}^{\text{labeled}} + \sum_{u \in \mathcal{U}_{\text{reliable}}} \mathcal{J}_{\text{GRPO},u}^{\text{unlabeled}}
    \]
    \State Update $\pi_\theta$ using $\nabla_\theta \mathcal{L}(\theta)$
\EndFor

\end{algorithmic}
\end{algorithm}

\begin{table}[!t]
\centering
\caption{\textcolor{blue}{Overall performance based on Qwen2.5-Math-7B under three different training paradigms using DeepMath dataset \citep{he2025deepmath}.
\textbf{Bold} and \underline{underline} indicate the best and second-best results, respectively.}}
\setlength{\tabcolsep}{2.5pt}  
\renewcommand{\arraystretch}{1.3} 
\resizebox{\textwidth}{!}{%
\begin{tabular}{lccccc>{\columncolor{yellow!20}}c|ccc>{\columncolor{cyan!20}}c}
\toprule
\multirow{2}{*}{\textbf{Model}} & \multicolumn{6}{c}{\textbf{In-Distribution Performance}} & \multicolumn{4}{c}{\textbf{Out-of-Distribution Performance}} \\
\cmidrule(lr){2-7} \cmidrule(lr){8-11}
 & \textbf{AIME 24/25} & \textbf{AMC} & \textbf{MATH-500} & \textbf{Minerva} & \textbf{Olympiad} & \textbf{Avg.} & \textbf{ARC-c} & \textbf{GPQA}$^{*}$ & \textbf{MMLU-Pro} & \textbf{Avg.} \\
\midrule
\normalrow
\multicolumn{11}{c}{Unsupervised Methods Trained on \textit{8K} Samples w/o Any Labels} \\
TTRL            
   &11.6/8.4     &\underline{50.2}  &74.8  &\textbf{37.1}  &\underline{38.7}  &36.8       &74.7  &30.3  &39.8  &48.3       \\
Self-certainty            
   &11.9/10.2    &45.6  &74.4  &36.4  &37.0  &35.9       &75.9  &23.7  &36.7  &45.4       \\
Token-level Entropy
   &13.5/9.3     &43.2  &71.4  &36.0  &35.0  &34.7       &75.9  &\underline{32.8}  &39.3  &49.3       \\
Sentence-level Entropy
   &13.6/9.6     &50.1  &75.6  &\underline{36.8}  &37.0  &37.1       &72.1  &28.8  &36.9  &45.9       \\
\midrule
\rowhighlight
\multicolumn{11}{c}{Semi-supervised Methods Trained on \textit{2K} Labeled Samples \& \textit{6K} Unlabeled Samples} \\
TTRL
   &\textbf{14.1}/\underline{13.0}     &48.8  &\underline{77.8}  &32.4  &37.0  &\underline{37.2}       &\textbf{77.4}  &27.2  &40.1  &48.2       \\
Self-certainty
   &12.8/8.3     &45.2  &71.6  &29.4  &32.0  &33.2       &\textbf{77.4}  &28.3  &\underline{42.9}  &\underline{49.5}       \\
Token-level Entropy
   &\underline{13.8}/10.9     &48.6  &74.2  &33.1  &34.1  &35.8       &77.0  &30.8  &37.2  &48.3       \\
Sentence-level Entropy
   &9.6/9.9     &45.6  &73.8  &32.4  &34.5  &34.3       &76.9  &28.3 &39.8  &48.3       \\
\textbf{\textsc{TraPO} (ours)}
   &\underline{13.8}/\textbf{13.6}     &\textbf{51.4}  &\textbf{79.8}  &33.8  &\textbf{40.0}  &\textbf{38.7}       &\underline{77.2}  &\textbf{35.4}  &\textbf{43.6}  & \textbf{52.1}      \\
\midrule
\textcolor{gray}{{Fully Supervised w/ \textit{8K} Labels}}                    & \textcolor{gray}{16.0/12.1} & \textcolor{gray}{52.9} & \textcolor{gray}{78.8} 
& \textcolor{gray}{36.8} & \textcolor{gray}{42.8} & \textcolor{gray}{39.9} 
& \textcolor{gray}{77.0} & \textcolor{gray}{29.3} 
& \textcolor{gray}{42.7} & \textcolor{gray}{49.7} \\
\bottomrule
\end{tabular}%
\label{tab:deepmath}
}
\end{table}

\begin{table}[t]
\centering
\caption{\textcolor{blue}{Overall performance on nine competition-level benchmarks for Qwen-2.5-7B under different top-p settings, with fixed $\Gamma$ (0.5) and a fixed warmup length (5). Training was performed with 1K labeled and 3K unlabeled samples.}}
\label{tab:topp}
\resizebox{\textwidth}{!}{
\begin{tabular}{lcccccc|cccc}
\toprule
\textbf{Model}                      & \textbf{AIME 24/25} & \textbf{AMC} & \textbf{MATH-500} & \textbf{Minerva} & \textbf{Olympiad} & \textbf{\textit{Avg.}}   & \textbf{ARC-c} & \textbf{GPQA}$^{*}$ & \textbf{MMLU-Pro} & \textbf{\textit{Avg.}} \\
\midrule
Qwen-Base     
  & 11.5/4.9    & 31.3 & 43.6 & 7.4  & 15.6 & 19.0      & 18.2 & 11.1 & 16.9 & 15.4     \\
Qwen-Instruct
  & 12.5/10.2   & 48.5 & 80.4 & 32.7 & 41.0 & 37.6      & 70.3 & 24.7 & 34.1 & 43.0     \\
\midrule
\normalrow
\multicolumn{11}{c}{ top-p = \textit{0.1} } \\
\textsc{TraPO}
    &{17.9}/{13.8 }    &{58.7 } &{81.4}  &{38.2}  &{45.5}  &{42.6}       &{83.7}  &{37.9}  &{46.8}  &56.1 \\
\midrule
\normalrow
\multicolumn{11}{c}{ top-p = \textit{0.3} } \\
\textsc{TraPO}  &16.6/15.7   &56.0   &82.6 &35.6  &44.0   &41.7 &79.7 &34.3  &46.7 &53.6 \\
\midrule
\normalrow
\multicolumn{11}{c}{ top-p = \textit{0.5} } \\
\textsc{TraPO}  &15.9/9.5   &52.7   &79.0 &34.2  &39.9   &38.5 &73.2 &32.7  &45.6 &50.5 \\
\midrule
\normalrow
\multicolumn{11}{c}{ top-p = \textit{0.7} } \\
\textsc{TraPO}  &14.9/10.8   &53.4   &81.8 &34.9  &41.8   &39.6 &75.4 &36.9  &43.8 &52.0 \\
\midrule
\normalrow
\multicolumn{11}{c}{ top-p = \textit{1.0} } \\
\textsc{TraPO}    &14.9/10.7     &55.3  &77.8  &33.1  &{43.6}  &39.2    &72.6  &{35.4}  &42.7  & 50.2     \\
\midrule
\bottomrule
\end{tabular}}
\end{table}

\begin{table}[t]
\centering
\caption{\textcolor{blue}{Overall performance across nine competition-level benchmarks for Qwen-2.5-7B under varying $\Gamma$ values, with fixed top-p (0.1) and warmup length (5). Training was conducted with 1K labeled samples and 3K unlabeled samples.}}
\label{tab:Gamma}
\resizebox{\textwidth}{!}{
\begin{tabular}{lcccccc|cccc}
\toprule
\textbf{Model}                      & \textbf{AIME 24/25} & \textbf{AMC} & \textbf{MATH-500} & \textbf{Minerva} & \textbf{Olympiad} & \textbf{\textit{Avg.}}   & \textbf{ARC-c} & \textbf{GPQA}$^{*}$ & \textbf{MMLU-Pro} & \textbf{\textit{Avg.}} \\
\midrule
Qwen-Base      
  & 11.5/4.9    & 31.3 & 43.6 & 7.4  & 15.6 & 19.0      & 18.2 & 11.1 & 16.9 & 15.4     \\
Qwen-Instruct
  & 12.5/10.2   & 48.5 & 80.4 & 32.7 & 41.0 & 37.6      & 70.3 & 24.7 & 34.1 & 43.0     \\
\midrule
\normalrow
\multicolumn{11}{c}{ $\Gamma$ = \textit{0.1} } \\
\textsc{TraPO}  &15.7/10.9   &52.6   &81.1 &34.5  &41.3   &39.4 &74.0 &37.1  &43.2 &51.4 \\
\midrule
\normalrow
\multicolumn{11}{c}{ $\Gamma$ = \textit{0.3} } \\
\textsc{TraPO}
    &{16.5}/{12.9}    &{56.8} &{81.9}  &{37.6}  &{45.9}  &{41.9}       &{81.9}  &{38.1}  &{46.3}  &55.4 \\
\midrule
\normalrow
\multicolumn{11}{c}{ $\Gamma$ = \textit{0.5} } \\
\textsc{TraPO}
    &{17.9}/{13.8 }    &{58.7 } &{81.4}  &{38.2}  &{45.5}  &{42.6}       &{83.7}  &{37.9}  &{46.8}  &56.1 \\
\midrule
\normalrow
\multicolumn{11}{c}{ $\Gamma$ = \textit{0.7} } \\
\textsc{TraPO}  &14.3/12.7   &53.9   &79.2 &35.1  &42.6   &39.6 &80.6 &35.6  &43.7 &53.3 \\
\midrule
\normalrow
\multicolumn{11}{c}{ $\Gamma$ = \textit{1.0} } \\
\textsc{TraPO}  &14.9/13.3   &53.9   &79.7 &34.7  &42.1   &39.8 &81.3 &35.9  &43.4 &53.5 \\
\midrule
\bottomrule
\end{tabular}}
\end{table}

\begin{table}[t]
\centering
\caption{\textcolor{blue}{Overall performance across nine competition-level benchmarks for Qwen-2.5-7B under varying warmup lengths, with fixed top-p (0.1) and fixed $\Gamma$ (0.5). Training conducted with 1K labeled and 3K unlabeled samples.}}
\label{tab:warmup}
\resizebox{\textwidth}{!}{
\begin{tabular}{lcccccc|cccc}
\toprule
\textbf{Model}                      & \textbf{AIME 24/25} & \textbf{AMC} & \textbf{MATH-500} & \textbf{Minerva} & \textbf{Olympiad} & \textbf{\textit{Avg.}}   & \textbf{ARC-c} & \textbf{GPQA}$^{*}$ & \textbf{MMLU-Pro} & \textbf{\textit{Avg.}} \\
\midrule
Qwen-Base   
  & 11.5/4.9    & 31.3 & 43.6 & 7.4  & 15.6 & 19.0      & 18.2 & 11.1 & 16.9 & 15.4     \\
Qwen-Instruct
  & 12.5/10.2   & 48.5 & 80.4 & 32.7 & 41.0 & 37.6      & 70.3 & 24.7 & 34.1 & 43.0     \\
\midrule
\normalrow
\multicolumn{11}{c}{ warm-up length = \textit{2} } \\
\textsc{TraPO} &16.1/12.0  &54.9 &77.8 &34.0  &40.2 &39.2      &78.5 &33.2 &41.5 &51.1 \\
\midrule
\normalrow
\multicolumn{11}{c}{ warm-up length = \textit{3} } \\
\textsc{TraPO}  &17.4/13.6  &57.5 &80.2 &37.1  &43.8 &41.6  &81.9 &36.2 &44.8 &54.3 \\
\midrule
\normalrow
\multicolumn{11}{c}{ warm-up length = \textit{5} } \\
\textsc{TraPO}
    &{17.9}/{13.8 }    &{58.7 } &{81.4}  &{38.2}  &{45.5}  &{42.6}       &{83.7}  &{37.9}  &{46.8}  &56.1 \\
\midrule
\normalrow
\multicolumn{11}{c}{ warm-up length = \textit{8} } \\
\textsc{TraPO}  &18.2/14.1  &59.3 &82.0 &38.8  &46.1 &43.1      &84.2 &38.4 &47.3 &56.6 \\
\midrule
\normalrow
\multicolumn{11}{c}{ warm-up length = \textit{12} } \\
 \textsc{TraPO}   &{17.6}/{13.5}    &{58.1} &{80.9}  &{37.7}  &{44.9}  &{42.1}       &{83.1}  &{37.6}  &{46.1}  &55.6 \\
\midrule
\bottomrule
\end{tabular}}
\end{table}

\begin{table}[!t]
\centering
\caption{\textcolor{blue}{Overall performance of Qwen2.5-Math-7B under different training sample sizes and annotation ratios.
}}
\setlength{\tabcolsep}{2.5pt}  
\renewcommand{\arraystretch}{1.3} 
\resizebox{\textwidth}{!}{%
\begin{tabular}{lccccc>{\columncolor{yellow!20}}c|ccc>{\columncolor{cyan!20}}c}
\toprule
\multirow{2}{*}{\textbf{Model}} & \multicolumn{6}{c}{\textbf{In-Distribution Performance}} & \multicolumn{4}{c}{\textbf{Out-of-Distribution Performance}} \\
\cmidrule(lr){2-7} \cmidrule(lr){8-11}
 & \textbf{AIME 24/25} & \textbf{AMC} & \textbf{MATH-500} & \textbf{Minerva} & \textbf{Olympiad} & \textbf{Avg.} & \textbf{ARC-c} & \textbf{GPQA}$^{*}$ & \textbf{MMLU-Pro} & \textbf{Avg.} \\
\midrule
\normalrow
\multicolumn{11}{c}{Original Models} \\
Qwen-Base 
  & 11.5/4.9    & 31.3 & 43.6 & 7.4  & 15.6 & 19.0      & 18.2 & 11.1 & 16.9 & 15.4     \\
Qwen-Instruct 
  & 12.5/10.2   & 48.5 & \underline{80.4} & 32.7 & 41.0 & 37.6      & 70.3 & 24.7 & 34.1 & 43.0     \\
  \midrule
\rowhighlight
\multicolumn{11}{c}{\textsc{TraPO} Trained on Varying Sample Sizes (12.5\% Labeled)} \\
{\textsc{TraPO}} w/ \textit{1K} Samples
    & 13.5/10.1 & 52.3 & 80.7 & 39.4 & 42.2 & 39.7 & 75.2 & 24.1 & 43.5 & 47.6 \\
{\textsc{TraPO}} w/ \textit{2K} Samples
    & 15.0/11.6 & 53.3 & 81.2 & 38.9 & 44.2 & 40.7 & 82.4 & 28.7 & 45.2 & 52.1 \\
{\textsc{TraPO}} w/ \textit{4K} Samples
    & 16.1/12.9 & 56.8 & 82.3 & 36.7 & 45.4 & 41.7 & 82.1 & 33.8 & 46.7 & 54.2 \\
{\textsc{TraPO}} w/ \textit{16K} Samples
    & 21.3/16.1 & 60.9 & 84.8 & 38.2 & 43.3 & 44.1 & 82.6 & 39.5 & 46.2 & 56.1 \\
\midrule
\rowhighlight
\multicolumn{11}{c}{\textsc{TraPO} Trained on Varying Sample Sizes (25\% Labeled)
} \\
{\textsc{TraPO}} w/ \textit{1K} Samples
    &{17.1}/{12.8}    &{53.6 } &{79.4}  &{39.3}  &{41.5}  &{40.6}  &{72.7}  &{30.3}  &{42.4}  &{48.5} \\
{\textsc{TraPO}} w/ \textit{2K} Samples
    &{18.1}/{14.3}    &{55.4 } &{81.6}  &{33.1}  &{43.4}  &{41.0}  &{82.6}  &{39.4}  &{45.0}  &{55.7} \\
{\textsc{TraPO}} w/ \textit{4K} Samples
    &{17.9}/{13.8 }    &{58.7 } &{81.4}  &{38.2}  &{45.5}  &{42.6}       &{83.7}  &{37.9}  &{46.8}  &{56.1} \\
{\textsc{TraPO}} w/ \textit{16K} Samples
    &{24.3}/{17.1}    &{60.0} &{84.6}  &{39.3}  &{48.3}  &{45.6}       &{84.6}  &{43.9}  &{50.7}  &{59.7} \\
    \midrule
\rowhighlight
\multicolumn{11}{c}{\textsc{TraPO} Trained on Varying Sample Sizes (50\% Labeled)
} \\
{\textsc{TraPO}} w/ \textit{1K} Samples 
    & 14.3/10.9 & 51.7 & 81.4 & 34.2 & 42.1 & 39.1 & 78.3 & 30.1 & 45.2 & 51.2 \\
{\textsc{TraPO}} w/ \textit{2K} Samples 
    & 16.2/13.1 & 54.8 & 82.3 & 37.1 & 45.7 & 41.5 & 81.5 & 34.2 & 46.6 & 54.1 \\
{\textsc{TraPO}} w/ \textit{4K} Samples 
    & 17.3/15.7 & 59.2 & 83.9 & 39.4 & 47.3 & 43.8 & 83.7 & 36.8 & 46.6 & 55.7 \\
{\textsc{TraPO}} w/ \textit{16K} Samples
    & 24.4/18.3 & 61.5 & 84.1 & 40.8 & 46.3 & 45.9 & 84.2 & 43.7 & 49.7 & 59.2 \\
\midrule
\textcolor{gray}{{Fully Supervised w/ \textit{45K} Labels}} & \textcolor{gray}{25.1/15.3}   & \textcolor{gray}{62.0} & \textcolor{gray}{84.4} & \textcolor{gray}{39.3} & \textcolor{gray}{46.8} & \textcolor{gray}{45.5} & \textcolor{gray}{82.3} & \textcolor{gray}{40.4} & \textcolor{gray}{49.3} & \textcolor{gray}{57.3} \\
\bottomrule
\end{tabular}%
\label{tab:scaling}
}
\end{table}

\begin{table}[t]
\centering
\caption{\textcolor{blue}{Qwen-2.5-7B results on nine competition-level benchmarks using 1K labeled and 3K unlabeled samples (30\% reliable data selected by Sentence-level Entropy, Self-certainty, and TraPO)}}
\resizebox{\textwidth}{!}{
\begin{tabular}{lcccccc|cccc}
\toprule
\textbf{Model}                      & \textbf{AIME 24/25} & \textbf{AMC} & \textbf{MATH-500} & \textbf{Minerva} & \textbf{Olympiad} & \textbf{\textit{Avg.}}   & \textbf{ARC-c} & \textbf{GPQA}$^{*}$ & \textbf{MMLU-Pro} & \textbf{\textit{Avg.}} \\
\midrule
Qwen-Base     
  & 11.5/4.9    & 31.3 & 43.6 & 7.4  & 15.6 & 19.0      & 18.2 & 11.1 & 16.9 & 15.4     \\
Qwen-Instruct
  & 12.5/10.2   & 48.5 & 80.4 & 32.7 & 41.0 & 37.6      & 70.3 & 24.7 & 34.1 & 43.0     \\
\midrule
Random   &15.8/12.3   &53.5 &79.8 &34.8  &41.8   &39.7 &80.8 &35.8  &43.2 &53.3 \\
Sentence-level Entropy
    &16.3/12.5    &54.6  &80.2  &35.3  &42.4   &40.2  &81.8   &35.4  &43.7 &53.6  \\
Self-certainty
    & 15.8/13.3   &52.9  &80.7  &36.6  &43.5   &40.5  &80.4   &35.8  &42.9 & 53.0  \\
\textsc{TraPO}   &16.7/13.7   &57.1 &81.0 &37.3  &44.6   &\textbf{41.8} &83.2 &37.4  &45.9 &\textbf{55.5} \\
\midrule
\bottomrule
\end{tabular}}
\label{tab:select_method}
\end{table}

\begin{table}[t]
\centering
\caption{\textcolor{blue}{Overall performance on nine competition-level benchmarks for Qwen-2.5-7B using random selection or TraPO. Training was conducted with 1K labeled samples and 3K unlabeled samples.}}
\label{tab:random_select}
\resizebox{\textwidth}{!}{
\begin{tabular}{lcccccc|cccc}
\toprule
\textbf{Model}                      & \textbf{AIME 24/25} & \textbf{AMC} & \textbf{MATH-500} & \textbf{Minerva} & \textbf{Olympiad} & \textbf{\textit{Avg.}}   & \textbf{ARC-c} & \textbf{GPQA}$^{*}$ & \textbf{MMLU-Pro} & \textbf{\textit{Avg.}} \\
\midrule
Qwen-Base     
  & 11.5/4.9    & 31.3 & 43.6 & 7.4  & 15.6 & 19.0      & 18.2 & 11.1 & 16.9 & 15.4     \\
Qwen-Instruct
  & 12.5/10.2   & 48.5 & 80.4 & 32.7 & 41.0 & 37.6      & 70.3 & 24.7 & 34.1 & 43.0     \\
\midrule
\textcolor{gray}{{No Selection}} & \textcolor{gray}{14.2/13.5} & \textcolor{gray}{52.6} & \textcolor{gray}{80.2} 
& \textcolor{gray}{34.9} & \textcolor{gray}{40.9} & \textcolor{gray}{{39.4}} 
& \textcolor{gray}{{76.2}} & \textcolor{gray}{{36.4}} 
& \textcolor{gray}{43.6} & \textcolor{gray}{{52.1}} \\
\midrule
\normalrow
\multicolumn{11}{c}{ \textit{10\% Selected} } \\
Random   &14.9/13.3   &53.9   &79.7 &34.7  &42.1   &39.8 &80.3 &34.9  &42.4 &52.5 \\
\textsc{TraPO} &15.8/13.5   &55.0 &80.3 &35.8  &43.2   &40.7 &81.5 &35.8  &43.5 &53.6 \\
\midrule
\normalrow
\multicolumn{11}{c}{ \textit{30\% Selected} } \\
Random   &15.8/12.3   &53.5 &79.8 &34.8  &41.8   &39.7 &80.8 &35.8  &43.2 &53.3 \\
\textsc{TraPO}   &16.7/13.7   &57.1 &81.0 &37.3  &44.6   &41.8 &83.2 &37.4  &45.9 &55.5 \\
\midrule
\normalrow
\multicolumn{11}{c}{ \textit{50\% Selected} } \\
Random  &14.5/12.8   &51.5 &77.2 &31.5  &40.0   &37.9 &77.8 &34.8  &41.8 &51.5 \\
\textsc{TraPO}   &15.1/13.6   &54.2 &80.5 &35.2  &42.5   &40.1 &82.0 &36.2  &43.8 &54.0 \\
\midrule
\normalrow
\multicolumn{11}{c}{ \textit{70\% Selected} } \\
Random  &14.6/13.0   &52.4 &78.5 &34.0  &40.8   &38.4 &79.2 &35.2  &42.5 &52.3 \\
\textsc{TraPO}   & 14.9/13.5   & 53.8 & 79.9 & 34.9  & 41.9   & 39.8 & 81.0 & 35.4  & 42.8 & 53.1 \\
\midrule
\textcolor{gray}{{All Selection}} &\textcolor{gray}{14.9/10.7}     &\textcolor{gray}{55.3}  &\textcolor{gray}{77.8}  &\textcolor{gray}{33.1}  &\textcolor{gray}{43.6}  &\textcolor{gray}{39.2}    &\textcolor{gray}{72.6}  &\textcolor{gray}{35.4}  &\textcolor{gray}{42.7}  & \textcolor{gray}{50.2}     \\
\midrule
\bottomrule
\end{tabular}}
\end{table}

\begin{table}[t]
\centering
\caption{\textcolor{blue}{Overall performance across nine competition-level benchmarks for Qwen-2.5-7B with varying ratios ($\sigma_M$) of unlabeled samples. Training uses 1K labeled samples and 3K unlabeled samples.}}
\label{tab:sigma_M}
\resizebox{\textwidth}{!}{
\begin{tabular}{lcccccc|cccc}
\toprule
\textbf{Model}                      & \textbf{AIME 24/25} & \textbf{AMC} & \textbf{MATH-500} & \textbf{Minerva} & \textbf{Olympiad} & \textbf{\textit{Avg.}}   & \textbf{ARC-c} & \textbf{GPQA}$^{*}$ & \textbf{MMLU-Pro} & \textbf{\textit{Avg.}} \\
\midrule
Qwen-Base   
  & 11.5/4.9    & 31.3 & 43.6 & 7.4  & 15.6 & 19.0      & 18.2 & 11.1 & 16.9 & 15.4     \\
Qwen-Instruct
  & 12.5/10.2   & 48.5 & 80.4 & 32.7 & 41.0 & 37.6      & 70.3 & 24.7 & 34.1 & 43.0     \\
  \midrule
\textcolor{gray}{$\sigma_M$ = \textit{0.00}}                    & \textcolor{gray}{14.2/13.5} & \textcolor{gray}{52.6} & \textcolor{gray}{80.2} 
& \textcolor{gray}{34.9} & \textcolor{gray}{40.9} & \textcolor{gray}{{39.4}} 
& \textcolor{gray}{{76.2}} & \textcolor{gray}{{36.4}} 
& \textcolor{gray}{43.6} & \textcolor{gray}{{52.1}} \\
\midrule
\normalrow
\multicolumn{11}{c}{ $\sigma_M$ = \textit{0.25} } \\
Token-level Entropy   &16.7/13.6   &54.6   &81.4  &34.3  &41.3   &40.4  &79.6 &35.9 &44.6 & 53.4\\
\textsc{TraPO}  &14.6/13.6   &55.4   &79.8 &35.7  &42.1   &40.2 &81.9 &35.4  &44.0 &53.8 \\
\midrule
\normalrow
\multicolumn{11}{c}{ $\sigma_M$ = \textit{0.50} } \\
Token-level Entropy   &15.0/12.4   &51.6   &79.8 &32.7 &39.9  &38.6   &77.3 &34.8  &42.9 &51.7 \\
\textsc{TraPO} &{16.8}/{13.6} &{56.2} &{80.5} &{38.9} &{43.6}   &41.6 &82.8 &36.6  &44.9 & 54.8\\
\midrule
\normalrow
\multicolumn{11}{c}{ $\sigma_M$ = \textit{0.75} } \\
Token-level Entropy   &16.2/13.4 &52.1 &79.0 &33.8 &39.1   &38.9 &77.6 &29.8  &41.3 &49.6 \\
\textsc{TraPO}  &{17.4}/{12.9} &{57.2} &{80.8} &{37.5} &{44.3} &41.7 &82.5 &37.2  &45.9 &55.2 \\
\midrule
\normalrow
\multicolumn{11}{c}{ $\sigma_M$ = \textit{1.00} } \\
Token-level Entropy &18.2/11.9     &53.4  &80.2  & 34.6 &41.9  &{40.0}  &72.9  &32.3  &44.0  &49.7      \\ 
\textsc{TraPO}
    &{17.9}/{13.8 }    &{58.7 } &{81.4}  &{38.2}  &{45.5}  &{42.6}       &{83.7}  &{37.9}  &{46.8}  &56.1 \\
\midrule
\bottomrule
\end{tabular}}
\end{table}

\begin{table}[t]
\centering
\caption{\textcolor{blue}{Overall performance across nine competition-level benchmarks for Qwen-2.5-7B, averaged over three runs. Training was performed with 1K labeled samples and 3K unlabeled samples.}}
\label{tab:three_run}
\resizebox{\textwidth}{!}{
\begin{tabular}{lcccccc|cccc}
\toprule
\textbf{Model}                      & \textbf{AIME 24/25} & \textbf{AMC} & \textbf{MATH-500} & \textbf{Minerva} & \textbf{Olympiad} & \textbf{\textit{Avg.}}   & \textbf{ARC-c} & \textbf{GPQA}$^{*}$ & \textbf{MMLU-Pro} & \textbf{\textit{Avg.}} \\
\midrule
Qwen-Base     
  & 11.5/4.9    & 31.3 & 43.6 & 7.4  & 15.6 & 19.0      & 18.2 & 11.1 & 16.9 & 15.4     \\
Qwen-Instruct
  & 12.5/10.2   & 48.5 & 80.4 & 32.7 & 41.0 & 37.6      & 70.3 & 24.7 & 34.1 & 43.0     \\
\midrule
\textsc{TraPO}
    & $18.2 \pm 0.3$ / $13.6 \pm 0.2$ & $59.3 \pm 0.5$ & $81.9 \pm 0.4$ & $37.9 \pm 0.4$ & $45.8 \pm 0.5$ & $42.8 \pm 0.4$ & $83.9 \pm 0.6$ & $37.8 \pm 0.6$ & $47.5 \pm 0.5$ & $56.4 \pm 0.5$ \\
\bottomrule
\end{tabular}}
\end{table}

\begin{table}[t]
\centering
\caption{\textcolor{blue}{Qwen-2.5-7B results on nine competition-level benchmarks using 1K labeled and 3K unlabeled samples, with average trajectory matching or maximum trajectory matching.}}
\resizebox{\textwidth}{!}{
\begin{tabular}{lcccccc|cccc}
\toprule
\textbf{Model}                      & \textbf{AIME 24/25} & \textbf{AMC} & \textbf{MATH-500} & \textbf{Minerva} & \textbf{Olympiad} & \textbf{\textit{Avg.}}   & \textbf{ARC-c} & \textbf{GPQA}$^{*}$ & \textbf{MMLU-Pro} & \textbf{\textit{Avg.}} \\
\midrule
Qwen-Base     
  & 11.5/4.9    & 31.3 & 43.6 & 7.4  & 15.6 & 19.0      & 18.2 & 11.1 & 16.9 & 15.4     \\
Qwen-Instruct
  & 12.5/10.2   & 48.5 & 80.4 & 32.7 & 41.0 & 37.6      & 70.3 & 24.7 & 34.1 & 43.0     \\
\midrule
\textsc{TraPO-Max}    &16.3/9.9   &52.7 &80.8 &35.6  &41.3 &39.4 &81.6 &33.2  &42.6 &52.5 \\
\textsc{TraPO-Mean} 
    &{17.9}/{13.8 }    &{58.7 } &{81.4}  &{38.2}  &{45.5}  &{42.6}       &{83.7}  &{37.9}  &{46.8}  &56.1 \\
\midrule
\bottomrule
\end{tabular}}
\label{tab:MeanorMax}
\end{table}

\end{document}